\title[Fast and Furious Symmetric Learning in Zero-Sum Games]
{Fast and Furious Symmetric Learning in Zero-Sum Games: \\
  Gradient Descent as Fictitious Play}
\newcommand{\blackdiam}{\rotatebox[origin=c]{45}{$\blacksquare$}}
\begin{document}

\maketitle

\vspace*{-2em}
\begin{abstract}

This paper investigates the sublinear regret guarantees
of two \textit{non}-no-regret algorithms in zero-sum games:
Fictitious Play, and Online Gradient Descent with
\textit{constant} stepsizes.
In general adversarial online learning settings, both algorithms
may exhibit instability and linear regret 
due to no regularization (Fictitious Play) 
or small amounts of regularization (Gradient Descent).
However, their ability to obtain tighter regret bounds in
two-player zero-sum games is less understood.
In this work, we obtain strong new regret guarantees for both
algorithms on a class of symmetric zero-sum games 
that generalize the classic three-strategy Rock-Paper-Scissors
to a weighted, $n$-dimensional regime.
Under \textit{symmetric initializations} of the players' strategies,
we prove that Fictitious Play with \textit{any tiebreaking rule} 
has $O(\sqrt{T})$ regret, establishing a new class of games 
for which  Karlin's Fictitious Play conjecture holds.
Moreover, by leveraging a connection between the geometry of 
the iterates of Fictitious Play and Gradient Descent 
in the dual space of payoff vectors, 
we prove that Gradient Descent, for \textit{almost all} 
symmetric initializations, obtains a similar $O(\sqrt{T})$ regret 
bound when its stepsize is a \textit{sufficiently large} constant.
For Gradient Descent, this establishes the first
``fast and furious'' behavior  (i.e., sublinear regret
\textit{without} time-vanishing stepsizes) 
for zero-sum games larger than $2\times2$. 

\begin{figure}[h!]
\centering
\subfigure[]{\label{fig:FPdual}
    \includegraphics[width=0.25\textwidth]{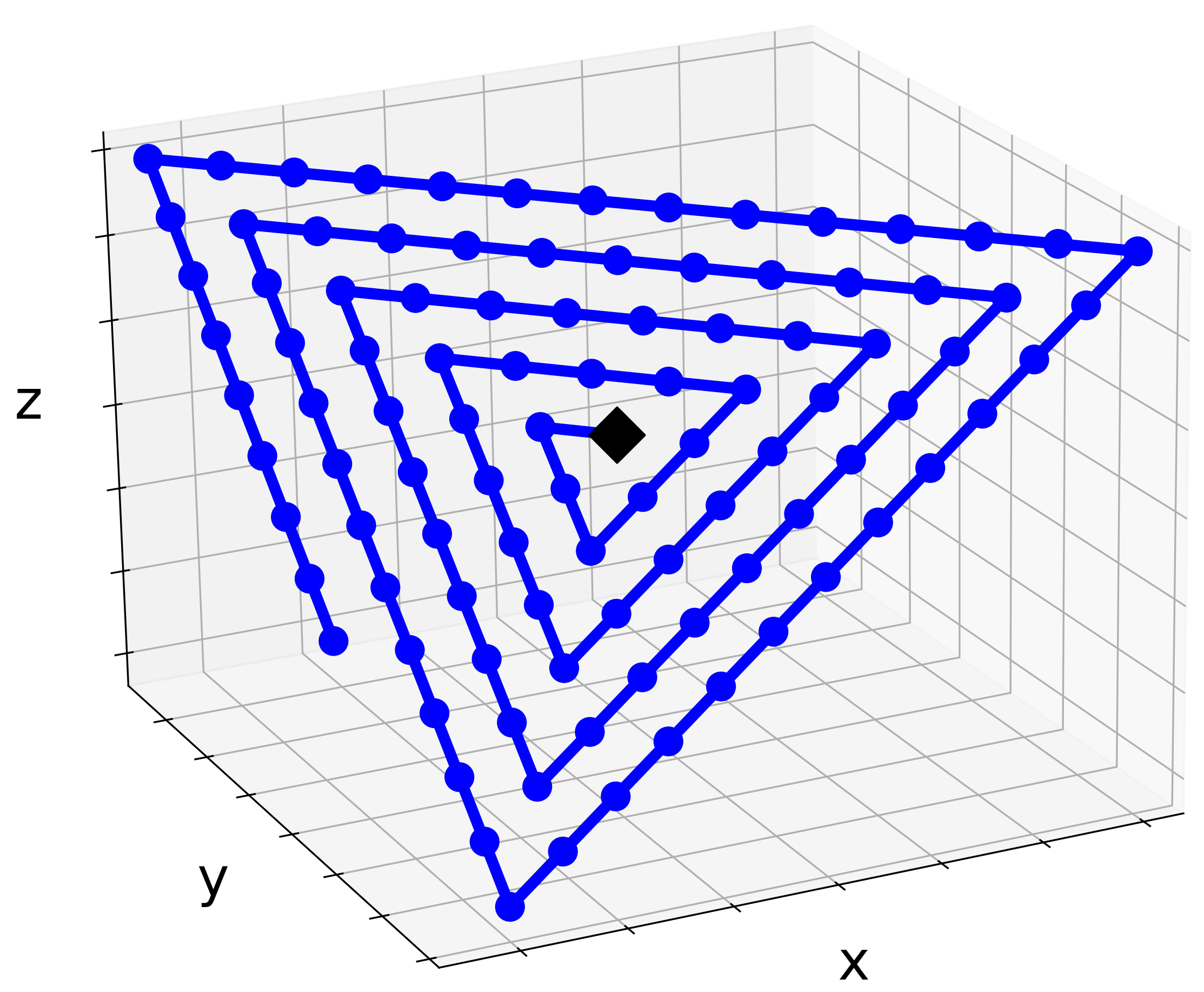}}
\subfigure[]{\label{fig:GDdual}
    \includegraphics[width=0.25\textwidth]{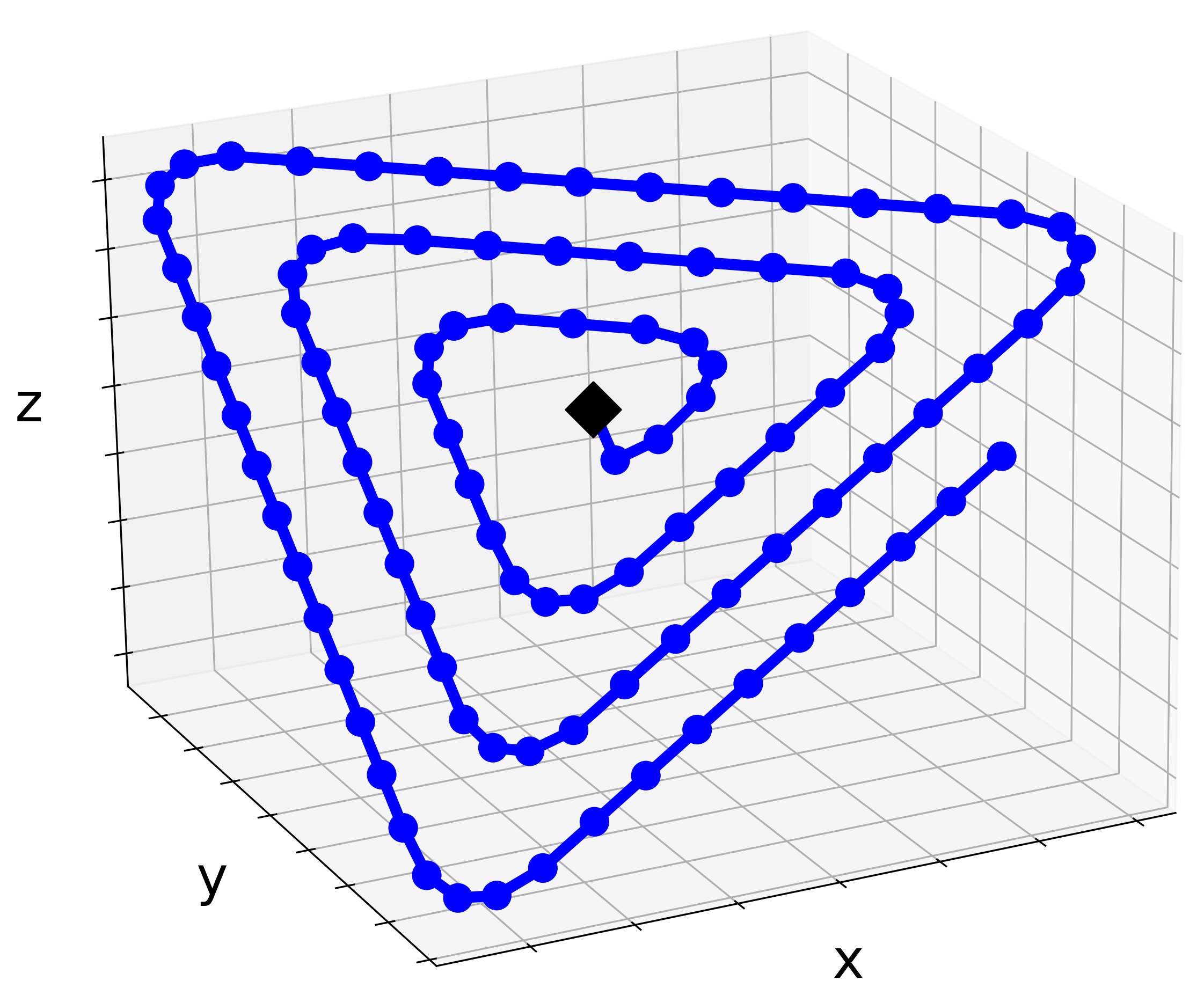}}
\subfigure[]{\label{fig:GDPrimal}
    \includegraphics[width=0.25\textwidth]{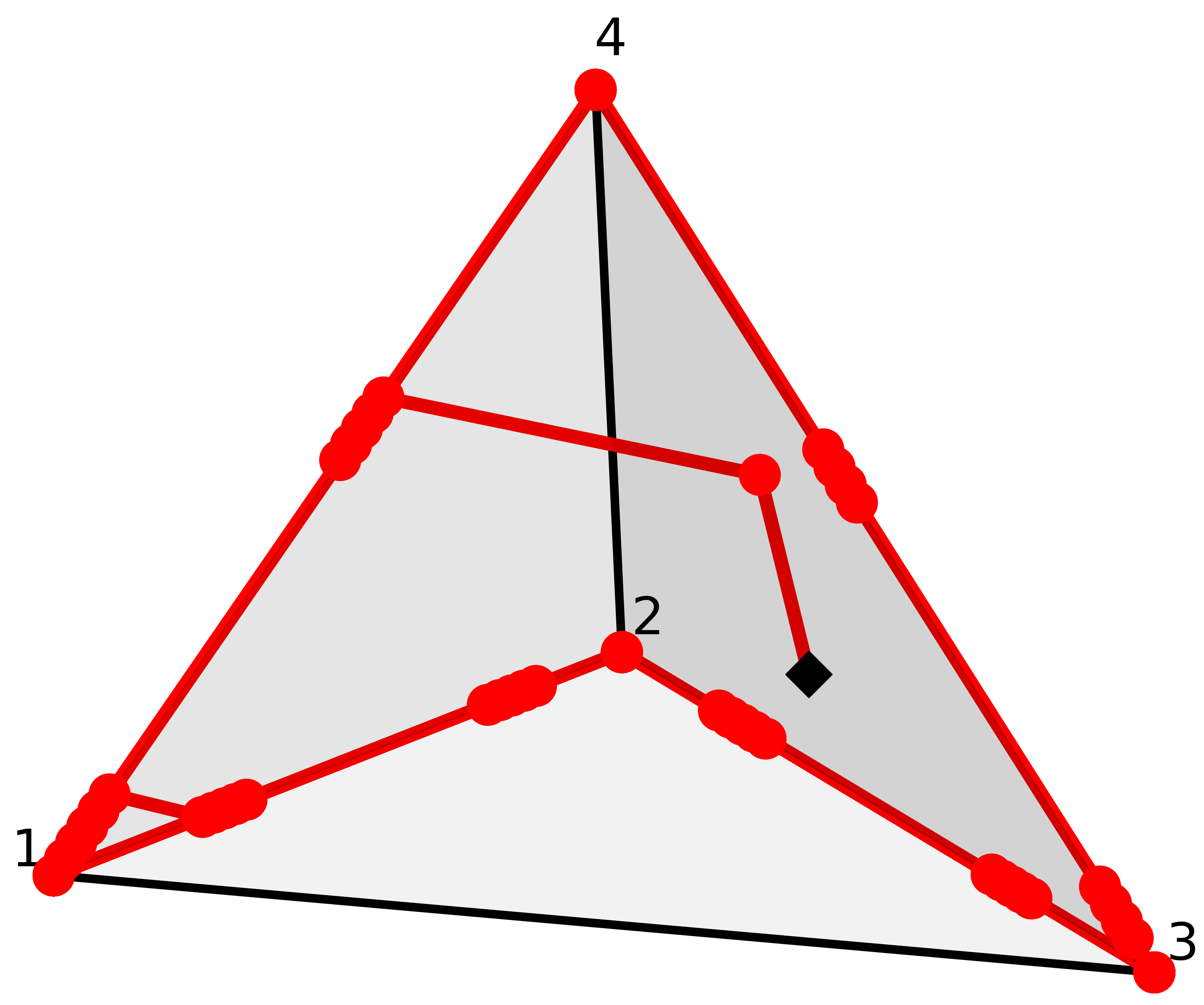}}
\caption{
    \footnotesize
    In $n=3$ Rock-Paper-Scissors for $200$ iterations
    and initialized at  $x^0=[1,0,0]$:
    (\emph{a}) the dual iterates of Fictitious Play, and
    (\emph{b}) the dual iterates of Gradient Descent with
    stepsize $\eta=0.5$.
    In $n=4$ Rock-Paper-Scissors initialized at
    $x^0 = [0.05, 0.35, 0.39, 0.21]$:
    (\emph{c}) the primal iterates of Gradient Descent
    using stepsize $\eta=1$.
    (\scalebox{0.6}{$\blackdiam$}) denotes the initial
    iterate of the dynamics.
    }
    \label{fig:abstract}
\end{figure}

\end{abstract}

\begin{keywords}%
  Fictitious Play, Gradient Descent, 
  Online Learning in Zero-Sum Games
\end{keywords}

\newpage
\section{Introduction}
\label{sec:intro}

In adversarial or adaptive online learning settings,
regularization is the key ingredient for obtaining optimal 
$O(\sqrt{T})$  regret bounds over $T$ iterations~\citep{cesa2006prediction,hazan2016introduction}.
Indeed, \textit{un}-regularized algorithms like Follow-The-Leader 
(FTL) are known to be highly-sensitive to even slight oscillations in 
reward sequences, leading to regret scaling linearly in $T$ in the 
worst case. Using online Gradient Descent (GD), an instantiation of 
the Follow-The-Regularized-Leader (FTRL) family, this sensitivity
to oscillating rewards is circumvented via regularization,
the magnitude of which is controlled 
via a \textit{stepsize} parameter $\eta > 0$:
smaller settings of $\eta$ correspond to more regularization,
and the iterates of the algorithm become more stable. 
Using time-horizon-dependent settings of $\eta \approx 1/\sqrt{T}$, 
it is well known that Gradient Descent and other instantiations 
of FTRL like Multiplicative Weights Update 
obtain optimal $O(\sqrt{T})$ regret bounds for adversarial online learning
(see e.g.~\cite{shalev2012online}). 

\smallskip

In the setting of two-player zero-sum games, 
a well-known connection with online learning 
establishes that the time-averaged iterates
of two players using \textit{no-regret algorithms}
(e.g., algorithms that guarantee \textit{sublinear} regret in any 
adaptive setting) converge to 
a Nash Equilibrium of the underlying game
\citep{hannan1957approximation,freund1999adaptive}. 
While using Gradient Descent or any FTRL algorithm with 
a time-horizon-dependent or time-decaying stepsize
guarantees such convergence, in the context of modeling strategic
interactions between agents, selecting stepsizes depending 
inversely on $T$ is somewhat unrealistic,
and it is natural to question whether 
algorithms for learning in games require
the same amount of regularization as 
in worst-case, adversarial settings. 
For example, the Fictitious Play (FP) algorithm 
of~\cite{brown1951iterative} is the result of both players
using the \textit{un}regularized FTL algorithm.
While FTL is \textit{not} a no-regret algorithm
in general, \cite{robinson1951iterative}
proved that the sum of the players' regrets \textit{is}
in fact sublinear in the zero-sum game setting, 
although scaling like $O(T^{1-1/(2n-2)})$ for $n\times n$ games. 
Later, \cite{karlin1959mathematical}
conjectured that in all zero-sum games, 
Fictitious Play achieves an even stronger 
regret guarantee of only $O(\sqrt{T})$. 

\smallskip

Karlin's conjecture was seemingly put to rest 
by~\cite{daskalakis2014counter}, who showed that even for
the $n$-dimensional identity payoff matrix,
when using a specific \textit{adversarial} tiebreaking rule,
Fictitious Play has regret scaling like $\Omega(T^{1-1/n})$. 
Moreover, for FTRL algorithms like Gradient Descent in the
\textit{under}-regularized regime of \textit{constant stepsizes}
(e.g., with no dependence on $T$), 
\cite{bailey2018multiplicative} showed that
the day-to-day iterates of these
algorithms are \textit{repelled} from interior 
Nash equilibria and converge towards the boundary of the simplex. 
Using FTRL algorithms with constant stepsizes was 
also shown to exhibit formally chaotic behavior, both in  
congestion/coordination games~\citep{palaiopanos2017multiplicative,
  chotibut2020route,chotibut2021family,bielawski2021follow}
as well as (symmetric) zero-sum games~\citep{cheung2019vortices,
  cheung2020chaos,cheung2022evolution}. 
In fact, in congestion/coordination games,
chaotic as well as more generally unstable behavior
(i.e., limit cycles) can result in provably
non-vanishing regret~\citep{blum2006routing,chotibut2020route}.
Such results seemed to indicate a much narrower path for success for 
Fictitious Play and other under-regularized, \textit{non}-no-regret 
learning algorithms to  achieve strong regret guarantees in
zero-sum games. 

\smallskip

On the other hand, several newer works have painted
a more optimistic picture: for example,
\cite{bailey2019fast} showed that
the day-to-day \textit{non}-equilibration behavior of 
Gradient Descent can actually be leveraged 
as a \textit{tool} for controlling
the algorithm's total regret. 
For $2 \times 2$ zero-sum games with a unique interior equilibrium, 
they proved that running Gradient Descent with
constant stepsizes
still leads to $O(\sqrt{T})$ regret,
exhibiting so-called ``fast and furious'' behavior
(i.e., regret-minimization 
 with constant stepsizes instead of vanishing/decreasing ones). 
However, their result left open the question
of establishing similar regret
guarantees for classes of larger, high-dimensional zero-sum games. 
\cite{abernethy2021fast} have also provided
a more nuanced view of the negative result
of~\cite{daskalakis2014counter} for Fictitious Play: they proved
for all $n\times n$ \textit{diagonal} payoff matrices that
Fictitious Play with a fixed \textit{lexicographical} tiebreaking rule indeed obtains $O(\sqrt{T})$
regret, thus proving (the weak version 
of) Karlin's Conjecture for this class. 

\smallskip

However, little progress has since been made on obtaining
similar, tighter regret bounds for either Fictitious Play 
or Gradient Descent in the constant stepsize regime.
Despite the increasing interest in the \textit{optimistic variants}
of FTRL algorithms (which can be used to obtain last-iterate 
convergence in zero-sum games~\citep{rakhlin2013optimization,syrgkanis2015fast,cai2024fast}),
the behavior and guarantees of their standard, non-optimistic counterparts 
still lacks a more fine-grained understanding,
even for simplified settings such 
as \textit{symmetric} zero-sum games
under \textit{symmetric learning} (i.e., 
when the players use the same initialization).
Can Fictitious Play and FTRL with constant stepsizes
minimize regret in such settings,
despite their un(der)-regularization?

\paragraph{Our Contributions.}
In this paper, we make new progress towards obtaining 
tighter regret bounds
for both Fictitious Play and Gradient Descent in zero-sum games. 
We study these dynamics under \textit{symmetric learning} on 
a large class of \textit{high-dimensional rock-paper-scissors (RPS)}
payoff matrices (Definition~\ref{def:rps}).
These games  generalize the canonical three-strategy Rock-Paper-Scissors
game -- perhaps the most well-studied symmetric zero-sum game -- 
to a weighted, $n$-dimensional regime. 
For \textit{all $n$-dimensional RPS matrices}, 
we prove that under \textit{symmetric learning}:

\begin{itemize}[
    rightmargin=2em
]
\item 
  (Theorem~\ref{thm:fp-rps}):
  Fictitious Play from any initialization obtains 
  worst-case $O(\sqrt{T})$ regret
  using \textit{any} arbitrary tie-breaking rule. 
\item 
  (Theorem~\ref{thm:gd-large-stepsize}):
  Gradient Descent, 
  from \textit{almost all} initializations
  and with \textit{sufficiently large constant stepsizes},
  obtains $O(\sqrt{T})$ regret. 
\end{itemize} 

Our results provide new evidence 
on the robustness of these \textit{non}-no-regret algorithms 
in obtaining sublinear regret in zero-sum games 
(and as a consequence, fast time-averaged convergence to Nash equilibria).
For Fictitious Play, Theorem~\ref{thm:fp-rps}
establishes a new class of high-dimensional
zero-sum games for which Karlin's Conjecture holds,
and in contrast to  \cite{abernethy2021fast} 
it does not rely on using a specific tiebreaking rule. 
For Gradient Descent with constant stepsizes,
Theorem~\ref{thm:gd-large-stepsize} establishes
the first class of \textit{large} zero-sum games 
(beyond the $2\times2$ case studied by~\cite{bailey2019fast})
with provable $O(\sqrt{T})$ regret.

\smallskip

Our proof techniques build off of recent work identifying
the Hamiltonian structure of \textit{continuous}-\textit{time} FTRL
dynamics~\citep{mertikopoulos2018cycles,
    bailey2019multi, 
    wibisono2022alternating}. 
We show in the dual space of payoff vectors
that the iterates of both \textit{discrete-time} Fictitious Play 
and Gradient Descent can be viewed as a \textit{skew-gradient descent}
with respect to a corresponding \textit{energy function},
which is closely related to regret. 
Using this geometric perspective, 
we establish a \textit{cycling} 
property in the primal space that is shared by
the iterates of both algorithms on RPS matrices.
In turn, this leads to a regularity in energy growth 
that allows us to obtain strong sublinear regret guarantees.

\smallskip

Somewhat surprisingly, we show for Gradient Descent 
that a \textit{sufficiently large}
constant stepsize is the key driver for establishing
this regularity.
This comes in stark contrast to much of the
conventional wisdom in online learning
(where small or even vanishing stepsizes
are believed necessary for obtaining low regret),
as well as in games settings 
(where numerous results suggest that
using large, constant stepsizes can lead to
unpredictable, chaotic behavior). 
To the contrary, we show that large stepsizes 
are precisely the catalyst for proving the 
desirable ``fast and furious'' behavior,
and they allow for viewing Gradient Descent 
and Fictitious Play from a shared perspective.


\section{Preliminaries}
\label{sec:prelims}

Let $[n] := \{1,\dots,n\}$.
Let $\Delta_n = \{x = (x_1,\dots,x_n) \in \R^n : \sum_{i\in[n]} x_i = 1, ~ x_i \ge 0 ~ \forall ~ i \in [n]\}$ denote the probability simplex in $\R^n$.
For $x \in \Delta_n$, let 
$\supp(x) = \{i \in [n]: x_i > 0\}$
denote the support of $x$.
If $\supp(x) = [n]$, then we say $x$ is \textit{interior}. 
For $x,y \in \R^n$, we denote the $\ell_2$-inner product by $\langle x,y \rangle = x^\top y = \sum_{i=1}^n x_i y_i$, and the $\ell_2$-norm by $\|x\|_2 = \sqrt{\langle x,x \rangle}$.

\subsection{Online Learning in Two-Player Zero-Sum Games}
\label{sec:prelims:online}
We consider a setting where two players (Player 1 and Player 2)
repeatedly play a zero-sum game with payoff matrix $A \in \R^{m \times n}$
over a sequence of $T$ rounds. 
At each round $t = 0, \dots, T$, the players choose mixed strategies 
$x^t_1 \in \Delta_m$ and $x^t_2 \in \Delta_n$ belonging to 
the $m$ and $n$-dimensional simplices.
Players 1 and 2 then obtain expected payoffs
$\langle x^t_1, A x^t_2\rangle$
and $-\langle x^t_2, A^\top x^t_1\rangle$,
and they observe the vector feedback $Ax^t_2$ and $-A^\top x^t_1$,
respectively.

The goal of each player is to maximize
their cumulative expected payoffs.
In this online learning setting, we quantify Player $i$'s
performance by measuring its \textit{regret} $\reg_i(T)$,
which is the difference between its cumulative expected payoff 
and the cumulative payoff of the best fixed strategy in hindsight
after $T$ rounds. 
More precisely, we define the regrets of Player $1$ and Player $2$ to be:
\begin{align*}
  \reg_1(T) &:=
  \max_{x_1 \in \Delta_m}\;
  \sum_{t=0}^T \langle x_1, Ax^t_2 \rangle 
  - \sum_{t=0}^T \langle x^t_1, Ax^t_2 \rangle 
  \;\\
  \reg_2(T) &:=
  \sum_{t=0}^T \langle x^t_1, Ax^t_2 \rangle 
  -\min_{x_2 \in \Delta_n}\;
  \sum_{t=0}^T \langle x_2, A^\top x^t_1 \rangle 
  \;.
\end{align*}
We define the \textit{total regret} of the two players to be
$\reg(T) := \reg_1(T) + \reg_2(T)$.

A well-known connection between online learning and 
game theory establishes that bounds on the growth rate 
of $\reg(T)$ implies the convergence of the players'
time-averaged mixed strategies to a \textit{Nash Equilibrium} (NE),
the canonical solution concept when 
two players play $A$ for a single round. 
Recall that an NE for $A$ is a pair
$(x_1^*, x_2^*) \in \Delta_{m} \times \Delta_n$
satisfying the inequalities
\begin{equation*}
\max_{x_1 \in \Delta_m}\; \langle x_1, A x^*_2 \rangle
\le
\langle x^*_1, A x^*_2 \rangle
\le
\min_{x_2 \in \Delta_n}\; \langle x^*_1, A x_2 \rangle \;.
\end{equation*}
Due to von Neumann's minimax theorem~\citep{v1928theorie},
every zero-sum game has at least one NE. 
Then the following relationship holds 
(see Appendix \ref{app:prelims:nashregret} for a proof):

\begin{restatable}{prop}{regretnash}
\label{prop:regret-nash}
Let $\tilde x_1^T := (\sum_{t=0}^T x^t_1)/T$
and $\tilde x_2^T := (\sum_{t=0}^T x^t_2)/T$
denote the time-averaged strategies of Players 1 and 2, respectively.
If $\reg(T) = o(T)$, then $(\tilde x^T_1, \tilde x^T_2)$
converges to an NE $(x^*_1, x^*_2)$
in duality gap at a rate of $\reg(T)/T = o(1)$.
\end{restatable}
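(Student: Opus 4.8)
The plan is to show that, up to normalization, the total regret $\reg(T)$ \emph{equals} the duality gap of the time-averaged strategy pair, so that any sublinear bound on $\reg(T)$ translates directly into a vanishing duality gap at rate $\reg(T)/T$. For a pair $(\bar x_1, \bar x_2) \in \Delta_m \times \Delta_n$, write its duality gap as $\mathrm{DG}(\bar x_1, \bar x_2) := \max_{x_1 \in \Delta_m}\langle x_1, A\bar x_2\rangle - \min_{x_2 \in \Delta_n}\langle \bar x_1, A x_2\rangle$. By von Neumann's minimax theorem, letting $v^*$ denote the value of the game, the first term is $\ge v^*$ for every $\bar x_2$ and the second is $\le v^*$ for every $\bar x_1$, so $\mathrm{DG} \ge 0$; moreover $\mathrm{DG}(\bar x_1, \bar x_2) = 0$ forces $\max_{x_1}\langle x_1, A\bar x_2\rangle = \langle \bar x_1, A\bar x_2\rangle = \min_{x_2}\langle\bar x_1, A x_2\rangle$, which are precisely the NE inequalities. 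Thus $\mathrm{DG}$ is a nonnegative, continuous function on $\Delta_m \times \Delta_n$ vanishing exactly on the set of Nash equilibria.

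Next I would expand the two regret terms using bilinearity of $(x_1, x_2) \mapsto \langle x_1, A x_2\rangle = \langle x_2, A^\top x_1\rangle$, pulling each sum over $t$ onto the fixed argument. Since $\sum_{t=0}^T x_1^t = T\,\tilde x_1^T$ and $\sum_{t=0}^T x_2^t = T\,\tilde x_2^T$, this gives
\[
\reg_1(T) = T\max_{x_1 \in \Delta_m}\langle x_1, A\tilde x_2^T\rangle - \sum_{t=0}^T \langle x_1^t, A x_2^t\rangle, \qquad \reg_2(T) = \sum_{t=0}^T \langle x_1^t, A x_2^t\rangle - T\min_{x_2 \in \Delta_n}\langle \tilde x_1^T, A x_2\rangle .
\]
Adding the two, the realized-payoff terms $\sum_{t=0}^T\langle x_1^t, A x_2^t\rangle$ cancel exactly (this is where it matters that both players evaluate the \emph{same} bilinear form, one maximizing and one minimizing), leaving
\[
\reg(T) = T\Bigl(\max_{x_1 \in \Delta_m}\langle x_1, A\tilde x_2^T\rangle - \min_{x_2 \in \Delta_n}\langle \tilde x_1^T, A x_2\rangle\Bigr) = T \cdot \mathrm{DG}(\tilde x_1^T, \tilde x_2^T).
\]
The only bookkeeping subtlety is the mismatch between the $T{+}1$ summands and the normalization by $T$ in the definition of $\tilde x_i^T$; this contributes only a lower-order correction and is harmless asymptotically (alternatively, normalize by $T{+}1$ throughout).

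Dividing through, $\mathrm{DG}(\tilde x_1^T, \tilde x_2^T) = \reg(T)/T$, which is $o(1)$ as soon as $\reg(T) = o(T)$ --- exactly the claimed rate. To extract a genuine limiting equilibrium, I would invoke compactness: every subsequence of $(\tilde x_1^T, \tilde x_2^T)$ has a further subsequence converging to some $(x_1^*, x_2^*) \in \Delta_m \times \Delta_n$, and by continuity of $\mathrm{DG}$ together with $\mathrm{DG}(\tilde x_1^T, \tilde x_2^T) \to 0$ we get $\mathrm{DG}(x_1^*, x_2^*) = 0$, i.e.\ $(x_1^*, x_2^*)$ is an NE. I do not expect a real obstacle in this argument, which is classical; the two places to be careful are the exact cancellation of the cross terms in the sum $\reg_1(T) + \reg_2(T)$ and, if one wants an actual NE rather than just a vanishing duality gap, the compactness/continuity step above.
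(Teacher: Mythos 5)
Your proposal is correct and follows essentially the same route as the paper's proof: sum the two regrets, cancel the realized-payoff terms by bilinearity, and identify $\reg(T)/T$ with the duality gap of the time-averaged strategies, which is nonnegative and vanishes exactly at Nash equilibria. The only additions beyond the paper's argument are the explicit compactness/continuity step to extract a limiting equilibrium and the remark about the $T$ versus $T{+}1$ normalization, both harmless refinements of the same idea.
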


\subsection{Symmetric Zero-Sum Games and Symmetric Learning}

A zero-sum game with payoff matrix $A \in \R^{n\times n}$
is called \textit{symmetric} if
$A = -A^\top$ (i.e., $A$ is a skew-symmetric matrix). 
In this paper, we consider such
symmetric zero-sum games under a \textit{symmetric initialization}
(i.e., when $x^0_1 = x^0_2$),
and when the players use the \textit{same algorithm}
to update their strategies. 
It follows that $x^t_1 = x^t_2$ for all $t = 0, \dots, T$,
and so we call this setting \textit{symmetric learning}.
Letting $x^t$ denote the common strategy
at time $t$, the total regret $\reg(T)$ can be written~as
\begin{equation}
  \reg(T)
  \;:=\; 2 \cdot \max_{x \in \Delta_n}\;
  \sum_{t=0}^T
  \langle x,  Ax^t \rangle \;.
  \label{eq:sym-regret}
\end{equation}
Letting $\tilde x^T = (\sum_{t=0}^T x^t)/T$,
it follows by Proposition~\ref{prop:regret-nash}
that the strategy profile 
$(\tilde x^T, \tilde x^T)$ converges to 
an NE of $A$ at a rate of $\reg(T)/T$. 
In such cases where an NE $(x^*_1, x^*_2) = (x^*, x^*)$
is symmetric, we simply say that $x^*$ is 
an NE of $A$. 
Nash equilibria for symmetric zero-sum games
have the following property (see Appendix~\ref{app:prelims:interiornash}
for a proof):

\begin{restatable}{prop}{symnash}
    \label{prop:sym-nash}
    Let $A = -A^\top$ be a symmetric zero-sum game,
    and let $x^*$ be an NE for $A$. 
    Then $Ax^* = 0$ (where $0 \in \R^n$
    is the all-zeros vector). 
\end{restatable}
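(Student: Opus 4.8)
The plan is to use skew-symmetry to pin the value of the game to zero, and then read the claim off the two defining inequalities of a Nash equilibrium. The fact I would reach for first is that $\langle y, Ay\rangle = 0$ for every $y\in\R^n$: since $A=-A^\top$, the scalar $y^\top A y$ equals its own transpose $y^\top A^\top y = -y^\top A y$, hence it vanishes. Applying this to $y=x^*$ gives $\langle x^*, Ax^*\rangle = 0$, so the game has value $0$ and the Nash inequalities for the profile $(x^*,x^*)$ collapse to $\max_{x\in\Delta_n}\langle x, Ax^*\rangle \le 0$ and $\min_{x\in\Delta_n}\langle x^*, Ax\rangle \ge 0$.

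From the first of these, $\langle x, Ax^*\rangle\le 0$ for all $x\in\Delta_n$; evaluating at the standard basis vectors $x=e_i$ for each $i\in[n]$ gives $(Ax^*)_i\le 0$, i.e.\ $Ax^*\le 0$ coordinatewise. The second inequality carries the same information, since skew-symmetry gives $\langle x^*, Ax\rangle = -\langle x, Ax^*\rangle$, so $\min_{x}\langle x^*, Ax\rangle\ge 0$ is just a restatement of $\max_{x}\langle x, Ax^*\rangle\le 0$. I would then combine this sign condition with the equality from the first step: expanding $0 = \langle x^*, Ax^*\rangle = \sum_{i\in[n]} x_i^*\,(Ax^*)_i$ writes $0$ as a sum of nonpositive terms, which forces each term to vanish, so $(Ax^*)_i = 0$ for every $i\in\supp(x^*)$.

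The step I expect to be the crux is the coordinates lying outside $\supp(x^*)$: the Nash conditions alone only yield $Ax^*\le 0$ there, so passing to the full equality $Ax^*=0$ requires knowing that $x^*$ is interior — i.e.\ that every pure strategy belongs to the support of a symmetric equilibrium. For the weighted, high-dimensional Rock-Paper-Scissors matrices of Definition~\ref{def:rps} this holds because the symmetric equilibrium is the unique fully-mixed strategy, and then the complementary-slackness step above already gives $Ax^* = 0$; so I would close the argument by invoking this interiority.
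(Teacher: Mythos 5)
Your derivation up through complementary slackness is correct, and it is in fact more careful than the paper's own proof, which simply asserts that ``by definition of Nash, $Ax^* = c\cdot\mathbf{1}$ is a constant vector'' and then uses skew-symmetry to get $c = \langle x^*, Ax^*\rangle = 0$. The equal-payoff property of an equilibrium only gives $(Ax^*)_i = c$ for $i \in \supp(x^*)$, together with $(Ax^*)_i \le c$ off the support --- exactly the point you flag. Your worry is genuine: for a general skew-symmetric $A$ the proposition is false as stated. Take $n=3$ and $A$ with $A_{12}=-1$, $A_{21}=1$ and all other entries $0$; then $x^*=(0,1,0)$ is a symmetric NE (the value is $0$ and $\max_{x\in\Delta_3}\langle x,Ax^*\rangle = 0$), yet $Ax^* = (-1,0,0)\neq 0$. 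So some extra input beyond ``symmetric zero-sum plus NE'' is unavoidable, and the paper's one-line proof glosses over it.

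However, the closure you propose does not quite work, and it quietly changes the statement. The proposition is invoked in the paper for \emph{every} NE of an RPS matrix (e.g.\ in the proof of Proposition~\ref{prop:A-interior} and in the claim that $\gamma(x^*)=0$ for every NE in Section~\ref{sec:gd}), so you cannot assume $x^*$ is interior; and the assertion that the symmetric equilibrium of an RPS matrix is the unique fully-mixed strategy is false for even $n$: the paper itself notes a continuum of interior equilibria when $n$ is even, and unweighted $n=4$ RPS has boundary symmetric equilibria such as $x^*=(\tfrac12,0,\tfrac12,0)$ (which does satisfy $Ax^*=0$, but not because it is interior). The clean patch for RPS matrices is to first establish that \emph{some} interior $x^{\mathrm{int}}\in\Delta_n$ with $Ax^{\mathrm{int}}=0$ exists (solve the cyclic linear system directly and check it is an NE; note Proposition~\ref{prop:A-interior} as written cites the present proposition, so one should verify this independently to avoid circularity), and then pair it with your inequality: for an arbitrary symmetric NE $x^*$, skew-symmetry gives $0=-\langle x^*,Ax^{\mathrm{int}}\rangle=\langle x^{\mathrm{int}},Ax^*\rangle=\sum_i x^{\mathrm{int}}_i (Ax^*)_i$, and since every $x^{\mathrm{int}}_i>0$ while every $(Ax^*)_i\le 0$, all coordinates of $Ax^*$ vanish. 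With that replacement your argument becomes a complete proof for the class of games where the proposition is actually used, whereas as written both your proposal and the paper's proof leave the off-support coordinates unjustified.
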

Symmetric learning has its origins in both 
classical and evolutionary game theory \citep{v1928theorie,brown1950solutions,gale1950symmetric, weibull1997evolutionary}
and has received increasing interest due
to its connections with self-play in modern multi-agent learning settings~\citep{lanctot2017unified,balduzzi2019open}.
Recent work has also shown that  \textit{swap}-regret minimization in
symmetric learning settings results in stable,
Nash-convergent dynamics~\citep{leme2024convergence}.

\subsection{High-Dimensional RPS Matrices}

Our primary focus in this work is on a family 
of symmetric, $n$-dimensional zero-sum games 
that generalize the classic three-strategy
Rock-Paper-Scissors (RPS). 
These games are specified by skew-symmetric
payoff matrices with the following structure:

\begin{restatable}{definition}{rpsdef}
  \label{def:rps}
  For any $n \ge 3$, we say
  $A \in \R^{n\times n}$
  is an $n$-dimensional RPS matrix
  with positive constants $a_1, \dots, a_n > 0$ if
  its entries $A_{i, j}$ are given by
   \begin{equation*}
    A_{i,j}
    \;:=\;
    \begin{cases}
      -a_i &\text{if $j = i+1\; (\mod n)$}\\
      a_{i-1} &\text{if $j = i-1\; (\mod n)$} \\
      0 &\text{otherwise}
    \end{cases}
    \qquad
    \text{for all $i, j \in [n]$} \;\;.
  \end{equation*}
\end{restatable}
\noindent Concretely, an $n$-dimensional
RPS matrix has the following structure:
\begin{equation*}
  \setlength\arraycolsep{1pt}
  \text{For $n=3$}:\;\;
  A=
  \begin{pmatrix}
    0 & -a_1 & a_3 \\
    a_1 & 0 & -a_2 \\
    -a_3 & a_2 & 0  
  \end{pmatrix}.
  \quad
  \text{For $n > 3$:}\;\;
  A =
  \begin{pmatrix}
    0 & -a_1 & 0 & \dots & a_n \\
    a_1 & 0 & -a_2 & 0 & \dots  \\
    0 & a_2 & \ddots & \ddots & \ddots \\ 
    \vdots & \ddots & \ddots & \ddots & -a_{n-1} \\
    -a_n & 0 & \dots & a_{n-1} & 0 
  \end{pmatrix}.
\end{equation*}
When all $a_1 = \dots = a_n = 1$,
we say that $A$ is the \textit{unweighted}
$n$-dimensional RPS matrix. 
We write $a_{\max}$ and $a_{\min}$ to denote 
the maximum and minimum entries among $\{a_i\}$,
which we assume are absolute constants,
and we write $A_i \in \R^n$ to denote the $i$-th
column vector of $A$. 
By slight abuse of notation, the use of $(\mod n)$
assumes the indices stay within the set $\{1, \dots, n\}$
in the natural way.
For readability, we will usually 
omit the $(\mod n)$ operator when
referencing the indices of $\{a_i\}$:
it will be clear from context that
if $i=n$, then $i+1 = 1$, and
if $i=1$, then $i-1 = n$.

In Appendix~\ref{app:prelims},
we also prove that every $n$-dimensional RPS matrix $A$
has a (not-necessarily unique) \textit{interior} NE $x^*$.
\begin{restatable}{prop}{rpsinteriorne}
    \label{prop:A-interior}
    Let $A$ be an $n$-dimensional RPS matrix from Definition \ref{def:rps}
    with positive constants $a_1, \dots, a_n > 0$. 
    Then $A$ has an interior Nash equilibrium $x^*$.
\end{restatable}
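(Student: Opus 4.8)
The plan is to reduce the statement to producing a strictly positive vector in $\ker A$ and then normalizing it. By Proposition~\ref{prop:sym-nash}, any NE $x^*$ of a symmetric zero-sum game satisfies $A x^* = 0$; conversely, if $x^* \in \Delta_n$ with $A x^* = 0$ then also $(x^*)^\top A = -(A x^*)^\top = 0$, so every term in the chain $\max_{x_1}\langle x_1, A x^*\rangle \le \langle x^*, A x^*\rangle \le \min_{x_2}\langle x^*, A x_2\rangle$ equals $0$, i.e.\ $x^*$ is an NE. Hence it suffices to exhibit an \emph{interior} $x^* \in \Delta_n$ with $A x^* = 0$ --- equivalently, a kernel vector of $A$ with all coordinates of the same nonzero sign.

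First I would write $A x = 0$ out coordinatewise. The cyclic structure of Definition~\ref{def:rps} gives $(A x)_i = a_{i-1} x_{i-1} - a_i x_{i+1}$, so the system is $a_{i-1} x_{i-1} = a_i x_{i+1}$ for all $i$ (indices mod $n$); reindexing by $k = i-1$, this reads $x_{k+2} = (a_k/a_{k+1})\, x_k$, a recurrence that preserves positivity. When $n$ is odd the map $i \mapsto i+2$ is a single $n$-cycle on $\{1,\dots,n\}$, so fixing $x_1 = 1$ the recurrence determines all of $x_2,\dots,x_n$, each strictly positive. Traversing the cycle once, the accumulated product of ratios telescopes --- each $a_i$ appears exactly once in a numerator (at $k = i$) and once in a denominator (at $k = i-1$) --- so it equals $1$, the assignment closes up consistently, and the resulting vector lies in $\ker A$. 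Normalizing by $\sum_j x_j$ yields the desired interior $x^* \in \Delta_n$.

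The step I expect to be the main obstacle is the case $n$ even. Then $i \mapsto i+2$ has two orbits on $\{1,\dots,n\}$, namely the odd and the even coordinates, and the corresponding subsystems of $A x = 0$ decouple (odd-indexed equations involve only even coordinates, and vice versa). Running the recurrence around each orbit now leaves a residual closure condition, and both reduce to the balance condition $\prod_{i\text{ odd}} a_i = \prod_{i\text{ even}} a_i$. Under it, $\ker A$ is two-dimensional, spanned by a vector supported and strictly positive on the odd coordinates and one supported and strictly positive on the even coordinates; any strictly convex combination of the two is strictly positive, and normalizing gives an interior NE. I would therefore prove the proposition unconditionally for odd $n$ and for even $n$ under this balance condition --- which is moreover necessary, since when it fails $A$ is invertible (for instance $\det A = (a_1 a_3 - a_2 a_4)^2$ when $n = 4$) and, by the reduction above, no interior NE can exist.
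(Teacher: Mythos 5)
Your reduction and construction are essentially the same as the paper's: both arguments look for an interior point of $\Delta_n$ in the kernel of $A$ (necessity via Proposition~\ref{prop:sym-nash}, sufficiency via skew-symmetry) and then solve the cyclic system $a_{i-1}x_{i-1} = a_i x_{i+1}$ coordinatewise. The difference is that you treat the even-$n$ case honestly, and in doing so you have exposed a genuine error in the paper rather than a gap in your own argument. As you observe, for even $n$ the recurrence $x_{k+2} = (a_k/a_{k+1})x_k$ decouples into two cycles (odd and even coordinates), each of which closes up only under the balance condition $a_1a_3\cdots a_{n-1} = a_2a_4\cdots a_n$; when this fails, the kernel of $A$ is trivial, so no interior equilibrium can exist. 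A concrete counterexample to Proposition~\ref{prop:A-interior} as stated is $n=4$, $(a_1,a_2,a_3,a_4)=(1,1,1,2)$: here $\det A = (a_1a_3 - a_2a_4)^2 = 1$, so $Ax=0$ has only the trivial solution, and the game's symmetric equilibria (e.g.\ $x^* = (1/2,0,1/2,0)$) live on a single parity class of the boundary. So your restriction to odd $n$, or to even $n$ with the balance condition, is the correct statement, and your necessity argument shows the condition cannot be removed.

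For comparison, the paper's proof breaks at exactly the point you flagged: its claim that setting some $x_i = 0$ forces all coordinates to vanish propagates zeros only in steps of two, hence only within one parity class when $n$ is even; and the assertion that a solution exists ``by consistency of the linear system'' is precisely what fails for even $n$ without the balance condition (the system $Ax=0$, $\sum_i x_i = 1$ is then infeasible). The remark following the paper's proof, citing a continuum of interior equilibria for even $n$, pertains to the unweighted (hence balanced) case. The proposition is used downstream only in Lemma~\ref{lem:duallowdim} and Theorem~\ref{thm:gda-rps-midstepsize}, which would inherit the even-$n$ balance hypothesis; the main regret theorems do not rely on it. In short, your proof is correct; you should present it as a corrected version of the proposition (odd $n$ unconditionally, even $n$ under the balance condition) rather than try to match the paper's unconditional claim.
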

RPS variants have been studied extensively in classical
game theory~\citep{von1944theory} and in 
evolutionary game theory~\citep{hofbauer1998evolutionary}. 
Other lines of work focusing on this fundamental class are
discussed in Appendix~\ref{app:related}.


\section{A Unifying View of FP and GD in Symmetric Games}
\label{sec:fp-ftrl-overview}

\subsection{Leader-Based Algorithms for Symmetric Learning}

Leader-based algorithms are the most ubiquitious methods
for online learning in games,
and Fictitious Play and Gradient Descent
can both be viewed in this perspective. 
We introduce these algorithms
in the context of symmetric learning 
with skew-symmetric payoff matrices $A = -A^\top$.

\paragraph{Fictitious Play.}
For symmetric learning in symmetric games,
Fictitious Play (FP) is initialized at a
strategy $x^0 \in \Delta_n$.
At each step $t+1$, the algorithm
selects the (pure) strategy $x^{t+1}$ given by
\begin{equation}
  x^{t+1}
  \;:=\;
  \argmax_{x \in \{e_i:\;i \in [n]\}}\;
  \Big\langle x, \sum_{k=0}^t Ax^k \Big\rangle \;.
  \label{eq:fp}
  \tag{FP}
\end{equation}
Here, $\{e_i:\;i \in [n]\}$ is the set 
of standard basis vectors in $\R^n$,
which corresponds to the vertices of $\Delta_n$.
For convenience, we use the shorthand
$\{e_i\}$ to denote this set. 
Moreover, we make the following remark
on the behavior of the $\argmax$ function
in the update rule:

\begin{remark}[Tiebreaking Rules]\label{remark:fp-tiebreak}
  At time $t$, the set
  $M^t = \{i \in [n] :  \langle e_i, y^{t}\rangle = \max_{j \in [n]} \langle e_j, y^{t} \rangle \}$
  may contain multiple vertices.
  For this, we assume the function
  $\argmax_{x \in\{e_i\}} \langle x, y^{t} \rangle$
  encodes a \textit{tiebreaking rule}
  that always returns a single element from $M^t$.
  Unless otherwise specified,
  we make no assumptions on the tiebreaking rule 
  (e.g., it may be adaptive/adversarial with respect
  to the history of previous iterates).
  This is in contrast to~\cite{daskalakis2014counter}
  (who assumed a specific adversarial tiebreaking rule)
  and \cite{abernethy2021fast} (who assumed
  a fixed lexicographical tiebreaking rule)
  for diagonal payoff matrices.
\end{remark}

\paragraph{FTRL and Gradient Descent.}
Gradient Descent (GD) is an instantiation of the more general 
Follow-the-Regularized-Leader (FTRL) algorithm.
Using a strictly convex regularizer
$\phi : \Delta_n \to \R$ and a fixed stepsize $\eta > 0$,
the iterates of FTRL update as:
\begin{equation}
  x^{t+1}
  \;:=\;
  \argmax_{x \in \Delta_n}\;
  \Big\langle x, \sum_{k=0}^t Ax^k \Big\rangle
    - \frac{\phi(x)}{\eta}\;.
  \label{eq:ftrl}
  \tag{FTRL}
\end{equation}
Due to the strict convexity of $\phi$,
the maximization problem~\eqref{eq:ftrl} is
strictly concave and has a unique solution
(and thus no tiebreaking is needed). 
In this work, we focus on
FTRL instantiated with the $\ell^2$ regularizer
$\phi(x) = \frac{1}{2} \|x\|^2_2$, which
is $1$-strongly convex.
This results in the online Gradient Descent (GD) algorithm.

\paragraph{Interpolating Between FTRL and FP.}
Our particular regime of interest is
when $\eta = \Theta(1)$ is a fixed absolute constant
with respect to $T$.
Note that using the time-horizon-dependent
setting of $\eta \approx 1/\sqrt{T}$
(which is used to obtain $O(\sqrt{T})$ regret bounds
in general online learning setings) implies
that the maximization problem at each time $t+1$
places  a \textit{weight} on the regularizer of
order $\sqrt{T}$. 
In contrast, when $\eta = \Theta(1)$, this weight is 
an absolute constant for all $t$, 
so as $\eta \to \infty$,
the \eqref{eq:ftrl} update approaches
that of~\eqref{eq:fp}.

\subsection{Geometry of the Dual Dynamics}
\label{sec:fp-ftrl-overview:dual}

\paragraph{Primal and Dual Updates.}
The primal iterates $x^t \in \Delta_n$
of both Fictitious Play and FTRL can be
expressed in terms of a sequence
of \textit{dual payoff vectors} $y^t \in \R^n$.
Let $y^0 = 0 \in \R^n$
be the all-zeros vector.
For a fixed $\eta > 0$,
we define for each $t \ge 1$:
\begin{equation}
  y^{t+1}
  \;=\;
  y^t + \eta \cdot Ax^t \;.
  \label{eq:payoff-vector}
  \tag{Dual Vector} 
\end{equation}
Then, the primal iterates of Fictitious Play and FTRL
are given by:
\begin{equation}
  \begin{aligned}
    \text{For FP:\quad$\eta = 1$\;\;and}
    \quad
    x^{t+1}
    &=
      \argmax_{x \in \{e_i\}}\;
      \langle x, y^{t+1} \rangle
    \\
    \text{For FTRL:\quad$\eta > 0$\;\;and}
    \quad
    x^{t+1}
    &=
      \argmax_{x \in \Delta_n}\;
      \langle x, y^{t+1} \rangle - \phi(x) \;.
  \end{aligned}
  \label{eq:new-primal}
\end{equation}
For each algorithm, we call the corresponding
sequence $\{y^t\}$ the set of \textit{dual iterates}. 

\paragraph{Regret and Energy of Dual Iterates.}
Using the definition
of~\eqref{eq:payoff-vector} and the 
primal iterates of either algorithm
from expression~\eqref{eq:new-primal},
the regret definition from ~\eqref{eq:sym-regret}
can be rewritten as:
\begin{equation*}
  \reg(T) \;=\;
  \frac{2}{\eta} \cdot \max_{x \in \Delta_n}\;
  \langle x, y^{T+1} \rangle  \;.
\end{equation*}
For both algorithms, regret is closely related to a corresponding
\textit{energy function} defined over the dual space $\R^n$.
For Fictitious Play, the energy is the \textit{support function}
$\Psi$ of $\Delta_n$ (which is the convex conjugate of the 
indicator function on $\Delta_n$). 
For FTRL, the energy is the \textit{convex conjugate}
$\phi^*$ of the regularizer $\phi$ over the domain $\Delta_n$. 
Specifically, for $y \in \R^n$, we define:
\begin{equation}
  \begin{aligned}
    \text{Energy function for FP:}
    &\quad
      \Psi(y)
      \;=\;
      \max_{x \in \Delta_n}\;
      \langle x, y \rangle
    \\
    \text{Energy function for FTRL:}
    &\quad
      \phi^*(y)
      \;=\;
      \max_{x \in \Delta_n}\;
      \langle x, y \rangle 
      - \phi(x) \;.
  \end{aligned}
  \label{eq:energy-defs}
\end{equation}
For Fictitious Play, the following
relationship between $\Psi$ and regret is then
immediate: 

\begin{restatable}{prop}{propregretenergyfp}
  \label{prop:regret-energy-fp}
  Let $\{y^t\}$ be the dual iterates of FP
  (with $\eta = 1$). 
  Then $\reg(T) = 2 \cdot \Psi(y^{T+1})$. 
\end{restatable}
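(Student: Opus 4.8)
The plan is to simply unfold the dual-vector recursion and match it against the symmetric-learning regret formula \eqref{eq:sym-regret}. First I would observe that since $y^0 = 0$ and, for FP, $\eta = 1$, the recursion \eqref{eq:payoff-vector} telescopes: $y^{t+1} = y^t + Ax^t$ gives $y^{T+1} = \sum_{t=0}^{T} Ax^t$, where the $x^t$ are exactly the primal iterates produced by \eqref{eq:fp} (equivalently, by the FP line of \eqref{eq:new-primal}). The only thing to be careful about here is the indexing convention — that the $T+1$ rounds $t = 0, \dots, T$ contributing to $\reg(T)$ correspond precisely to the partial sums accumulated into $y^{T+1}$ — but this is immediate from the definitions.

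Next I would plug this into the definition of the FP energy function from \eqref{eq:energy-defs}:
\[
  \Psi(y^{T+1})
  \;=\; \max_{x \in \Delta_n}\; \langle x, y^{T+1} \rangle
  \;=\; \max_{x \in \Delta_n}\; \Big\langle x, \sum_{t=0}^{T} Ax^t \Big\rangle
  \;=\; \max_{x \in \Delta_n}\; \sum_{t=0}^{T} \langle x, Ax^t \rangle \;.
\]
Comparing with the symmetric-learning regret expression \eqref{eq:sym-regret}, namely $\reg(T) = 2 \cdot \max_{x \in \Delta_n} \sum_{t=0}^{T} \langle x, Ax^t \rangle$, we get $\reg(T) = 2 \cdot \Psi(y^{T+1})$, which is the claim.

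There is essentially no obstacle here: the statement is a direct bookkeeping identity, and the ``main step'' is just verifying that the factor of $\eta = 1$ and the initialization $y^0 = 0$ make the dual iterate equal to the cumulative payoff vector $\sum_{t=0}^T Ax^t$ with no stray scaling. (For contrast, the analogous statement for FTRL will instead read $\reg(T) = \tfrac{2}{\eta}\,\max_{x\in\Delta_n}\langle x, y^{T+1}\rangle$, since there $y^{T+1} = \eta\sum_{t=0}^T Ax^t$; the $\eta = 1$ specialization for FP is what removes the prefactor and lets $\Psi$ appear cleanly.)
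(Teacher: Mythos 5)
Your proposal is correct and matches the paper's proof: both arguments simply unroll the dual recursion with $\eta = 1$ and $y^0 = 0$ to get $y^{T+1} = \sum_{t=0}^T A x^t$, and then identify $\max_{x\in\Delta_n}\langle x, y^{T+1}\rangle$ with $\Psi(y^{T+1})$ in the symmetric-learning regret expression~\eqref{eq:sym-regret}. Nothing is missing.
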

\noindent For FTRL, we also establish the following
similar relationship (see Appendix~\ref{app:fp-ftrl:energyregret}
for a proof):

\begin{restatable}{prop}{propregretenergyftrl}
  \label{prop:regret-energy-ftrl}
  Let $\{y^t\}$ be the dual iterates of FTRL
  with $\eta > 0$. 
  Let $M = \max_{x \in \Delta_n} \phi(x)$.
  Then:
  \begin{equation*}
    \reg(T)
    \;\le\;
    \frac{2 \cdot \phi^*(y^{T+1})}{\eta} + \frac{2M}{\eta} \;.
  \end{equation*}
\end{restatable}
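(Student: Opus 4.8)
The plan is to reduce the claim to the dual-space rewriting of regret together with weak duality between the support function and the convex conjugate. First I would recall that, since $y^0 = 0$ and $y^{t+1} = y^t + \eta\, Ax^t$ by~\eqref{eq:payoff-vector}, unrolling the recursion gives $y^{T+1} = \eta\sum_{t=0}^{T} Ax^t$. Substituting this into the symmetric-learning regret formula~\eqref{eq:sym-regret} yields
\[
  \reg(T) \;=\; 2\max_{x\in\Delta_n}\Big\langle x, \textstyle\sum_{t=0}^{T} Ax^t\Big\rangle \;=\; \frac{2}{\eta}\max_{x\in\Delta_n}\langle x, y^{T+1}\rangle,
\]
exactly as noted in the running text preceding the statement (and analogously to Proposition~\ref{prop:regret-energy-fp}, which is the $\eta=1$, $\phi\equiv 0$ specialization). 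This identity holds for the FTRL iterates regardless of the regularizer, so the only remaining task is to bound $\max_{x\in\Delta_n}\langle x, y^{T+1}\rangle$ in terms of $\phi^*(y^{T+1})$.

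For that step I would add and subtract $\phi(x)$ inside the inner product: for every $x\in\Delta_n$,
\[
  \langle x, y^{T+1}\rangle \;=\; \big(\langle x, y^{T+1}\rangle - \phi(x)\big) + \phi(x) \;\le\; \phi^*(y^{T+1}) + M,
\]
where the first bracket is at most $\phi^*(y^{T+1})$ by the definition of the energy function in~\eqref{eq:energy-defs}, and $\phi(x)\le M := \max_{x'\in\Delta_n}\phi(x')$ by the definition of $M$. Taking the maximum over $x\in\Delta_n$ on the left-hand side preserves the inequality, giving $\max_{x\in\Delta_n}\langle x, y^{T+1}\rangle \le \phi^*(y^{T+1}) + M$. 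Combining with the regret identity from the first step,
\[
  \reg(T) \;=\; \frac{2}{\eta}\max_{x\in\Delta_n}\langle x, y^{T+1}\rangle \;\le\; \frac{2\,\phi^*(y^{T+1})}{\eta} + \frac{2M}{\eta},
\]
which is the claimed bound.

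There is essentially no obstacle here: the statement is a one-line consequence of weak duality between the support function $\Psi$ and the conjugate $\phi^*$. If anything, the two points to be careful about are (i) tracking the factor $1/\eta$ correctly through the regret rewriting, since unlike the Fictitious Play case we no longer have $\eta = 1$, and (ii) the appearance of an \emph{inequality} rather than the equality of Proposition~\ref{prop:regret-energy-fp} — this slack is unavoidable because $\max_x\langle x,y\rangle$ and $\phi^*(y)=\max_x(\langle x,y\rangle-\phi(x))$ are attained at different maximizers and differ by a regularization term, which we only crudely upper bound by $M$.
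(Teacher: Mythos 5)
Your proof is correct, but it takes a genuinely different route from the paper's. You exploit the exact identity $\reg(T) = \tfrac{2}{\eta}\max_{x\in\Delta_n}\langle x, y^{T+1}\rangle$ (valid here because $y^0=0$, $y^{T+1}=\eta\sum_{t=0}^T Ax^t$, and the symmetric-learning regret~\eqref{eq:sym-regret} depends only on the accumulated payoff vector), and then the elementary bound $\langle x, y^{T+1}\rangle \le \phi^*(y^{T+1}) + \phi(x) \le \phi^*(y^{T+1}) + M$, which immediately gives the claim. The paper instead follows the template of Theorem~C.2 of \citet{wibisono2022alternating}: it writes $\reg(T) = 2\max_{x}\sum_t \langle x - x^t, Ax^t\rangle$, applies the three-point identity for the Bregman divergence $D_{\phi^*}$ with a dual point $y$ satisfying $\nabla\phi^*(y)=x$ (whose existence is Claim (ii) of Proposition~\ref{prop:phi-Q}), telescopes, and uses skew-symmetry to show $D_{\phi^*}(y^{t+1},y^t) = \phi^*(y^{t+1})-\phi^*(y^t)$ and $D_{\phi^*}(y^0,y) = \phi^*(y^0)+\phi(x)$. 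Both arguments land on the same bound; yours is shorter and more transparent precisely because the symmetric setting lets the entire regret be read off the final dual iterate, whereas the paper's Bregman machinery is the standard FTRL analysis that also yields per-comparator, per-step information and extends to settings (e.g., non-symmetric two-player learning, where the joint dual vector must be tracked) in which the one-line identity you start from is not available. The only points you rightly flag — the $1/\eta$ bookkeeping and the slack from replacing $\phi(x)$ by $M$ — are handled correctly, so there is no gap.
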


\paragraph{Energy Conservation in Continuous Time.}
Several recent works have identified
that the continuous time variants (i.e., the limit of vanishing step size $\eta \to 0$) of both FTRL
\citep{mertikopoulos2018cycles,
  bailey2019multi, wibisono2022alternating}
and Fictitious Play
\citep{ostrovski2011piecewise,
  van2011hamiltonian,abernethy2021fast}
have a Hamiltonian structure:
the dual iterates follow a \textit{skew-gradient flow}
that exactly conserves the corresponding energy function
over time. In both cases, continuous-time energy
conservation corresponds to constant regret bounds.

\paragraph{Skew-Gradient Descent in Discrete Time.}
In discrete time, the dual iterates of each
algorithm follow a first-order forward discretization
of the skew-gradient flow with respect
to its corresponding energy function.
By the convexity of the energy function, the energy 
along this \textit{skew-(sub)-gradient descent}
is always non-decreasing. 
Formally, following the presentation 
of~\citet{abernethy2021fast} and~\citet{wibisono2022alternating}, 
let $\{y^t\}$ be  the dual iterates of
either Fictitious Play
(with $\eta = 1$) or FTRL
(with $\eta > 0$)  from~\eqref{eq:new-primal},
and let $H: \R^n \to \R$ denote its
corresponding energy function from~\eqref{eq:energy-defs}.
By a slight abuse of notation, let $\partial H(y^t)$
denote any vector in the subgradient set\footnote{
  For convex $H$, its subgradient set 
  at $y \in \R^n$
  is given by 
  $\partial H(y)
  =
  \{ g \in \R^n : \forall z \in \R^n, 
  H(z) \ge H(y) + \langle g, z - y \rangle
  \}$.
} of $H$ at $y^t$.
It is then straightforward to show the following
(see Appendix~\ref{app:skew-gradient} for a derivation):
\begin{restatable}{prop}{propskewgradupdate}
  \label{prop:skew-gradient-descent}
  Let $\{y^t\}$ be the dual iterates of
  either Fictitious Play ($\eta = 1$) or FTRL ($\eta > 0$),
  and let $H$ be its corresponding energy function
  from~\eqref{eq:energy-defs}.
  Then for every $t \ge 1$, it holds that
  \begin{equation}
    y^{t+1}
    \;=\;
    y^t + \eta A \partial H(y^t) \;.
    \label{eq:skew-gradient}
  \end{equation}
  In particular: for FP, each $x^t \in \partial \Psi(y^t)$,
  and for FTRL, each $x^t = \nabla \phi^*(y^t)$. 
  Moreover, for all $t \ge 1$:
  $\Delta H(y^t) = H(y^{t+1}) - H(y^t) \ge 0$.
\end{restatable}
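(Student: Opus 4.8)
The plan is to verify the three assertions in order, with the crux being the identification of the primal iterate $x^t$ as a (sub)gradient of the energy function $H$ evaluated at the dual iterate $y^t$. First I would establish this identification separately for each algorithm. For FTRL, the energy is the convex conjugate $\phi^*$ of the regularizer over $\Delta_n$; since $\phi$ is strictly convex, the maximization defining $\phi^*(y)$ in~\eqref{eq:energy-defs} has a unique solution, and by the standard Danskin/Legendre-duality argument $\phi^*$ is differentiable with $\nabla \phi^*(y) = \argmax_{x \in \Delta_n} \langle x, y\rangle - \phi(x)$. Comparing with the primal update for FTRL in~\eqref{eq:new-primal} (shifting the time index down by one, which is valid for $t \ge 1$) gives $x^t = \nabla \phi^*(y^t)$. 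For FP, the energy is the support function $\Psi(y) = \max_{x \in \Delta_n}\langle x, y\rangle = \max_i y_i$, whose subdifferential is the convex hull of $\{e_i : y_i = \max_j y_j\}$ (the subdifferential of a pointwise maximum of linear functions). The FP update in~\eqref{eq:new-primal} selects some vertex $e_{i^*}$ with $y^t_{i^*} = \max_j y^t_j$, and regardless of the tiebreaking rule this vertex lies in $\partial \Psi(y^t)$; hence $x^t \in \partial \Psi(y^t)$ for all $t \ge 1$. (The restriction to $t \ge 1$ is exactly because $x^0$ is the given initialization, which need not be a (sub)gradient of $H$ at $y^0 = 0$.)

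Second, I would substitute this identification into the dual recursion. The update~\eqref{eq:payoff-vector} is $y^{t+1} = y^t + \eta A x^t$; replacing $x^t$ by the element of $\partial H(y^t)$ just identified yields~\eqref{eq:skew-gradient}, $y^{t+1} = y^t + \eta A \partial H(y^t)$, where in the FP case $\partial H(y^t)$ denotes precisely the subgradient $x^t$ picked out by the tiebreaking rule.

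Third, for the monotonicity of the energy I would combine convexity of $H$ with the skew-symmetry of $A$. By the subgradient inequality, $H(y^{t+1}) \ge H(y^t) + \langle g, y^{t+1} - y^t \rangle$ for any $g \in \partial H(y^t)$; taking $g = x^t$ and using $y^{t+1} - y^t = \eta A x^t$ gives $H(y^{t+1}) - H(y^t) \ge \eta\, (x^t)^\top A x^t$. Since $A = -A^\top$, the quadratic form $(x^t)^\top A x^t$ vanishes identically, so $\Delta H(y^t) = H(y^{t+1}) - H(y^t) \ge 0$, for both algorithms with the appropriate choice of $H$.

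The argument is essentially bookkeeping, so there is no single genuinely hard step; the places that demand the most care are the duality identifications in the first paragraph — correctly characterizing the subdifferential $\partial \Psi$ of the \emph{nonsmooth} FP energy and confirming that an arbitrary tiebreak still returns a valid subgradient — together with tracking the one-step index shift that forces the $t \ge 1$ qualifier.
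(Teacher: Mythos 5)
Your proposal is correct and follows essentially the same route as the paper's proof: identify $x^t$ as a (sub)gradient of the energy at $y^t$ (via the argmax characterization of $\partial\Psi$ for FP and conjugate duality, i.e.\ Proposition~\ref{prop:phi-Q}, for FTRL), substitute into the dual recursion, and obtain monotonicity from the subgradient inequality combined with $\langle x^t, Ax^t\rangle = 0$ by skew-symmetry. No gaps; the handling of arbitrary tiebreaking and the $t \ge 1$ index shift matches the paper's treatment.
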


\subsection{Bounds on Regret via Controlling the Energy Growth}
\label{sec:fp-ftrl-overview:bounds}

In light of the geometric perspective
given in Section~\ref{sec:fp-ftrl-overview:dual}
and of the relationships between
energy and regret from Propositions~\ref{prop:regret-energy-fp}
and~\ref{prop:regret-energy-ftrl},
our approach to obtain regret bounds for both
algorithms is to control the energy 
growth of their respective dual iterates over time. 
While smoothness properties of
the energy function can be used
to derive one-step, worst-case growth bounds
(e.g., as in~\citet{wibisono2022alternating}),
this approach may be overly pessimistic.

In this work, we instead aim for 
a more fine-grained analysis (leveraging
structural properties of the underlying payoff matrices)
that captures the non-uniform energy growth 
implied by the geometric perspective developed above.
Below, we provide intuition as to why such an
approach may be possible:

\paragraph{Intuition for Fictitious Play.}
For example, for Fictitious Play,
we show in Section~\ref{sec:fp} (see Proposition~\ref{prop:fp-energy-growth})
that when $x^t = x^{t+1} = e_i$ for some $i \in [n]$, 
then $\Psi(y^{t+1}) - \Psi(y^t) = 0$
(and only when $x^t \neq x^{t+1}$ can $\Psi$ increase).
Here, the intuition is the following:
by definition of $\Psi$, 
the energy function is linear in the coordinate $y_i$ 
within the region $L_i \subseteq \R^n$, where 
\begin{equation*}
   L_i \;=\; \{y \in \R^n:  y_i > \max\nolimits_{j \neq i} y_j\}. 
\end{equation*}
Moreover, for any time $t$ such that $y^t\in L_i$, 
the definition of~\eqref{eq:dual-dynamics} shows that
$\Delta y^t = y^{t+1} - y^t = A_i$,
and Proposition~\ref{prop:skew-gradient-descent}
implies that $\Delta y^t$ follows a linear skew-(sub)gradient step
with respect to $\Psi$. 
Thus the coordinate $y^{t+1}_i = y^t_i$ is unchanged,
and if $y^{t+1}$ also falls in $L_i$, then
$y^{t+1}$ and $y^t$ must be on the same level set of 
$\Psi$ and energy is conserved
(in other words, the skew-gradient discretization 
behaves the same as the skew-gradient flow). 

On the other hand, if (due to the discretization) 
$y^{t+1}$ lands in a new region $L_j$,
the energy increases, and $\Psi$ becomes linear in $y_j$.
This explains the expanding (but linear)
trajectory of the FP dual iterates in Figure~\ref{fig:FPdual},
and it roughly implies that the total energy growth (and regret)
can be controlled by understanding how frequently 
the dual iterates switch between the $L_i$ regions.

\paragraph{Intuition for FTRL and Gradient Descent.}
For Fictitious Play, energy conservation between consecutive 
dual iterates is guaranteed when the two corresponding
primal iterates lie on the same vertex of $\Delta_n$. 
While the primal iterate of Fictitious Play 
is \textit{always} at a vertex (for $t \ge 1$), 
for FTRL instantiated with a \textit{Legendre} regularizer $\phi$
(for example, the negative entropy function
corresponding to the Multiplicative Weights algorithm),
the primal iterates will always remain on the \textit{interior}
of the simplex (see e.g.,~\cite{wibisono2022alternating}).
Thus in general, the geometry of the dual iterates of FTRL 
will not exactly coincide with those of Fictitious Play.

On the other hand, in this work we focus on Gradient Descent,
the FTRL instantiation with  $\phi(x) = \frac{1}{2}\|x\|^2_2$. 
For Gradient Descent, 
the primal update rule of~\eqref{eq:new-primal}
may require a projection onto $\Delta_n$,
and thus the primal iterates may in general lie
on the boundary of the simplex. 
In particular, when $x^t = e_i$ for some $i \in [n]$, 
the dual iterate $y^t$  \textit{must lie in a region of $\R^n$
where the geometry of $\phistar$ and $\Psi$ 
exactly aligns}
(specifically, $\phistar$ is linear in $y_i$
in this region). Within these regions
(defined formally in Section~\ref{sec:gd})
the dual trajectory of Gradient Descent is
identical to that of Fictitious Play (both aligning with
a linear skew-gradient flow),
and energy is conserved each step. 

However, when the primal iterate $x^t$ is on a non-vertex
boundary (or in the interior of $\Delta_n$),
the energy $\phistar$ is \emph{quadratic} in $y^t$,
and thus the first-order discretization of the skew-gradient flow
will strictly increase the energy. 
This explains the expanding trajectory of the dual
iterates of Gradient Descent in Figure~\ref{fig:GDdual},
and analogously to Fictitious Play, it implies 
that the energy growth can be controlled by
analyzing how frequently the dual iterates
switch between the linear and quadratic regions. 
We give more details on the 
Gradient Descent primal update and
energy function in Section~\ref{sec:gd}
and Appendix~\ref{app:gd:details}. 

\paragraph{Challenges in High Dimension.}
The preceding intuition establishes similarities
between the dual iterates and energy functions
for Fictitious Play and Gradient Descent when 
consecutive sequences of the primal iterates 
lie on the same vertex of $\Delta_n$. 
While this intuition holds for symmetric learning
on any skew-symmetric payoff matrix $A$, 
without tight control over how the  primal (and dual) iterates
evolve over longer sequences of steps, 
controlling the energy growth (and thus regret)
can be challenging, especially in high dimensions.
However, for the class of $n$-dimensional RPS matrices, 
we prove that such long-term control for both 
algorithms is possible, subsequently
leading to strong regret guarantees. 
We present these analyses in the following sections.


\section{Analysis of Fictitious Play on High-Dimensional RPS}
\label{sec:fp}

In this section, we introduce our analysis of Fictitious Play
on high-dimensional RPS matrices, for which
we prove a worst-case $O(\sqrt{T})$ regret bound. 
Using the dual perspective
from Section~\ref{sec:fp-ftrl-overview},
recall that the iterates of Fictitious Play are
given by
\begin{equation}
  \begin{cases}
    y^{t+1}
      &= y^t + Ax^t \\
      x^{t+1}
      &= \argmax_{x \in \{e_i\}}
        \langle x, y^{t+1} \rangle \;.
  \end{cases}
  \label{eq:fp-pd}
  \tag{FP Primal-Dual}
\end{equation}
Throughout this section, we let
$\{x^t\}$ and $\{y^t\}$ denote these
primal and dual iterates, respectively. 

\subsection{Cycling of Primal Iterates Under Arbitrary Tiebreaking}
\label{sec:fp:cycle}
For any $n$-dimensional RPS matrix $A$,
and using any (possibly adversarial) tiebreaking rule,
our analysis begins by proving the following \textit{cycling}
behavior of the Ficitious Play iterates:
if the energy of the dual iterates ever increases from its
initial value, then the subsequent
primal iterates \textit{cycle through a fixed order of
  the vertices of $\Delta_n$ for the remainder
of the dynamics}. 
Specifically, starting from some vertex $e_i \in \Delta_n$, 
the iterates $\{x^t\}$ cycle in the order
\begin{equation*}
e_i \to e_{i+1} \to \dots \to e_{n} \to e_{1} \to \dots \to e_i \to \dots
\end{equation*}
We call a sequence of consecutive iterates at the same vertex
a \textit{phase}, defined formally as follows:
\begin{restatable}{define}{defphases}
  \label{def:phases}
  Fix a time $t_0 > t$. For each $k \ge 1$, let
  $t_k := \min\; \{ t > t_{k-1} : x^{t} \neq x^{t_{k-1}}\}$.
  Then Phase k is the sequence of iterates
  at times $t = t_k, t_{k} + 1, \dots, t_{k+1} - 1$.
  Let $\tau_k = t_{k+1} - t_k$ denote the
  \textit{length} of Phase $k$.
  Let $K \ge 0$ be the total number of phases
  in $T$ rounds, where $T = \sum_{k=0}^K \tau_k$. 
\end{restatable}
Then for every RPS matrix $A$, and using
any tiebreaking method
(cf., Remark~\ref{remark:fp-tiebreak}),
the following behavior occurs:
if $x^t = e_j$ for $j \in [n]$,
then $x^{t'} = e_{j+1\; (\mod n)}$, where
$t' > t$ is the next time the primal iterate changes.
Formally:

\begin{restatable}{lem}{fpcyclinglemma}
  \label{lem:fp-cycling}
  Let $t_0$ be the first $t >0$
  where $\Psi(y^{t_0}) > \Psi(y^1)$,
  and suppose $x^{t_0} = e_i$ for
  some $i \in [n]$.
  Then $x^{t_{k}} = e_{i+k\;(\mod n)}$ for all $k \ge 1$. 
\end{restatable}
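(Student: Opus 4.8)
The plan is to track the dual iterate $y^t$ and exploit the very specific structure of the columns $A_i$ of an RPS matrix: by Definition~\ref{def:rps}, $A_i$ has exactly two nonzero entries, namely $(A_i)_{i+1} = a_i > 0$ and $(A_i)_{i-1} = -a_{i-1} < 0$, with all other coordinates zero. Suppose we are at time $t$ in a phase with $x^t = e_i$, so that (by the argument already sketched in Section~\ref{sec:fp-ftrl-overview:bounds}) $y^t$ lies in the closed region $\overline{L_i}$ where coordinate $i$ is a maximizer. Each step of~\eqref{eq:fp-pd} with $x^t = e_i$ adds $A_i$ to $y^t$, which means: coordinate $i+1$ strictly increases (by $a_i$ each step), coordinate $i-1$ strictly decreases (by $a_{i-1}$ each step), and all other coordinates — in particular coordinate $i$ itself — are frozen. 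So during the phase at $e_i$, the value $y^t_i$ stays constant, every other coordinate except $i+1$ either stays constant or decreases, and $y^t_{i+1}$ marches upward without bound. Hence the phase ends precisely when $y_{i+1}$ first catches up to $y_i$, and at that moment the only coordinate that can tie or exceed $y_i$ is $i+1$. Therefore, whatever the tiebreaking rule, the next vertex must be $e_{i+1}$. This is the heart of the argument and gives the inductive step $x^{t_k} = e_i \Rightarrow x^{t_{k+1}} = e_{i+1 \pmod n}$.

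To turn this into a clean induction I would first establish the base case and the "invariant" that makes the induction go through, namely: \emph{at the start of Phase $k$ (time $t_k$, with $x^{t_k} = e_j$), coordinate $j$ is the unique strict maximizer of $y^{t_k}$, i.e. $y^{t_k} \in L_j$}. For the base case, $t_0$ is defined as the first time $\Psi(y^{t_0}) > \Psi(y^1)$; I need to argue that at such a time the maximizing coordinate is strict (if it were a tie, a short argument comparing $\Psi(y^{t_0})$ with $\Psi(y^{t_0 - 1})$ via the skew-gradient step $y^{t_0} = y^{t_0-1} + A_{i'}$ and the linearity of $\Psi$ on each $L_{i'}$ should show energy could not have strictly increased at that step unless a transition just occurred, forcing strictness at $t_0$ — this needs care and is where I'd spend attention). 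Given the invariant at $t_k$ with maximizer $j$, the column-structure argument above shows the phase runs with $y_j$ frozen, $y_{j+1}$ increasing, all else non-increasing, so the phase terminates at the first time $y_{j+1}$ reaches $y_j$; at that time $y_{j+1} \ge y_j > y_\ell$ for all $\ell \ne j, j+1$ (the strict inequalities for $\ell \ne j+1$ are preserved because those coordinates only decreased or stayed fixed while $y_j$ was constant), so $x^{t_{k+1}} = e_{j+1}$, and moreover one more step into the new phase freezes $y_{j+1}$ and strictly increases $y_{j+2}$ and decreases $y_j$, restoring the strict-maximizer invariant for Phase $k+1$. Chaining this from $k=0$ gives $x^{t_k} = e_{i+k \pmod n}$.

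The main obstacle I anticipate is \emph{not} the generic phase-transition step — that falls out almost mechanically from the two-nonzero-entries structure of $A_i$ — but rather handling the boundary cases cleanly: (i) pinning down exactly why the maximizer is strict at the threshold time $t_0$ (the definition of $t_0$ via strict energy increase is doing real work here, and I'd want to state and use Proposition~\ref{prop:fp-energy-growth}-style facts that energy is conserved when $x^t = x^{t+1}$), and (ii) the bookkeeping at the exact tie moment $y_{j+1} = y_j$, where I must confirm that the tiebreaking rule is forced rather than merely "likely" to pick $e_{j+1}$ — which requires knowing that \emph{no other} coordinate is tied for the max at that instant. Both reduce to carefully propagating strict inequalities coordinate-by-coordinate through a phase, so the write-up is really an exercise in tracking which coordinates of $y^t$ move and in which direction; the one genuinely subtle point is the strictness at $t_0$, for which I'd isolate a short preliminary lemma.
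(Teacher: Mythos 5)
Your proposal follows essentially the same route as the paper's proof: the paper's Proposition~\ref{prop:fp-rps-structure} is exactly your "frozen $y_i$, rising $y_{i+1}$, falling $y_{i-1}$" phase analysis, and the paper's induction likewise splits on whether $(y^{t_k}_i - y^{t_k}_{i+1})/a_i$ is an integer, concluding that the switch to $e_{i+1}$ happens either at the tie (if the tiebreak picks $i+1$) or one step later (if it picks $i$), so the core argument and the treatment of arbitrary tiebreaking coincide. Your base-case worry is also resolved the way you sketch: since every coordinate other than the successor of the current vertex is non-increasing over a step, a strict energy increase at $t_0$ forces the new maximizer to be unique, which is what the paper asserts.

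The one place your write-up would need repair is the inductive invariant. "Coordinate $j$ is the unique strict maximizer of $y^{t_k}$" is too strong: when the switch occurs exactly at the tie $y_{j+1}=y_j$ (tiebreak selects $e_{j+1}$), the new phase begins with a two-way tie, and your patch of "restoring strictness one step into the new phase" does not literally restore the stated invariant either, because that one step may already end the new phase (if $y_{j+2}$ overtakes or ties the frozen $y_{j+1}$), so strictness at the \emph{start} of a phase can fail again. The paper avoids this by carrying the weaker invariant $y^{t_k}_{\mathrm{new}} \ge y^{t_k}_{\mathrm{prev}} > y^{t_k}_\ell$ for all other $\ell$, which is exactly the hypothesis of Proposition~\ref{prop:fp-rps-structure} and propagates cleanly through both the tie and no-tie cases; note also that the tiebreak is not "forced" at the tie instant, as you suggest it must be, and need not be --- if it retains $e_j$, the claim still holds with $t_{k+1}$ one step later. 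With the invariant weakened in this way your argument closes and matches the paper's proof.
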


The proof of the lemma (see Appendix~\ref{app:fp})
relies on exactly characterizing the \textit{linear}
trajectory of the dual iterates within a phase. 
Specifically, suppose in the current phase that the
primal iterate is at vertex $e_i$. 
Then by definition of the update rule of~\eqref{eq:fp-pd},
$y^t_i \ge \max_{j \neq i} y^t_j$ 
for all iterates $t$ within the phase,
and thus the velocity $\Delta y^t = y^{t+1} - y^t = A_i$
is a fixed constant vector. 

\smallskip

Using the structure of RPS matrices, we can
then track the evolution of the coordinates of $y^t$ 
under this fixed velocity, and we prove
that, if ever $y^{t}_i = y^{t}_j$ for some $j \neq i$
at time $t$ within the current phase, 
then $j = i+1$. 
In other words, tiebreaking scenarios can only occur
between adjacent coordinates of the dual variable.
By leveraging the structure of the velocity
$\Delta y^t = A_i$ for RPS matrices, it is then straightforward
to establish that under any tiebreaking rule,
the primal iterate must eventually switch to 
vertex $e_{i+1}$ in the next phase. 

\subsection{Cycling Implies Worst-Case $O(\sqrt{T})$ Regret}
\label{sec:fp-regret}

The cycling behavior of Lemma~\ref{lem:fp-cycling}
establishes a \textit{regularity} in the
trajectory and energy growth of the dual iterates.
Given the relationship between energy and
regret from Proposition~\ref{prop:regret-energy-fp},
this regularity ultimately allows us to establish the following
worst-case $O(\sqrt{T})$ regret bound:

\begin{restatable}{theorem}{fpregret}
  \label{thm:fp-rps}
  Let $A$ be an $n$-dimensional
  RPS payoff matrix, and 
  let $\{x^t\}$ and $\{y^t\}$ be the iterates of \eqref{eq:fp-pd}
  on $A$ from any $x^0 \in \Delta_n$. 
  Then using any tiebreaking rule,
  $\reg(T) \le O(\sqrt{T})$. 
\end{restatable}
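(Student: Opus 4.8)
The plan is to bound $\reg(T)$ by controlling the growth of the energy, since by Proposition~\ref{prop:regret-energy-fp} we have $\reg(T) = 2\,\Psi(y^{T+1})$, where $\Psi(y) = \max_i y_i$. By Proposition~\ref{prop:skew-gradient-descent}, $t \mapsto \Psi(y^t)$ is non-decreasing. First dispose of the degenerate case: $\Psi(y^1) = \max_i (Ax^0)_i \le a_{\max}$, so if the energy never exceeds $\Psi(y^1)$ then $\reg(T) \le 2a_{\max} = O(1)$ and we are done. Otherwise let $t_0$ be the first time the energy strictly increases, and invoke the cycling Lemma~\ref{lem:fp-cycling}: from $t_0$ onward the primal iterates traverse the vertices of $\Delta_n$ in the cyclic order $e_i \to e_{i+1} \to \cdots$, organizing the remaining dynamics into phases $0,1,\dots,K$ (Definition~\ref{def:phases}) of lengths $\tau_0,\dots,\tau_K$ with $\sum_k \tau_k = T - t_0$ up to the current partial phase. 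Two facts keep the ``boundary'' data small: the energy is pinned at $\Psi(y^1) \le a_{\max}$ until $t_0$ and jumps by at most one step's worth when it first moves, so $\Psi(y^{t_0}) = O(1)$; and since $A = -A^\top$ has an interior Nash $x^*$ (Propositions~\ref{prop:sym-nash},~\ref{prop:A-interior}), $\langle x^*, y^t\rangle = \sum_{s<t}\langle x^*, Ax^s\rangle = 0$ for all $t$, which together with $\Psi(y^{t_0}) = O(1)$ pins every coordinate of $y^{t_0}$ into $[-O(1), O(1)]$ and hence forces the first $n$ phases to have length $O(1)$.

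Next I would analyze a single phase. During phase $k$ at vertex $e_j$, the dual velocity is the constant vector $Ae_j$, whose only nonzero entries are $-a_{j-1}$ in coordinate $j-1$ and $+a_j$ in coordinate $j+1$ (Definition~\ref{def:rps}). So during the phase coordinate $j$ is frozen and equals the current energy, coordinate $j-1$ falls at rate $a_{j-1}$, coordinate $j+1$ rises at rate $a_j$, and the phase ends exactly when coordinate $j+1$ catches coordinate $j$. This yields two facts. (i) The energy jumps only at phase boundaries, and by at most the single-step overshoot of coordinate $j+1$ past coordinate $j$: writing $\Psi_k := \Psi(y^{t_k})$, we get $\Psi_{k+1} - \Psi_k \le a_j \le a_{\max}$, hence $\Psi(y^{T+1}) \le \Psi_0 + a_{\max}K$. (ii) The phase length satisfies $a_j \tau_k = \Psi_{k+1} - y^{t_k}_{j+1}$; tracking coordinate $j+1$ one cycle back (it last moved during the descent phase at vertex $e_{j+2}$, one cycle earlier) gives the cross-cycle identity $a_j \tau_k = (\Psi_{k+1} - \Psi_{k-n+1}) + a_{j+1}\tau_{k-n+2}$.

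The crux is to combine (i) and (ii) into the inequality $T \ge c\,\Psi(y^{T+1})^2$ for some constant $c = c(n,\{a_i\}) > 0$: together with (i), this forces $\Psi(y^{T+1}) = O(\sqrt{T})$ and hence $\reg(T) = O(\sqrt{T})$. Heuristically, each phase must wait out a dual coordinate climbing across a gap that is a constant fraction of the current energy, so phase lengths obey $\tau_k \ge c_1 \Psi_k - c_2$; summing over a non-decreasing sequence $\{\Psi_k\}$ with increments at most $a_{\max}$ (fact (i)) then gives $T = \sum_k \tau_k \ge c\,\Psi(y^{T+1})^2$. Alternatively one can sum identity (ii) over all completed cycles, telescope the $\Psi$-increments, and read off that the energy growth from $t_0$ is controlled by a weighted total variation of the phase lengths, which fact (i) and the cross-cycle recursion again bound by $O(\sqrt{T})$.

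The main obstacle is making ``phase lengths are $\Omega(\Psi_k)$'' rigorous. The naive bound from (ii), $a_j\tau_k \ge a_{j+1}\tau_{k-n+2}$, only propagates by the lossy factor $a_{\min}/a_{\max} \le 1$ and is insufficient on its own; the real content is that a large energy cannot coexist with uniformly short phases. I expect this to require unwinding the cross-cycle recursion in (ii) — whose homogeneous part is a weighted shift-by-two operator on $\R^n$ with spectral radius $\ge 1$ — and separating the regime in which phase lengths grow geometrically per cycle (whence $K = O(\log T)$ and the bound is immediate) from the balanced regime in which they grow linearly, driven by the accumulated energy increments. One also has to account carefully for the boundary contributions (the prefix before $t_0$, the final partial cycle, and the index shifts introduced by telescoping the $\Psi$-increments), check that every hidden constant depends only on $n$ and $\{a_i\}$, and confirm that the whole argument is insensitive to the tiebreaking rule — which it is, since arbitrary tiebreaking has already been absorbed into Lemma~\ref{lem:fp-cycling}.
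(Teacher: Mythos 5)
Your architecture is the same as the paper's: the energy--regret identity (Proposition~\ref{prop:regret-energy-fp}), the cycling Lemma~\ref{lem:fp-cycling} under arbitrary tiebreaking, the per-phase-boundary energy increment bound of at most $a_{\max}$ (Proposition~\ref{prop:fp-energy-growth}), a phase-length lower bound of the form $\tau_k \gtrsim \Psi_k - O(1)$, and the quadratic counting $T \ge \sum_k \tau_k \gtrsim \Psi(y^{T+1})^2$. Your within-phase bookkeeping is also correct, including the cross-cycle identity $a_j\tau_k = (\Psi_{k+1}-\Psi_{k+1-n}) + a_{j+1}\tau_{k+2-n}$, and your use of $\langle x^*, y^t\rangle = 0$ (interior Nash) to control the coordinates at time $t_0$ is a legitimate, slightly different way to handle the base phases than the paper's.

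However, there is a genuine gap exactly where you flag it: the phase-length lower bound $\tau_k \ge c_1\Psi_k - c_2$, which is the paper's Lemma~\ref{lem:phase-length}, is left as a heuristic, and none of your proposed routes to it is carried out or clearly viable. The spectral/regime-dichotomy idea (geometric growth of $\tau_k$ giving $K=O(\log T)$ versus a ``balanced'' regime) is not a rigorous case split, since the dynamics need not stay in one regime; and the telescoping alternative degenerates precisely in the unweighted case, where summing your identity over phases produces coefficients $a_{j}-a_{j-1}=0$ on the $\tau_k$'s, so the energy growth is absorbed into boundary terms and no bound of the form $T\gtrsim\Psi^2$ falls out. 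The paper closes this gap with a short induction on $k$: it combines your identity (ii) with the inductive hypothesis $\tau_{k+2-n}\ge\alpha_{k+2-n}\gamma_{k+2-n}-\beta_{k+2-n}$ and the monotonicity $\gamma_{k+2-n}\le\gamma_k$ to show $y^{t_k}_{j+1}\le(1-a_{j+1}\alpha_{k+2-n})\gamma_k+O(1)$, i.e.\ the gap the climbing coordinate must cross is a constant fraction of the current energy, which immediately yields $\tau_k\ge\alpha_k\gamma_k-\beta_k$ with explicitly propagated constants (handling separately the trivial subcases where the bound on $y^{t_k}_{j+1}$ is nonpositive or already $O(1)$). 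Without this induction (or an equivalent argument ruling out ``large energy with uniformly short phases''), your proposal does not yet constitute a proof; the remaining pieces, including the treatment of non-strictly-increasing energy via $K\le T$ or the paper's indicator variables, are routine once the phase-length lemma is in hand.
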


As mentioned in Section~\ref{sec:intro}, Theorem~\ref{thm:fp-rps}
establishes the first class of zero-sum games beyond
the case of diagonal payoff matrices (from~\cite{abernethy2021fast})
for which Fictitious Play has provable $O(\sqrt{T})$ regret,
and by Proposition~\ref{prop:regret-nash}, 
this also guarantees a $O(1/\sqrt{T})$ convergence rate 
(in duality gap)  of the time-averaged iterates to an NE of the game.
The full proof of the theorem is developed
in Appendix~\ref{app:fp}, but we sketch the main ideas below.

\paragraph{Proof Sketch of Theorem~\ref{thm:fp-rps}.}
In addition to the cycling behavior of Lemma~\ref{lem:fp-cycling},
the proof relies on (i) bounds on the energy growth between
phases, and (ii) bounds on the length of each phase. \\

\noindent
\textbf{(i) Bounds on energy growth.}
First, we establish the following two cases
of energy growth:
\begin{restatable}
  {prop}{fpenergygrowth}
  \label{prop:fp-energy-growth}
  For any $t$, define 
  $\Delta \Psi(y^t) := \Psi(y^{t+1}) - \Psi(y^t)$.
  Then:
  \begin{enumerate}[
    label={(\alph*)},
    leftmargin=3em,
    ]
  \item
    If $x^{t} = x^{t+1}$, then $\Delta \Psi(y^t) = 0$.
  \item
    If $x^t \neq x^{t+1}$, then $0 \le \Delta \Psi(y^t) \le a_{\max}$. 
  \end{enumerate}
\end{restatable}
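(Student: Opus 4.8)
The plan is to reduce both parts to a one-step, coordinatewise computation, using the identity $\Psi(y) = \max_{i \in [n]} y_i$ together with the subgradient characterization $x^t \in \partial\Psi(y^t)$ from Proposition~\ref{prop:skew-gradient-descent} (which in fact holds for all $t \ge 0$, since $\partial\Psi(y^0) = \partial\Psi(0) = \Delta_n \ni x^0$). First I would record two elementary facts. Writing $x^{t+1} = e_j$ for the vertex selected by the \eqref{eq:fp-pd} update at step $t+1$, we have $\Psi(y^{t+1}) = y^{t+1}_j$ by definition of the $\argmax$, and $\Psi(y^t) \ge y^t_j$ trivially. Moreover, $x^t \in \partial\Psi(y^t)$ means $\langle x^t, y^t \rangle = \Psi(y^t)$; in particular, whenever $x^t$ is itself a vertex $e_i$ this reads $y^t_i = \Psi(y^t)$.

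For part (a), if $x^t = x^{t+1}$ then $x^t = e_j$ is a vertex (since the FP iterate $x^{t+1}$ always is one), so $\Psi(y^t) = y^t_j$ and $\Psi(y^{t+1}) = y^{t+1}_j$, and therefore
\[
  \Delta\Psi(y^t) \;=\; y^{t+1}_j - y^t_j \;=\; (Ax^t)_j \;=\; (Ae_j)_j \;=\; A_{j,j} \;=\; 0 \,,
\]
using that the diagonal of the skew-symmetric RPS matrix vanishes. For part (b), the lower bound $\Delta\Psi(y^t) \ge 0$ is immediate from Proposition~\ref{prop:skew-gradient-descent}, or directly from convexity of $\Psi$ and $x^t \in \partial\Psi(y^t)$: $\Delta\Psi(y^t) \ge \langle x^t, y^{t+1} - y^t \rangle = \langle x^t, Ax^t \rangle = 0$ since $A$ is skew-symmetric. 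For the upper bound, with $x^{t+1} = e_j$,
\[
  \Delta\Psi(y^t) \;=\; \Psi(y^{t+1}) - \Psi(y^t) \;\le\; y^{t+1}_j - y^t_j \;=\; (Ax^t)_j \;=\; \sum_{k \in [n]} A_{j,k}\, x^t_k \;\le\; a_{\max} \sum_{k \in [n]} x^t_k \;=\; a_{\max} \,,
\]
where the penultimate inequality uses that every entry of an $n$-dimensional RPS matrix is at most $a_{\max}$ (Definition~\ref{def:rps}) and $x^t \in \Delta_n$.

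There is no deep obstacle here: the entire content is the observation that the $i$-th coordinate of $Ae_i$ vanishes (the diagonal of a skew-symmetric matrix is zero), which is precisely what forces the energy to be conserved within a phase. The only points requiring a little care are (i) that the $\argmax$/tiebreaking in the \eqref{eq:fp-pd} update is compatible with $x^t \in \partial\Psi(y^t)$ no matter which rule is used, and (ii) the boundary case $t = 0$, where $x^0$ may be a non-vertex point of $\Delta_n$ but is still a subgradient of $\Psi$ at $y^0 = 0$. Note also that the bound $\Delta\Psi(y^t) \le a_{\max}$ derived above actually holds for \emph{every} $t$; the case split serves only to sharpen it to the equality $\Delta\Psi(y^t) = 0$ when the primal iterate does not move.
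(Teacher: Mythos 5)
Your proof is correct and follows essentially the same route as the paper's: both parts come down to the Fenchel-type identity $\Psi(y^t) = \langle x^t, y^t\rangle$ together with the trivial bound $\Psi(y^t) \ge y^t_j$, the vanishing diagonal of the skew-symmetric matrix, and the entrywise bound $|A_{jk}| \le a_{\max}$. The only (mild and welcome) refinement over the paper's write-up is that by bounding $\Delta\Psi(y^t) \le y^{t+1}_j - y^t_j = (Ax^t)_j$ directly, you avoid assuming $x^t$ is a vertex, which handles the $t=0$ boundary case cleanly.
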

In particular, the proposition implies that
$\Psi$ can only increase
when entering a new phase, 
and the total energy $\Psi(y^{T+1})$ is 
proportional to the number of phases
in which $\Psi$ has strictly increased.  
The proof of the proposition follows
along the lines of the intuition
introduced in Section~\ref{sec:fp-ftrl-overview:bounds}
for the energy growth of
skew-gradient descent. 
\\

\noindent
\textbf{(ii) Bounds on phase length.}
Then, using the cycling behavior of Lemma~\ref{lem:fp-cycling},
we prove that the \textit{length} $\tau_k$
of each Phase $k$ is roughly proportional
to the energy at the start of the phase:

\begin{restatable}{lem}{lemphaselength}
\label{lem:phase-length}
For $k = 1, 2, \dots, K$, let $\gamma_k = \Psi(y^{t_k})$
be the energy at the start of Phase $k$.
Then $\tau_k \ge \alpha_k \cdot \gamma_k - \beta_k$, where
$\alpha_k > 0$ and $\beta_k > 0$ are absolute constants. 
\end{restatable}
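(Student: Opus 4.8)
The plan is to track the dual iterates explicitly within Phase $k$ and show that the phase cannot end until the ``lagging'' coordinates of $y^t$ catch up to the active coordinate $y^t_i$, which takes a number of steps proportional to the energy gap at the start of the phase. Concretely, suppose Phase $k$ begins at time $t_k$ with $x^{t_k} = e_i$, so that $\gamma_k = \Psi(y^{t_k}) = y^{t_k}_i$ (using Proposition~\ref{prop:skew-gradient-descent} and the fact that $y^{t_k}_i \ge \max_{j\ne i} y^{t_k}_j$). While the phase lasts, the velocity is the fixed vector $\Delta y^t = A_i$, whose only nonzero entries are $(A_i)_{i-1} = -a_{i-1}$ and $(A_i)_{i+1} = a_i$ (reading indices mod $n$). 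Thus along the phase, $y^t_i$ stays constant, $y^t_{i+1}$ increases at rate $a_i$, $y^t_{i-1}$ decreases at rate $a_{i-1}$, and every other coordinate is frozen. By Lemma~\ref{lem:fp-cycling}, the phase ends exactly when $y^t_{i+1}$ first reaches $y^t_i = \gamma_k$ (tiebreaks can only occur with the $(i{+}1)$-st coordinate, and the iterate must then move to $e_{i+1}$).

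The next step is to lower-bound the number of steps this takes. After $s$ steps of Phase $k$ we have $y^{t_k+s}_{i+1} = y^{t_k}_{i+1} + s\cdot a_i$, and the phase continues as long as $y^{t_k}_{i+1} + s\cdot a_i < \gamma_k$ (with the boundary case handled by the tiebreaking rule, but in either direction the argument only changes by an additive constant). Hence the phase length satisfies
\begin{equation*}
  \tau_k \;\ge\; \frac{\gamma_k - y^{t_k}_{i+1}}{a_i} - 1 \;\ge\; \frac{\gamma_k - y^{t_k}_{i+1}}{a_{\max}} - 1.
\end{equation*}
It remains to control $y^{t_k}_{i+1}$, the value of the incoming coordinate at the start of the phase. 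This is where the cycling structure is used again: since the iterates cycle $e_{i-1}\to e_i$, the previous phase (Phase $k-1$) had $x^{t_{k-1}} = e_{i-1}$, during which $y^t_i$ was the \emph{increasing} coordinate (rate $a_{i-1}$) and $y^t_{i+1}$ was frozen, while $y^t_{i-1}$ was constant. So $y^{t_k}_{i+1} = y^{t_{k-1}}_{i+1}$, and more importantly one can bound $y^{t_k}_{i+1}$ below $\gamma_k$ by a fixed fraction: roughly, across one full cycle each coordinate spends exactly one phase increasing, one phase decreasing, and $n-2$ phases frozen, and a short bookkeeping argument (or an induction over phases within a single cycle, using Proposition~\ref{prop:fp-energy-growth} to control the at-most-$a_{\max}$ energy jump between phases) shows $\gamma_k - y^{t_k}_{i+1} \ge c\cdot \gamma_k - c'$ for absolute constants $c, c' > 0$ depending only on $n$ and $a_{\min}, a_{\max}$. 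Combining with the displayed inequality gives $\tau_k \ge \alpha_k \gamma_k - \beta_k$ with $\alpha_k = c/a_{\max}$ and $\beta_k = c'/a_{\max} + 1$.

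The main obstacle I expect is the last step: cleanly controlling $y^{t_k}_{i+1}$ relative to $\gamma_k$ uniformly over all phases. The velocity is fixed within a phase but the \emph{starting} configuration of the frozen coordinates is inherited from several phases back, so one has to argue that over a full revolution through the $n$ vertices no coordinate can drift too far from the others — essentially that the spread $\max_j y^t_j - \min_j y^t_j$ grows in a controlled, comparable-to-$\Psi$ way. I would handle this by setting up an invariant over one complete cycle of $n$ phases (tracking, for each coordinate, how long ago it was last the active one) and showing that the gap $\gamma_k - y^{t_k}_{i+1}$ is bounded below by a constant multiple of the total distance the active coordinate has traveled, which is in turn $\Theta(\gamma_k)$ up to additive constants. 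The energy-growth bound of Proposition~\ref{prop:fp-energy-growth}(b) is what keeps the additive errors from accumulating across phases, so it should plug in naturally.
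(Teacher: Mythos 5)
Your plan matches the paper's proof in its two main ingredients: the phase-length lower bound
$\tau_k \ge (\gamma_k - y^{t_k}_{i+1})/a_i$ (modulo an irrelevant $\pm 1$), and the subsequent need to show $\gamma_k - y^{t_k}_{i+1} \gtrsim \gamma_k$ by tracing the incoming coordinate back around the cycle, using the energy-growth bound of Proposition~\ref{prop:fp-energy-growth} to keep additive errors in check. You correctly identify the crux and the correct ingredients, so the approach is essentially the same.

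Where you hedge — ``cleanly controlling $y^{t_k}_{i+1}$ relative to $\gamma_k$'' — is exactly where the paper supplies a specific recursive mechanism you haven't pinned down. The paper argues: coordinate $i{+}1$ is frozen throughout phases $k{+}3{-}n,\dots,k{-}1$, so $y^{t_k}_{i+1}=y^{t_{k+3-n}}_{i+1}$, and during phase $k{+}2{-}n$ (where the active vertex is $e_{i+2}$) this coordinate was actively \emph{decreasing}, so $y^{t_{k+3-n}}_{i+1}=y^{t_{k+2-n}}_{i+1}-a_{i+1}\tau_{k+2-n}$. The lemma is then proved by a plain induction on $k$: the inductive hypothesis gives the lower bound $\tau_{k+2-n}\ge\alpha_{k+2-n}\gamma_{k+2-n}-\beta_{k+2-n}$, which quantifies how far coordinate $i{+}1$ fell during that phase; combined with $y^{t_{k+2-n}}_{i+1}\le\gamma_{k+2-n}+a_{\max}$ (from Proposition~\ref{prop:fp-energy-growth}) and the monotonicity $\gamma_{k+2-n}\le\gamma_k$, this yields $\gamma_k - y^{t_k}_{i+1}\gtrsim\gamma_k$ with the constants recurring multiplicatively. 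The base case covers phases $0,\dots,n$ using that the coordinates of $y^1=Ax^0$ are bounded constants, and that each incoming coordinate has nonincreasing velocity until its turn. So your ``invariant over a complete cycle'' is better replaced by a global induction on the phase index $k$, where the induction hypothesis on $\tau_{k+2-n}$ (a phase \emph{length}) is precisely what controls the incoming coordinate at phase $k$; without this, the additive errors from the tiebreak/energy-growth slack would not obviously stay bounded relative to $\gamma_k$.
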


The proof of the lemma uses the following intuition:
first, assume for simplicity that $x^{t_k} = e_i$
at the start of Phase $k$. 
Then due to the fixed cycling order, we show that 
the phase length $\tau_k$ is bounded below by 
\begin{equation}
  \tau_k 
  \ge 
  \Omega\big(
    y^{t_k}_i - y^{t_k}_{i+1} 
  \big)
  \;=\;
  \Omega\big(
  \Psi(y^{t_k})
  - 
  y^{t_k}_{i+1}\big) \;.
  \label{eq:pl-intuition}
\end{equation}
the cycling order and structure of
RPS matrices further implies that each coordinate of
the dual vector can increase and decrease in exactly 
one phase during every consecutive sequence of $n$ phases. 
In turn, this allows for controlling the
growth of coordinate $(i+1)$ over time, 
which ensures in expression~\eqref{eq:pl-intuition}
that roughly $y^{t_k}_{i+1} \le O(\Psi(y^{t_k}))$. 

\smallskip 

Together, Proposition~\ref{prop:fp-energy-growth}
and Lemma~\ref{lem:phase-length}
establish a quadratic relationship between
the total number of phases and the total energy of the 
dual iterates, which is a similar property to those 
leveraged by both ~\citet{bailey2019fast} and~\citet{abernethy2021fast}. 
To see how such a relationship leads to $O(\sqrt{T})$ regret,
assume for simplicity that $\Psi$ strictly increases between phases, 
and thus $\Psi(y^{t_k}) - \Psi(y^{t_{k-1}}) = \Theta(1)$ for all $k$.
By Proposition~\ref{prop:fp-energy-growth},
this implies that $\Psi(y^{T+1}) \le O(K)$,
where $K$ is the total number of phases in $T$ rounds.
By Lemma~\ref{lem:phase-length}, this also implies that 
$\tau_k \ge \Omega(\Psi(y^{t_k})) \ge \Omega(k)$.
Together, we find:
\begin{equation}
  T = \sum_{k=0}^K \tau_k
  \ge \sum_{k=0}^k \Omega(k) \ge \Omega(K^2)
  \;\implies\;
  K = O(\sqrt{T})
  \;\implies
  \Psi(y^{T+1}) \le O(\sqrt{T}) \;.
  \label{eq:regret-recurrence}
\end{equation}
By Proposition~\ref{prop:regret-energy-fp},
this proves the claimed regret bound. 
Note that $\Psi$ might not be strictly increasing
between each phase, and in the full proof we account for
this behavior. 


\section{Analysis of Gradient Descent on High-Dimensional RPS}
\label{sec:gd}

In this section, we turn toward analyzing Gradient Descent
on high-dimensional RPS matrices. 
Using the dual perspective
from Section~\ref{sec:fp-ftrl-overview},
recall that the iterates of Gradient Descent
with stepsize $\eta > 0$ are given by:
\begin{equation}
  \begin{cases}
    y^{t+1}
      &= y^t + \eta Ax^t \\
      x^{t+1}
      &= \argmax_{x \in \Delta_n} 
        \langle x, y^{t+1} \rangle - \frac{\|x\|^2_2}{2} \;.
  \end{cases}
  \label{eq:gd-pd}
  \tag{GD Primal-Dual}
\end{equation}
Throughout this section, we let
$\{x^t\}$ and $\{y^t\}$ denote these
primal and dual iterates, respectively.

\paragraph{Closed-Form Expressions and Primal-Dual Map.}
As mentioned in Section~\ref{sec:fp-ftrl-overview},
the primal update rule of~\eqref{eq:gd-pd}
may require a projection onto the boundary of $\Delta_n$.
For this, we state in Proposition~\ref{prop:gd-closed}
(derived originally in \cite{bailey2019fast})
a closed-form characterization of the 
primal iterates $x^t$, which 
also leads to a closed-form characterization
of the energy function $\phistar(y^t)$.
To streamline the presentation, we defer
these details to Appendix~\ref{app:gd:details}. 
However, central to our analysis of Gradient Descent
is to identify regions of the dual space $\R^n$ that,
under the update, map to the vertices and (a subset of)
edges of $\Delta_n$. Specifically, we define 
the regions 
$P_i$ and $P_{i\sim (i+1)}$ as follows:
\begin{restatable}{define}{defgdpregions}
  \label{def:gd-p-regions}
  For each $i \in [n]$, let $P_i \subset \R^n$
  and $P_{i\sim(i+1)} \subset \R^n$ be the following sets:
  \begin{align*}
    P_i
    &\;:=\;
    \Big\{
      y \in \R^n :\quad
      y_i - y_j > 1
      \quad\text{for $j \in [n] \setminus \{i\}$}
      \Big\}  \\
    P_{i\sim(i+1)}
    &\;:=\;
      \left\{
      y \in \R^n :\quad
      \begin{aligned}
      |&y_i - y_{i+1}| \le 1 \;\text{and}\\
        \tfrac{1}{2}(&y_i +  y_{i+1}) - y_j > \tfrac{1}{2}
        \quad\text{for $j \in [n] \setminus \{i, i+1\}$}
      \end{aligned}
    \right\} \;.
  \end{align*}
\end{restatable}
\noindent Then, in Appendix~\ref{app:gd:details:pd-map}, we prove
the following relationship: 
\begin{restatable}{prop}{gdpdmap}
  \label{prop:gd-pd-map}
  For any $i\in [n]$: 
  $y^t \in P_i$ if and only if $x^t=e_i$,
  and $y^t \in P_{i\sim(i+1)}$ if and only if
  $\supp(x^t) = \{i, i+1\}$.
\end{restatable}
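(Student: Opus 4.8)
The plan is to work directly from the closed-form characterization of the Gradient Descent projection onto $\Delta_n$ (Proposition~\ref{prop:gd-closed}). Recall that the primal iterate $x^t = \argmax_{x \in \Delta_n} \langle x, y^t\rangle - \tfrac12\|x\|_2^2$ is exactly the Euclidean projection $\Pi_{\Delta_n}(y^t)$ of $y^t$ onto the simplex. The classical description of this projection says that there is a threshold $\theta = \theta(y^t) \in \R$ such that $x^t_j = (y^t_j - \theta)_+$ for every $j \in [n]$, with $\theta$ determined by the normalization $\sum_j (y^t_j - \theta)_+ = 1$. Equivalently, if $S = \supp(x^t)$ then $\theta = \tfrac{1}{|S|}\big(\sum_{j \in S} y^t_j - 1\big)$, and membership of a coordinate $j$ in $S$ is governed by whether $y^t_j > \theta$. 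The whole proof is then a matter of translating the two cases $|S| = 1$ and $S = \{i, i+1\}$ into inequalities on the $y^t_j$, and checking these coincide with the defining inequalities of $P_i$ and $P_{i \sim (i+1)}$.

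\smallskip

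\noindent\textbf{Case $x^t = e_i$.} Here $S = \{i\}$, so the normalization forces $\theta = y^t_i - 1$. The support condition is that coordinate $i$ is active and every other coordinate is inactive, i.e. $y^t_i > \theta$ (automatic since $\theta = y^t_i - 1$) and $y^t_j \le \theta = y^t_i - 1$ for all $j \neq i$. The latter is precisely $y^t_i - y^t_j > 1$ for all $j \neq i$ once we account for the strict-versus-weak convention used in Definition~\ref{def:gd-p-regions} (the projection is still $e_i$ on the boundary of $P_i$, but the region is written with strict inequalities to carve out a full-dimensional set; this boundary bookkeeping is the one place to be slightly careful, and I would handle it by noting the equivalence holds up to a measure-zero set and state the proposition for the open regions as written). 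Conversely, if $y^t \in P_i$, the candidate $\theta = y^t_i - 1$ satisfies $\sum_j (y^t_j - \theta)_+ = (y^t_i - \theta)_+ = 1$ and $(y^t_j - \theta)_+ = 0$ for $j \neq i$, so by uniqueness of the projection $x^t = e_i$.

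\smallskip

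\noindent\textbf{Case $\supp(x^t) = \{i, i+1\}$.} Now $S = \{i, i+1\}$, so $\theta = \tfrac12(y^t_i + y^t_{i+1} - 1) = \tfrac12(y^t_i + y^t_{i+1}) - \tfrac12$. Three conditions must hold: both coordinates active, i.e. $y^t_i > \theta$ and $y^t_{i+1} > \theta$, which rearrange to $y^t_i - y^t_{i+1} > -1$ and $y^t_i - y^t_{i+1} < 1$, i.e. $|y^t_i - y^t_{i+1}| < 1$; and every other coordinate inactive, i.e. $y^t_j \le \theta$ for $j \notin \{i, i+1\}$, which is $\tfrac12(y^t_i + y^t_{i+1}) - y^t_j > \tfrac12$. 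These are exactly the defining inequalities of $P_{i \sim (i+1)}$ (again modulo the weak/strict convention on $|y^t_i - y^t_{i+1}| \le 1$). The converse direction is symmetric: given $y^t \in P_{i \sim (i+1)}$, the stated $\theta$ makes $x^t_j = (y^t_j - \theta)_+$ a valid point of $\Delta_n$ with support $\{i, i+1\}$, hence it is the projection.

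\smallskip

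The only genuine subtlety — and the step I expect to be the main obstacle — is not the algebra but the strict/weak inequality boundary cases: one must confirm that on the relative boundaries of the $P_i$ and $P_{i \sim (i+1)}$ sets the projection map is continuous and the characterization degrades gracefully (e.g. as $|y_i - y_{i+1}| \to 1$ the two-support solution continuously reduces to a one-support vertex solution), so that the partition of $\R^n$ into these regions is consistent. I would dispatch this by invoking continuity of $\Pi_{\Delta_n}$ together with the monotone structure of the threshold $\theta$ as a function of which coordinates are active, and note that since our downstream energy-growth arguments are insensitive to behavior on a measure-zero set, stating the equivalence for the open regions as in Definition~\ref{def:gd-p-regions} suffices. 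Everything else is the routine projection-onto-the-simplex computation, which I would carry out in full in Appendix~\ref{app:gd:details:pd-map}.
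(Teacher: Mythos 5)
Your proof is correct and takes essentially the same approach as the paper: both translate the support condition from the closed-form characterization of the Euclidean projection onto $\Delta_n$ into the defining inequalities of $P_i$ and $P_{i\sim(i+1)}$, the only cosmetic difference being that you use the classical soft-thresholding parameterization $x_j = (y_j-\theta)_+$ directly while the paper reasons through the $\findsupport$ algorithm from Proposition~\ref{prop:gd-closed} (which computes the same threshold). Your explicit flagging of the strict-versus-weak inequality bookkeeping on the boundary faces is in fact more careful than the paper's own treatment, which passes over it silently.
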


\noindent
In other words, if $y^t \in P_i$, then the primal
iterate $x^t$ must be at the vertex $e_i$,
and if $y^t \in P_{i \sim (i+1)}$, then the primal iterate
is on the edge of $\Delta_n$ between the vertices $e_i$ 
and $e_{i+1}$ (and vice versa).

\subsection{$O(\sqrt{T})$ Regret with Large Stepsizes}  
\label{sec:gd:vertex-convergence}

The behavior of Fictitious Play on RPS matrices
established in Section~\ref{sec:fp}, 
together with the shared geometric characterization 
of Fictitious Play and Gradient Descent introduced 
in Section~\ref{sec:fp-ftrl-overview}, 
suggests the following intuition: 
if the primal iterates of Gradient Descent 
eventually reach a \textit{vertex} of $\Delta_n$, 
then we could expect all subsequent primal iterates to demonstrate
a similar cycling behavior as Fictitious Play,
and to show a similar regularity in energy growth and regret.

In this section, we prove in Theorem~\ref{thm:gd-large-stepsize}
that this intuition is indeed correct in the regime of 
\textit{large} constant stepsizes:
for almost all initializations $x^0 \in \Delta_n$,
we prove that when the stepsize $\eta > 0$ is a
\textit{sufficiently large} constant, then Gradient Descent obtains
$O(\sqrt{T})$ regret on every $n$-dimensional RPS matrix.
The proof of this result relies on establishing
that the primal iterates
(i) converge to a vertex of $\Delta_n$, and
(ii) exhibit a cycling behavior that leads to 
patterns of energy growth and phase lengths similar to 
those of Fictitious Play. 
We introduce these components below: 

\paragraph{Fast Convergence to a Vertex.}
First, we prove that using a large enough stepsize
ensures that the primal iterate
reaches a vertex after only a single iteration. 
For this, fixing an RPS matrix $A$, 
for any $x \in \Delta_n$, let $\gamma(x)$ be the constant 
\begin{equation}
  \gamma(x)
  \;:=\;
  \min_{k \neq \ell \in [n]}
  \Big|
  \big(a_{k-1} \cdot x_{k-1} - a_k \cdot x_{k+1}\big)
  - 
  \big(a_{\ell-1} \cdot x_{\ell-1} - a_\ell \cdot x_{\ell+1}\big)
  \Big| \;.
  \label{eq:gamma-main}
\end{equation}
Then the following holds:

\begin{restatable}{lem}{gdlargeinitial}
  \label{lem:gd-large-initial}
  If $x^0 \in \Delta_n$ is such that $\gamma(x^0) > 0$,
  then along one step of~\eqref{eq:gd-pd} with
  stepsize $\eta > 1/\gamma(x^0)$, the iterate $x^1$
  is a vertex of $\Delta_n$.
\end{restatable}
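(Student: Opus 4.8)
The plan is to reduce the claim to the primal--dual characterization of Proposition~\ref{prop:gd-pd-map}: it suffices to exhibit an index $i \in [n]$ with $y^1 \in P_i$, i.e.\ with $y^1_i - y^1_j > 1$ for every $j \neq i$, since then $x^1 = e_i$ is a vertex of $\Delta_n$. Because $y^0 = 0$, the dual update of~\eqref{eq:gd-pd} gives $y^1 = \eta A x^0$, so the entire argument comes down to understanding the coordinates of the vector $A x^0$.

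First I would compute, directly from Definition~\ref{def:rps}, that for each $k \in [n]$ (indices mod $n$),
\[
  (A x^0)_k \;=\; a_{k-1}\, x^0_{k-1} - a_k\, x^0_{k+1}.
\]
Comparing this with~\eqref{eq:gamma-main} identifies $\gamma(x^0) = \min_{k \neq \ell} \bigl| (A x^0)_k - (A x^0)_\ell \bigr|$. Hence the hypothesis $\gamma(x^0) > 0$ is exactly the statement that the $n$ coordinates of $A x^0$ are pairwise distinct, and moreover that any two of them differ by at least $\gamma(x^0)$; in particular $\argmax_{k \in [n]} (A x^0)_k$ is a single well-defined index.

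Next I would set $i := \argmax_{k \in [n]} (A x^0)_k$. For every $j \neq i$ we then have $(A x^0)_i - (A x^0)_j \ge \gamma(x^0)$, so
\[
  y^1_i - y^1_j \;=\; \eta\bigl( (A x^0)_i - (A x^0)_j \bigr) \;\ge\; \eta\,\gamma(x^0) \;>\; 1,
\]
where the last inequality uses $\eta > 1/\gamma(x^0)$. This is precisely the defining condition of $P_i$, so $y^1 \in P_i$, and Proposition~\ref{prop:gd-pd-map} gives $x^1 = e_i$, a vertex of $\Delta_n$.

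There is no serious obstacle here: the lemma is essentially a one-line consequence of the primal--dual map once $\gamma(x^0)$ is recognized as the minimum gap between coordinates of $A x^0$. The only points needing a little care are (a) correctly handling the cyclic indexing in the formula for $(A x^0)_k$, and (b) making sure one lands in a \emph{vertex} region $P_i$ rather than an edge region $P_{i \sim (i+1)}$ — which is automatic, since the strict inequalities $y^1_i - y^1_j > 1$ for all $j \neq i$ are incompatible with the constraint $|y^1_{i'} - y^1_{i'+1}| \le 1$ appearing in the definition of every $P_{i' \sim (i'+1)}$.
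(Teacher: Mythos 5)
Your proof is correct and takes essentially the same approach as the paper: both compute $y^1 = \eta Ax^0$, identify $\gamma(x^0)$ as the minimum pairwise gap among coordinates of $Ax^0$, take $i$ to be the index of the largest coordinate, and use $\eta > 1/\gamma(x^0)$ to conclude $y^1 \in P_i$ via Proposition~\ref{prop:gd-pd-map}. The only cosmetic difference is that you verify $y^1_i - y^1_j > 1$ directly for every $j \neq i$, whereas the paper checks only the gap to the second-largest coordinate (from which the rest follows automatically).
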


To prove the lemma (see Appendix~\ref{app:gd:large:convergence}), 
we show that when $\eta > 1/\gamma(x^0)$,
the dual iterate $y^1 = \eta Ax^0$ must fall in some region $P_i$
(from Definition~\ref{def:gd-p-regions}),
and thus by Proposition~\ref{prop:gd-pd-map}, $x^1 = e_i$.
By definition of $\gamma$, the set of points $x \in \Delta_n$
where $\gamma(x) = 0$ are the solutions to the linear
constraint in~\eqref{eq:gamma-main} and has Lebesgue measure zero
(note also by the definition of RPS matrices and 
Proposition~\ref{prop:sym-nash} 
that every Nash equilibrium $x^*$ of $A$
has $\gamma(x^*) = 0$). 

\paragraph{Cycling, Energy Growth, and Phase Lengths.}
In this large stepsize regime, 
we further establish that the primal iterates 
eventually cycle between vertices of $\Delta_n$
in the same order as in Lemma~\ref{lem:fp-cycling}
for Fictitious Play.
However, between the vertices $e_i$ and $e_{i+1}$, 
the iterates may spend a constant number of steps
on the edge of $\Delta_n$ where $\supp(x) = \{i, i+1\}$
(which by Proposition~\ref{prop:gd-pd-map} means that 
the corresponding dual iterates lie in the region $P_{i\sim(i+1)}$).
This explains the behavior of the primal iterates on $n=4$ unweighted RPS 
in Figure~\ref{fig:GDPrimal}
and is formally captured in Lemma~\ref{lem:gd-cycling-pl-overview},
which we state and prove in Appendix~\ref{app:gd:large}.

Lemma~\ref{lem:gd-cycling-pl-overview}
additionally gives bounds on the energy growth between phases 
(at most an absolute constant) and the length of each phase
(growing proportionally to energy) similar to the analysis of
Fictitious Play from Section~\ref{sec:fp-regret}.
In turn, this leads to a similar worst-case $O(\sqrt{T})$ 
regret bound  for Gradient Descent on high-dimensional RPS matrices:

\begin{restatable}{theorem}{gdrpslargestepsize}
  \label{thm:gd-large-stepsize}
  Let $A$ be an $n$-dimensional RPS matrix.
  Then for nearly all initial distributions $x^0 \in \Delta_n$,
  the following holds:
  letting $\{x^t\}$ be the iterates of running 
  \eqref{eq:gd-pd} on $A$ with
  $\eta > \min\{2/a_{\min}, 1/\gamma(x^0)\}$, 
  then $\reg(T) \le O(\sqrt{T})$.
\end{restatable}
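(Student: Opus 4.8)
The plan is to reduce Theorem~\ref{thm:gd-large-stepsize} to the same ``quadratic relationship between energy and phase count'' argument used for Fictitious Play in Section~\ref{sec:fp-regret}, replacing Lemma~\ref{lem:fp-cycling}, Proposition~\ref{prop:fp-energy-growth}, and Lemma~\ref{lem:phase-length} with their Gradient Descent analogues (bundled into Lemma~\ref{lem:gd-cycling-pl-overview}). First I would invoke Lemma~\ref{lem:gd-large-initial}: since $\eta > 1/\gamma(x^0)$ and $x^0$ lies outside the measure-zero set $\{x : \gamma(x) = 0\}$, the iterate $x^1$ is already a vertex of $\Delta_n$, say $x^1 = e_i$, so $y^1 \in P_i$ by Proposition~\ref{prop:gd-pd-map}. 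This handles the ``nearly all initializations'' qualifier and lets the analysis start from a clean vertex configuration after one step, contributing only $O(1)$ to the regret.

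Next I would establish the cycling-and-phase-length structure. Working in the dual space, the key geometric fact is that whenever $y^t \in P_i$, the energy $\phistar$ is linear in $y_i$ there and the velocity $\Delta y^t = \eta A e_i = \eta A_i$ is the same constant vector as in the Fictitious Play analysis (up to the scalar $\eta$), so by Proposition~\ref{prop:skew-gradient-descent} energy is conserved on such steps: $\Delta\phistar(y^t) = 0$ when $x^t = x^{t+1} = e_i$. The new ingredient relative to FP is the behavior on the edge regions $P_{i\sim(i+1)}$: here $\phistar$ is quadratic, so each such step strictly increases the energy, but only by an $O(1)$ amount (controlled via the smoothness of $\phistar$ restricted to that region, together with the bounded velocity $\|\eta A x^t\| \le \eta a_{\max}\sqrt{2}$). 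The requirement $\eta > 2/a_{\min}$ is precisely what guarantees that the iterates do not stall on an edge — after a bounded number of steps in $P_{i\sim(i+1)}$ the dual iterate must cross into $P_{i+1}$ — so that the transition $e_i \to (\text{edge } i{\sim}(i+1)) \to e_{i+1}$ is forced, giving the same cyclic order $e_i \to e_{i+1} \to \cdots$ as Lemma~\ref{lem:fp-cycling}. Then, exactly as in Lemma~\ref{lem:phase-length}, I would show the length $\tau_k$ of the $k$-th vertex-phase satisfies $\tau_k \ge \alpha_k \gamma_k - \beta_k$ where $\gamma_k = \phistar(y^{t_k})$, using that within a phase at vertex $e_i$ the coordinate gap $y^{t_k}_i - y^{t_k}_{i+1}$ shrinks at constant rate $\Theta(\eta)$, and that over every window of $n$ consecutive phases each coordinate increases in exactly one phase and decreases in exactly one, so $y^{t_k}_{i+1} \le O(\phistar(y^{t_k}))$.

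With these pieces in hand, the regret bound follows the template of equation~\eqref{eq:regret-recurrence}: the total energy $\phistar(y^{T+1})$ is $O(K + O(1)\cdot K) = O(K)$ where $K$ is the number of vertex-phases (each phase adds $O(1)$ energy from its bounded edge-sojourn plus the one phase-transition step), while $\tau_k \ge \Omega(\phistar(y^{t_k})) \ge \Omega(k)$ gives $T = \sum_{k} \tau_k \ge \Omega(K^2)$, hence $K = O(\sqrt{T})$ and $\phistar(y^{T+1}) = O(\sqrt{T})$. Finally, Proposition~\ref{prop:regret-energy-ftrl} with $\phi(x) = \tfrac12\|x\|_2^2$ (so $M = \max_{x\in\Delta_n}\tfrac12\|x\|_2^2 = \tfrac12$) converts this energy bound into $\reg(T) \le \tfrac{2\phistar(y^{T+1})}{\eta} + \tfrac{2M}{\eta} = O(\sqrt{T})$, absorbing the $O(1)$ from the first step.

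I expect the main obstacle to be proving the ``bounded edge-sojourn'' claim — that the iterates spend only $O(1)$ consecutive steps with support $\{i, i+1\}$ before moving to $e_{i+1}$ — and pinning down that this is exactly where $\eta > 2/a_{\min}$ is needed. On an edge region $P_{i\sim(i+1)}$ the dynamics are no longer a fixed-velocity flow: the primal iterate $x^t$ moves along the edge, so $\Delta y^t = \eta A x^t$ varies with $x^t$, and one must show the component of motion pushing $\tfrac12(y_i + y_{i+1})$ relative to the other coordinates (and $y_{i+1}$ past $y_i$) is bounded below, forcing exit from the $|y_i - y_{i+1}| \le 1$ slab within a constant number of steps. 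This requires carefully using the RPS structure of $A$ restricted to the two active coordinates together with the lower bound $\eta a_{\min} > 2$ on the step scale, and it is the step where the large-stepsize hypothesis does real work rather than merely simplifying bookkeeping. A secondary (but more routine) subtlety is handling phases where the energy does not strictly increase, which as in the Fictitious Play proof requires a slightly more careful accounting than the clean $\Theta(1)$-per-phase heuristic in~\eqref{eq:regret-recurrence}.
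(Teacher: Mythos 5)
Your proposal follows essentially the same route as the paper: Lemma~\ref{lem:gd-large-initial} for one-step convergence to a vertex, then the cycling, energy-growth, and phase-length bounds bundled in Lemma~\ref{lem:gd-cycling-pl-overview} (with the at-most-one-step edge sojourn driven by $\eta > 2/a_{\min}$), and finally the same quadratic phase-counting argument with indicator variables as in Theorem~\ref{thm:fp-rps}, converted to regret via Proposition~\ref{prop:regret-energy-ftrl}. The one detail you gloss over is that large stepsize alone does not immediately force the transition $e_i \to$ edge $\to e_{i+1}$: the paper must also rule out the ``skip'' transitions $P_{i\sim(i+1)} \to P_{i+2}$ and $P_{i\sim(i+1)} \to P_{(i+1)\sim(i+2)}$, which can occur during an initial $O(n)$ phases until the energy (and hence the gap $y_{i+1}-y_{i+2}$) is large enough; this is exactly the ``bounded edge-sojourn'' obstacle you correctly flag, handled in Part~(i) of Lemma~\ref{lem:gd-cycling-pl-overview}.
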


The full proof of Theorem~\ref{thm:gd-large-stepsize} is
developed in Appendix~\ref{app:gd:large}
and follows similarly to that of 
Theorem~\ref{thm:fp-rps} for FP.
We remark that while the constraint on $\eta$ may depend on 
the initialization $x^0$, the theorem 
and its proof yield the following moral conclusion:
with large enough constant stepsizes, the dual trajectory 
of Gradient Descent becomes increasingly similar to 
that of Fictitious Play,
and GD thus inherits the same cycling and 
regularity in energy growth that are sufficient 
for establishing $O(\sqrt{T})$ regret. 

\subsection{Behavior of Gradient Descent with Smaller Stepsizes}

In light of the regret guarantees for Gradient Descent
on RPS matrices in the \textit{large} stepsize regime,
a natural question to ask is how the algorithm
(and its regret) behaves using smaller stepsizes. 
Fully answering this question is more challenging,
and our work leaves open a complete characterization
of the behavior of GD on RPS matrices.
However, in Appendix~\ref{app:gd:small}, we
prove several auxiliary results in this direction.
To summarize:

\begin{itemize}[
  leftmargin=2em,
  ]
\item
  In Theorem~\ref{thm:gda-rps-midstepsize},
  we prove a \textit{boundary invariance} property
  that holds for any $\eta > 0$:
  when the energy ever exceeds a game-dependent
  constant value, then every subsequent primal
  iterate of Gradient Descent must lie on
  the boundary of $\Delta_n$.
  In particular, this extends the boundary invariance
  result of~\cite{bailey2019fast} for $2\times2$ games
  to the present $n$-dimensional symmetric setting.
  However, beyond boundary invariance, obtaining
  $O(\sqrt{T})$ regret bounds for Gradient Descent
  using \textit{any} constant stepsize remains open. 
\item
  At the other extreme, we prove in
  Lemma~\ref{lem:gd-smallsteps} that
  with a small time-horizon-dependent stepsize
  (e.g., $\eta = \Theta(1/\sqrt{T})$), even if
  all primal iterates remain \textit{interior},
  we can still only guarantee a worst-case regret
  bound for Gradient Descent scaling like $O(\sqrt{T})$.
  Interestingly, in Appendix~\ref{app:simulations},
  we present simulations showing empirically that
  Gradient Descent with large constant stepsizes
  obtains tighter regret than its time-dependent
  $\eta = \Theta(1/\sqrt{T})$ counterpart.
\end{itemize}
We give the precise statements of these results
and more discussion in Appendix~\ref{app:gd:small}.

\section{Discussion and Future Work}
This paper establishes new $O(\sqrt{T})$ regret guarantees 
(and thus $O(1/\sqrt{T})$ time-averaged convergence to Nash equilibria)
for two \textit{non}-no-regret algorithms 
for symmetric learning in zero-sum games. 
For Fictitious Play, our regret bound for high-dimensional
RPS establishes a new class of matrices for
which Karlin's Conjecture is true, and importantly, 
this result holds
using \textit{any} tiebreaking rule.
Interestingly, we show in Theorem~\ref{thm:tournament-regret}
of Appendix~\ref{app:fp:tournament} that 
on unweighted RPS, and using a specific 
\textit{fixed} tiebreaking rule, FP
obtains only \textit{constant} regret.
Better understanding the interplay between tiebreaking and regret
for other classes of matrices is left as open.

For Gradient Descent, our result establishes the first 
sublinear regret guarantees using constant stepsizes
in high-dimensional zero-sum games, beyond the $2\times2$ case. 
Our analysis leveraged the shared
geometry of the two algorithms that emerges under 
\textit{large constant stepsizes}, 
and we believe this insight may be useful for deriving
regret bounds for both algorithms
in other classes of symmetric zero-sum games. 
Finally, obtaining regret guarantees for other instantiations
of FTRL (beyond Gradient Descent) with constant stepsizes 
remains an important open challenge.

\paragraph{Acknowledgements}
This research was supported by the 
MOE Tier 2 Grant (MOE-T2EP20223-0018), 
the National Research Foundation, Singapore, 
under its QEP2.0 programme (NRF2021-QEP2-02-P05),
the National Research Foundation Singapore and
DSO National Laboratories under the 
AI Singapore Programme (Award Number: AISG2-RP-2020-016).
AW was supported by NSF Award CCF \#2403391.
The authors thank Yang Cai, 
Anas Barakat, and Joseph Sakos for helpful discussions.

\bibliography{references}

\newpage
\setcounter{tocdepth}{2}
\renewcommand*\contentsname{Table of Contents}
\tableofcontents
\section{Additional Related Work}
\label{app:related}

\paragraph{Rock-Paper-Scissors Variants.}
Rock-Paper-Scissor is the canonical example of a symmetric zero-sum game
\citep{von1944theory,weibull1997evolutionary,sandholm2010population},
and variants of RPS matrices have been studied extensively in the context of 
evolutionary game theory \citep{smith1974theory,may1975nonlinear,
szolnoki2014cyclic,mai2018cycles} and population dynamics and 
economics~\citep{semmann2003volunteering,xu2013cycle,cason2014cycles}. 
Recent works have also framed RPS as the prototypical example of a `cyclically 
dominant' zero-sum game defined on tournament 
graphs~\citep{paik2023completely,visomirski2024integrability,griffin2024spatial}.

\paragraph{Fictitious Play.}
Fictitious Play was originally introduced by~\cite{brown1949some,brown1951iterative}. 
Though it may fail to converge in general-sum games,~\citep{shapley1963some,monderer1996a2}, FP has been shown to converge to equilibria in zero-sum games~\citep{robinson1951iterative,harris1998rate}, identical-interest/potential games~\citep{monderer1996fictitious} and even in restricted classes 
of general-sum  games~\citep{berger2005fictitious,
miyasawa1963convergence,sela1999fictitious}. 
Beyond the works studying the convergence \emph{rates} of FP in zero-sum games described in Section~\ref{sec:intro}, the convergence rate of FP has also been studied in potential games~\citep{panageas2024exponential,swenson2017exponential} and general (non zero-sum) games~\citep{brandt2010rate}. Moreover, the simplicity of the FP algorithm has led to numerous applications in multi-agent learning~\citep{heinrich2015fictitious,sayin2022zssg,sayin2022fictitious,baudin2022fictitious,perrin2020fictitious,hofbauer2002global}. 

\paragraph{Fast Regret Minimization in Games.} 
As described in Section \ref{sec:intro}, the black-box 
$O(\sqrt{T})$ regret bound obtained by FTRL with a 
time-dependent $\theta(1/\sqrt{T})$ stepsize holds 
for general (and possibly adversarial) online learning settings.
Many recent works have focused on improving over this 
worst-case regret bound when using FTRL in games settings, 
which in general implies faster convergence 
to various classes of equilibria.
The most widely studied modification to the vanilla FTRL setup is that of `optimism'~\citep{syrgkanis2015fast,rakhlin2013optimization}, where learners update their strategies taking into account their previously encountered payoffs in addition to the payoffs observed in the current timestep. This modification has led to algorithms that achieve faster time-average convergence~\citep{daskalakis2021near,chen2020hedging,anagnostides2022near} and even last-iterate convergence~\citep{daskalakis2018last,cai2024fast} in various game settings. Indeed, optimism can be applied to other algorithms in the literature~\citep{daskalakis2011near,hsieh2021adaptive}, often resulting in improved regret bounds. 

Optimism is far from the only approach in the literature that can obtain sharper regret bounds in games. In line with the constant stepsize regime studied in this paper, several works have been able to obtain good regret bound with \emph{absolute constant} stepsizes by modifying the standard FTRL algorithm. In particular, \cite{bailey2020finite,wibisono2022alternating} study \emph{alternating} variants of FTRL, while~\cite{piliouras2022beyond} introduced a `clairvoyant' version of MWU, both approaches resulting in good regret guarantees in their respective settings.


\section{Symmetric Zero-Sum Games}
\label{app:prelims}

\subsection{Convergence of No-Regret Learning 
to Nash in Two-Player Zero-Sum Games}
\label{app:prelims:nashregret}

Recall the \textit{duality gap} 
$\mathrm{DG} \colon \Delta_m \times \Delta_n \to \R$ 
for the game $A$ is defined by, 
for all $(x_1, x_2) \in \Delta_m \times \Delta_n$:
\begin{equation*}
\mathrm{DG}(x_1, x_2) 
\;:=\;
\max_{x_1' \in \Delta_m} 
\langle x_1', A x_2 \rangle
- 
\min_{x_2' \in \Delta_n}
\langle x_1, A x_2' \rangle \;.
\end{equation*}
Note by construction, $\mathrm{DG}(x_1,x_2) \ge 0$, 
and $\mathrm{DG}(x_1,x_2) = 0$ if and only if
$(x_1,x_2)$ is an NE for the game $A$.
Therefore, we can use the duality gap as a 
measure of convergence to NE.

\regretnash*
\begin{proof}
  By definition of $\reg(T) = \reg_1(T) + \reg_2(T)$ 
  from Section~\ref{sec:prelims:online}, we have
  \begin{equation*}
    \reg(T) 
    \;=\;
    \max_{x_1 \in \Delta_m} 
    \sum_{t=0}^T
    \langle x_1, A x^t_2 \rangle
    - 
    \min_{x_2 \in \Delta_n}
    \sum_{t=0}^T
    \langle x^t_1, A x_2 \rangle \;.
  \end{equation*}
  Then recalling that 
  $\tilde x_1^T := (\sum_{t=0}^T x^t_1)/T$ and 
  $\tilde x_2^T := (\sum_{t=0}^T x^t_2)/T$,
  observe that 
  \begin{equation*}
    \mathrm{DG}(\tilde x^T_1, \tilde x^T_2) 
    \;=\;
    \max_{x_1 \in \Delta_m} 
    \langle x_1, A \tilde x^T_2 \rangle
    - 
    \min_{x_2 \in \Delta_m} 
    \langle \tilde x^T_1, A x_2 \rangle
    \;=\;
    \frac{\reg(T)}{T}\;.
  \end{equation*}
  Since $\reg(T) = o(T)$ by assumption, the average iterate 
  $(\tilde x^T_1, \tilde x^T_2)$ converges to an NE of $A$ 
  in duality gap at a rate of $\reg(T)/T = o(1)$.
\end{proof}

\subsection{Property of Nash Equilibria in Symmetric Zero-Sum Games}
\label{app:prelims:interiornash}

\symnash*

\begin{proof}
    Let $x^*$ be a Nash equilibrium for $A$. By definition of Nash, $Ax^* = c\cdot\mathbf{1}$ is a constant vector for some $c\in\R$.
    Since $A$ is skew-symmetric, this implies that
    $0=\langle x^*, A x^*\rangle = \langle c x^*, \mathbf{1} \rangle= c$, 
    so $Ax^*=0$.
\end{proof}

\subsection{Existence of Interior Equilibrium for High-Dimensional RPS}
\label{app:prelims:rps-interior}

\rpsinteriorne*

\begin{proof}
    Let $A$ be an $n$-dimensional RPS matrix. 
    At a Nash equilibrium, we have from 
    Proposition \ref{prop:sym-nash} that $(Ax)_i = 0$ for all $i \in [n]$. 
    This results in a system of $n$ linear equations of the form:
    \begin{equation*}
      a_{i-1} x_{i-1(\mathrm{mod}\ n)} - a_{i} x_{i+1(\mathrm{mod}\ n)} = 0
    \end{equation*}
    Moreover, $x$ lies in the simplex, so $\sum_{i \in [n]} x_i = 1$. 
    Thus, we have a system of $n+1$ linear equations and $n$ variables. 
    In such a system, there must always be a solution where all $x_i > 0$. 
    Indeed, due to the fact that all $a_{i-1}, a_{i} > 0$ and $x_i\geq 0$ by definition, if we set some $x_i = 0$ then it follows that $x_i = 0$ for all $i \in [n]$, which violates the simplex constraint. 
    Since a solution exists by consistency of the linear system, 
    the solution $x$ must be such that $x_i > 0$, implying it is interior.    
\end{proof}

Note that when $n$ is odd, the interior Nash equilibrium is unique,
and when $n$ is even, there exists a continuum of interior Nash Equilibria (see \cite[Chapter 9]{sandholm2010population}).

\section{Dual Dynamics of Fictitious Play and Gradient Descent}
\label{app:fp-ftrl-overview}

\subsection{Properties of Conjugate Functions}

\begin{proposition}
  \label{prop:phi-Q}
  Let $\phi : \Delta_n \to \R$ be a strictly convex regularizer.
  Let $\phi^*: \R^n \to \R$
  and $Q: \R^n \to \Delta_n$ be the functions given by
  \begin{align*}
    \phi^*(y)
    &\;=\;
      \max_{x \in \Delta_n}\;
      \langle x, y\rangle - \phi(x) \\
    Q(y)
    &\;=\;
      \argmax_{x \in \Delta_n}\;
      \langle x, y\rangle - \phi(x) \;.
  \end{align*}
  Then the following properties hold:
  \begin{enumerate}[label={(\roman*)}]
  \item
    $\phi^*$ is convex and continuously differentiable,
    and $\nabla \phi^*(y) = Q(y)$ for all $y \in \R^n$. 
  \item
    The map $Q$ is surjective: for every $x \in \Delta_n$,
    there exists some $y \in \R^n$ such that $Q(y) = x$. 
  \end{enumerate}
\end{proposition}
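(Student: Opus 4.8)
The plan is to establish both parts using standard convex-analytic facts about the Fenchel conjugate, adapting them to the constrained domain $\Delta_n$. I would first recast the setup cleanly: let $\bar\phi : \R^n \to \R \cup \{+\infty\}$ be the extension of $\phi$ that equals $\phi(x)$ for $x \in \Delta_n$ and $+\infty$ otherwise. Then $\bar\phi$ is a proper, lower semicontinuous, strictly convex function with compact effective domain $\Delta_n$, and $\phi^*$ as defined in the statement is exactly the Fenchel conjugate $\bar\phi^*(y) = \sup_{x \in \R^n} \langle x, y\rangle - \bar\phi(x)$. Convexity of $\phi^*$ is then immediate, since it is a pointwise supremum of the affine functions $y \mapsto \langle x, y\rangle - \phi(x)$ over $x \in \Delta_n$; and $\phi^*$ is finite everywhere because $\Delta_n$ is compact and $\langle x, y\rangle - \phi(x)$ is continuous in $x$, so the supremum is attained.

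For part~(i), the key point is differentiability. Since $\Delta_n$ is compact and $\bar\phi$ is strictly convex and continuous on $\Delta_n$, for each $y$ the maximizer $Q(y) = \argmax_{x \in \Delta_n} \langle x, y\rangle - \phi(x)$ exists and is \emph{unique} (strict concavity of the objective in $x$); this makes $Q$ a well-defined single-valued map $\R^n \to \Delta_n$. The standard Danskin/envelope argument then gives $\nabla \phi^*(y) = Q(y)$: one shows $Q(y) \in \partial \phi^*(y)$ directly from the definition (for any $z$, $\phi^*(z) \ge \langle Q(y), z\rangle - \phi(Q(y)) = \phi^*(y) + \langle Q(y), z - y\rangle$), and since the subgradient set $\partial \phi^*(y)$ is a singleton exactly when $\phi^*$ is differentiable at $y$, it suffices to argue the subgradient is unique — which follows because $\bar\phi$ being strictly convex makes $\phi^* = \bar\phi^*$ essentially smooth on the interior of its domain, and here $\dom \phi^* = \R^n$. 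Alternatively, one can cite the classical result (e.g., Rockafellar) that the conjugate of a strictly convex function is differentiable on the interior of its domain. Continuous differentiability then follows since $\nabla \phi^* = Q$ and $Q$ is continuous (the maximizer of a continuous, strictly concave objective depending continuously on the parameter $y$ varies continuously — a maximum theorem argument).

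For part~(ii), surjectivity of $Q$: given any $x_0 \in \Delta_n$, I want a $y$ with $Q(y) = x_0$. The natural choice is to take $y \in \partial \bar\phi(x_0)$, i.e., a subgradient of $\bar\phi$ at $x_0$; such a $y$ exists because $x_0 \in \Delta_n = \dom \bar\phi$ and $\bar\phi$ is proper convex (nonemptiness of the subdifferential on the relative interior is automatic, and on the boundary of $\Delta_n$ one gets subgradients including normal-cone directions). By the Fenchel–Young equality, $y \in \partial \bar\phi(x_0)$ is equivalent to $\phi^*(y) = \langle x_0, y\rangle - \bar\phi(x_0)$, which says precisely that $x_0$ attains the maximum defining $\phi^*(y)$; by uniqueness of the maximizer, $Q(y) = x_0$. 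The one subtlety to handle carefully is subgradient existence at boundary points of $\Delta_n$ — I would note that $\bar\phi$ restricted to the affine hull of $\Delta_n$ has full-dimensional domain there, so $\partial\bar\phi(x_0)$ is nonempty for every $x_0 \in \Delta_n$, possibly after adding a normal vector to the affine hull.

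The main obstacle I anticipate is being careful about the constrained domain: the cleanest statements in textbooks (conjugate of a strictly convex function is differentiable, subdifferential is nonempty) are usually phrased for functions finite on an open set, whereas here $\bar\phi$ is $+\infty$ outside $\Delta_n$ and $\Delta_n$ is not full-dimensional in $\R^n$. The fix is routine — work relative to $\mathrm{aff}(\Delta_n)$, or invoke that compactness of $\Delta_n$ forces $\dom \phi^* = \R^n$ so that essential smoothness of $\phi^*$ holds globally — but it requires stating the right version of each classical fact rather than the most familiar one.
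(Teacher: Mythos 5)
Your plan for part~(i) --- extend $\phi$ to $\bar\phi$ with value $+\infty$ off $\Delta_n$, identify $\phi^*$ with the Fenchel conjugate $\bar\phi^*$, get convexity as a pointwise supremum of affine maps, and deduce differentiability from essential strict convexity of $\bar\phi$ --- is sound, and is considerably more explicit than the paper, which simply cites standard references for this claim. The direct check that $Q(y)\in\partial\phi^*(y)$ is exactly the right first step.

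For part~(ii) the overall idea (choose $y$ in the subdifferential of $\bar\phi$ at the target $x_0$, invoke Fenchel--Young equality to get $\phi^*(y)=\langle x_0,y\rangle-\phi(x_0)$, and conclude $Q(y)=x_0$ by uniqueness of the maximizer) is exactly what the paper does; the paper just sets $y=\nabla\phi(x_0)$ outright and verifies $\nabla f_y(x_0)=0$, which makes $x_0$ the global maximizer of the strictly concave $f_y$. The difference is where the two arguments hide the same assumption, and this is where your sketch has a genuine gap. Your assertion that $\partial\bar\phi(x_0)$ is nonempty at boundary points of $\Delta_n$ because ``one gets subgradients including normal-cone directions'' is not automatic for a merely strictly convex, continuous $\phi$: for instance $\phi(x_1,x_2)=-\sqrt{x_1 x_2}$ on $\Delta_2$ is strictly convex and continuous, yet $\partial\bar\phi((0,1))=\varnothing$ because the one-sided slope blows up to $-\infty$, so $(0,1)$ is not in the range of $Q$ and surjectivity actually \emph{fails}. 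Normal-cone contributions $N_{\Delta_n}(x_0)$ only enter once one already has a subgradient to add them to. Your proposed fallback --- ``$\mathrm{dom}\,\phi^*=\R^n$ so essential smoothness of $\phi^*$ holds globally'' --- does not close this gap either: the essential smoothness/essential strict convexity duality gives that the range of $\nabla\phi^*$ is $\mathrm{dom}\,\partial\bar\phi$, which can be a strict subset of $\Delta_n$. What actually makes (ii) true, and what the paper uses tacitly by writing $y=\nabla\phi(x)$, is that $\phi$ is differentiable on an open neighborhood of $\Delta_n$; then $\nabla\phi(x_0)$ is a subgradient of $\bar\phi$ at every $x_0\in\Delta_n$, and your Fenchel--Young argument goes through verbatim. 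You should state that differentiability hypothesis explicitly rather than trying to dodge it with extended-function generalities --- it holds for the $\ell_2$ regularizer $\phi=\tfrac12\|\cdot\|_2^2$ that the paper actually uses.
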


\begin{proof}
  Claim (i) follows from standard arguments of conjugate
  functions and using the
  strict convexity of $\phi^*$ over $\Delta_n$
  (see e.g., \citet{boyd2004convex}, \citet{shalev2012online},
  and \citet{vandenberhge2022}).

  For Claim (ii), fix $x \in \Delta_n$, and let
  $y = \nabla \phi(x) \in \R^n$.
  We will show that $Q(y) = x$,
  which by the maximizing principle is equivalent to showing that
  $\phi^*(y) = \langle x, y \rangle - \phi(x)$.
  For this, let $f_y: \Delta_n \to \R$ be the function
  $f_y(x') = \langle x', y \rangle - \phi(x')$ for $x' \in \Delta_n$. 
  Observe by the strict convexity of $\phi$
  that $f_y$ is strictly concave. 
  Moreover, since we defined $y = \nabla \phi(x)$,
  then $\nabla f_y(x) = y - \nabla \phi(x) = 0$.
  Thus by concavity, $x \in \Delta_n$ is the unique maximizer of $f_y$,
  and $\max_{x' \in\Delta_n} f_y(x') = \langle x, y \rangle - \phi(x)$. 
  Toegether, we have
  \begin{equation*}
    \phi^*(y)
    \;=\;
    \max_{x' \in \Delta_n}
    \langle x', y \rangle - \phi(x')
    \;=\;
    \max_{x' \in \Delta_n} f_y(x')
    \;=\;
    \langle x, y \rangle - \phi(x) \;,
  \end{equation*}
  which implies that $Q(y) = x$, as desired. 
\end{proof}

Note that Claim (ii) of Proposition~\ref{prop:phi-Q} is
equivalent to saying that
$\nabla \phi^*(\nabla \phi(x)) = x$
for every $x \in \Delta_n$.
However, $\nabla \phi^*$ is in general
not the inverse of $\nabla \phi$,
as $Q(y) = \nabla \phi^* (y)$ is not necessarily injective. 
For example, when $\phi = (\|\cdot\|^2_2)/2$ as in Gradient Descent,
the preimage under $\nabla \phi^*$ of any $x$ on the
boundary of $\Delta_n$ may consist of multiple $y \in \R^n$.
On the other hand, this inverse property does hold
for the case of Legendre regularizers (such as negative entropy)
as studied by~\cite{wibisono2022alternating}. 

\subsection{Properties of Bregman Divergences}

\begin{definition}[Bregman Divergence]
  \label{def:bregman}
  Let $f: \R^n \to \R$ be a differentiable function.
  Then $D_f$ is the Bregman Divergence of $f$,
  where for all $x, x' \in \R^n$:
  \begin{equation*}
    D_f(x', x) \;=\;
    f(x') - f(x) - \langle \nabla f(x), x'-x\rangle \;.
  \end{equation*}
\end{definition}

Note that when $f$ is convex, we have $D_f(x, x') \ge 0$ for all $x, x' \in \R^n$. 
We recall that $D_f$ satisfies the following
\textit{three-point identity}
(see e.g., \cite{wibisono2022alternating}). 

\begin{proposition}
  \label{prop:bregman-3point}
  Let $f: \R^n \to \R$ be a differentiable function.
  Then for any $w, x, y, z \in \R^n$:
  Then for any $x, y, z \in \R^n$:
  \begin{equation*}
    \langle \nabla f(z) - \nabla f(y), x - z \rangle
    \;=\;
    D_f(x, y) - D_f(x, z) - D_f(z, y)\;.
  \end{equation*}
\end{proposition}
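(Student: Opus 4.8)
The plan is to prove Proposition~\ref{prop:bregman-3point} by direct expansion: there is no clever idea needed here, only a careful bookkeeping of terms. I would start by writing out each of the three Bregman divergences appearing on the right-hand side using Definition~\ref{def:bregman}, namely
\begin{align*}
  D_f(x,y) &= f(x) - f(y) - \langle \nabla f(y), x-y\rangle, \\
  D_f(x,z) &= f(x) - f(z) - \langle \nabla f(z), x-z\rangle, \\
  D_f(z,y) &= f(z) - f(y) - \langle \nabla f(y), z-y\rangle.
\end{align*}
Then I would form the combination $D_f(x,y) - D_f(x,z) - D_f(z,y)$ and collect terms.

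The next step is to observe that all of the scalar function evaluations cancel: the two copies of $f(x)$ cancel, the $\pm f(y)$ terms cancel, and the $\pm f(z)$ terms cancel. What remains is purely the inner-product terms,
\[
  -\langle \nabla f(y), x-y\rangle + \langle \nabla f(z), x-z\rangle + \langle \nabla f(y), z-y\rangle.
\]
Grouping the two terms containing $\nabla f(y)$ and using linearity of $\langle\cdot,\cdot\rangle$ gives $\langle \nabla f(y), (z-y)-(x-y)\rangle = \langle \nabla f(y), z-x\rangle = -\langle \nabla f(y), x-z\rangle$. Adding back the $\nabla f(z)$ term yields $\langle \nabla f(z), x-z\rangle - \langle \nabla f(y), x-z\rangle = \langle \nabla f(z) - \nabla f(y), x-z\rangle$, which is exactly the left-hand side.

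There is no real obstacle: the identity holds for any differentiable $f$ with no convexity needed, and the ``hard part'' is merely keeping the signs of the six inner-product contributions straight during the cancellation. I would present the argument as a single displayed chain of equalities rather than as separate lemmas. (I note the statement as typeset quantifies over a spurious extra variable $w$ and repeats the phrase ``for any $x,y,z$''; the proof only uses $x,y,z$, and I would silently work with those three points.)
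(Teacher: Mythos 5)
Your proof is correct: the direct expansion of the three Bregman divergences via Definition~\ref{def:bregman} and the cancellation you describe indeed yields $\langle \nabla f(z) - \nabla f(y), x - z\rangle$, with no convexity required. The paper itself gives no proof (it simply cites prior work for this standard three-point identity), and your calculation is exactly the intended elementary argument; your remark about the spurious variable $w$ in the statement is also apt.
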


\subsection{Energy-Based Regret Bounds for Fictitious Play and FTRL}
\label{app:fp-ftrl:energyregret}
\propregretenergyfp*

\begin{proof}
  Recall from expression~\eqref{eq:sym-regret} that
  $\reg(t) = 2 \cdot \max_{x \in \Delta_n} \sum_{t=0}^T \langle x, Ax^t\rangle$.
  Moreover, we have by definition of~\eqref{eq:fp-pd} that
  $y^{t+1} = \sum_{k=0}^t Ax^k$ for each $t$,
  which means $y^{T+1} = \sum_{k=0}^T Ax^k$. 
  Thus we can write
  \begin{equation*}
    \reg(T)
    \;=\;
    2 \cdot \max_{x \in \Delta_n}
    \langle x, y^{T+1} \rangle
    \;=\;
    2 \cdot \Psi(y^{T+1}) \;,
  \end{equation*}
  where the second equality follows from the
  definition of $\Psi$ from expression~\eqref{eq:energy-defs}.
\end{proof}

\propregretenergyftrl*
  
\begin{proof}
  The proof of the proposition follows similarly
  to~\citet[Theorem C.2]{wibisono2022alternating},
  but specialized to the present symmetric learning setting. 
  For this, recall by definition of~\eqref{eq:ftrl}
  and~\eqref{eq:payoff-vector} that the
  primal and dual iterates evolve as
  \begin{align*}
    y^{t+1}
    &\;=\;
      y^t + \eta Ax^t \\
    x^{t+1}
    &\;=\;
      \argmax_{x \in \Delta_n} \langle x, y^{t+1}\rangle
    - \phi(x) \;,
  \end{align*}
  where $y^0 = 0$ is the zero vector. 
  By claim (i) of Proposition~\ref{prop:phi-Q}, we then have for each $t$
  that $x^t = \nabla \phi^* (y^t)$.
  Now observe by definition of $\reg(T)$ from expression~\eqref{eq:sym-regret}
  and by the skew-symmetry of $A$, we can write
  \begin{equation}
    \reg(T)
    \;=\;
    2 \cdot
    \max_{x' \in \Delta_n}
    \sum_{t=0}^T
    \langle x', Ax^t \rangle 
    \;=\;
    2 \cdot
    \max_{x' \in \Delta_n}
    \sum_{t=0}^T
    \langle x' - x^t, Ax^t \rangle \;.
    \label{eq:reg-2}
  \end{equation}
  Our goal will then be to derive a uniform upper bound on
  $\sum_{t=0}^T
  \langle x' - x^t, Ax^t \rangle$ over all $x' \in \Delta^n$.
  For this, fix some $x \in \Delta_n$, and let $y \in \R^n$ be
  a vector satisfying $\nabla \phi^*(y) = x$
  and $\phi^*(y) = \langle x, y\rangle - \phi(x)$,
  which we know must exist from claim (ii) of
  Proposition~\ref{prop:phi-Q}.
  Then at each time $t$, we use the three-point identity
  of Bregman divergences (Proposition~\ref{prop:bregman-3point})
  and the fact that $y^{t+1} - y^t = \eta Ax^t$ to write
  \begin{align*}
    \langle x - x^t, Ax^t \rangle
    &\;=\;
      \frac{1}{\eta}
      \big\langle
      \nabla \phi^*(y) - \nabla \phi^*(y^t), y^{t+1} - y^t
      \big\rangle \\
    &\;=\;
      \frac{1}{\eta}
      \big(
      D_{\phi^*}(y^{t+1},  y^t)
      + D_{\phi^*}(y^{t}, y)
      - D_{\phi^*}(y^{t+1}, y)
      \big) \;.
  \end{align*}
  Then summing over all $t$ and telescoping, we find
  \begin{align}
    \sum_{t=0}^T
    \langle
    x - x^t, Ax^t
    \rangle
    &\;=\;
      \frac{1}{\eta}
      \sum_{t=0}^T
      D_{\phi^*} (y^{t+1}, y^{t})
      +
      \frac{1}{\eta}
      \sum_{t=0}^T
      \big(
      D_{\phi^*}(y^{t}, y)
      - D_{\phi^*}(y^{t+1}, y)
      \big) \nonumber  \\
    &\;=\;
      \frac{1}{\eta}
      \sum_{t=0}^T
      D_{\phi^*} (y^{t+1}, y^{t})
      +
      \frac{1}{\eta}
      \big(
      D_{\phi^*} (y^0, y)
      -
      D_{\phi^*} (y^{T+1}, y)
      \big) \nonumber \\
    &\;\le\;
      \frac{1}{\eta}
      \sum_{t=0}^T
      D_{\phi^*} (y^{t+1}, y^{t})
      +
      \frac{1}{\eta}
      D_{\phi^*} (y^0, y) \;,
      \label{eq:re1}
  \end{align}
  where the final inequality comes
  from the non-negative of
  $D_{\phi^*}(y^{T+1}, y)$ given that $\phi^*$ is convex. 
  Observe further that we can write
  \begin{align}
    D_{\phi^*}(y^0, y) 
    &\;=\;
      \phi^*(y^0) - \phi^*(y)
      - \langle \nabla \phi^*(y), y^0 - y \rangle
      \nonumber \\
    &\;=\;
      \phi^*(y^0) - (\langle x, y\rangle - \phi(x))
      -
      \langle x, y^0 - y \rangle
      \nonumber \\
    &\;=\;
      \phi^*(y^0) + \phi(x)
      - \langle x, y \rangle
      + \langle x, y \rangle
      - \langle x, y^0 \rangle
      \nonumber  \\
    &\;=\;
      \phi^*(y^0) + \phi(x) \;.
      \label{eq:re2}
  \end{align}
  Here, we use in the second equality that
  $\phi^*(y) = \langle x,y\rangle - \phi(x)$
  and $\nabla \phi^*(y) = x$
  by definition of $x$ and $y$,
  and in the last equality, we use the fact
  that $y^0 = 0$. 
  
  Now observe also for each $t$ that 
  \begin{align}
    D_{\phi^*} (y^{t+1}, y^{t})
    &\;=\;
      \phi^*(y^{t+1}) - \phi^*(y^t)
      -
      \langle \nabla \phi^*(y^t), y^{t+1} - y^t \rangle
      \nonumber \\
    &\;=\;
      \phi^*(y^{t+1}) - \phi^*(y^t)
      - \eta \langle x^t, Ax^t \rangle
      \nonumber \\
    &\;=\;
      \phi^*(y^{t+1}) - \phi^*(y^t) \;,
      \label{eq:re3}
  \end{align}
  where the final two equalities follow by definition
  of $x^t$ and the dual update rule, and
  by the skew-symmetry of $A$. 
  Then substituting expressions~\eqref{eq:re2}
  and~\eqref{eq:re3} back into~\eqref{eq:re1}
  and summing, we find 
  \begin{align}
    \sum_{t=0}^T \langle x - x^t, Ax^t \rangle
    &\;\le\;
      \frac{1}{\eta}
      \sum_{t=0}^T
      \left(\phi^*(y^{t+1}) - \phi^*(y^t)\right)
      +
      \frac{\phi^*(y^0) + \phi(x)}{\eta} \\
   &\;=\;
      \frac{\phi^*(y^{T+1})- \phi^*(y^0)}{\eta}
      + \frac{\phi^*(y^0) + \phi(x)}{\eta} \\
    &\;=\;
      \frac{\phi^*(y^{T+1})}{\eta}
      + \frac{\phi(x)}{\eta}
  \end{align} 
  Maximizing over all $x \in \Delta^n$,
  substituting the inequality into~\eqref{eq:reg-2},
  and recalling the definition of $M$ from
  the statement of the proposition then
  yields the desired claim.
\end{proof}

\subsection{Dual Dynamics as Skew-Gradient Descent}
\label{app:skew-gradient}

\propskewgradupdate*

\begin{proof}
  Recall for a convex function $H : \R^n \to \R$, 
  that its subgradient set  at $y \in \R^n$
  is given by 
  \begin{equation*}
  \partial H(y)
  \;=\;
  \{ g \in \R^n : \forall z \in \R^n, 
  H(z) \ge H(y) + \langle g, z - y \rangle
  \} \;.
  \end{equation*}
  For Fictitious Play, the subgradient
  set of the energy function 
  $\Psi(y) = \max_{x \in \Delta_n} \langle x, y \rangle$
  is the set of maximizers 
  $\partial \Psi(y) = \argmax_{x \in \Delta_n} \langle x, y\rangle$. 
  Thus by the definition of Fictitious Play from~\eqref{eq:new-primal}, 
  $x^{t} \in \partial \Psi(y^t)$ for all $t \ge 1$.
  For FTRL with strictly convex regularizer $\phi$, 
  Proposition~\ref{prop:phi-Q} implies for all $t \ge 1$ that
  $x^t = \nabla \phistar(y^t)$, and thus 
  $x^t \in \partial \phistar(y^t)$ by definition. 

  Letting $H$ be the energy function
  for either Fictitious Play or FTRL and $\{x^t\}$ and $\{y^t\}$ 
  the corresponding iterates, then it follows
  from~\eqref{eq:payoff-vector} that 
  \begin{equation*}
    y^{t+1} 
    \;=\; 
    y^t + \eta A x^t 
    \;=\;
    y^t + \eta A \partial H(y^t) \;,
  \end{equation*}
  as claimed. For the second statement, observe 
  by definition of the subgradient set of $H$ and the fact that 
  $x^t \in \partial H(y^t)$, 
  \begin{equation*}
    H(y^{t+1}) - H(y^t) 
    \;\ge\;
    \langle
    x^t, y^{t+1} - y^t
    \rangle
    \;=\;
    \eta
    \langle
    x^t, A x^t
    \rangle
    \;=\;
    0 \;,
  \end{equation*}  
  where the final equality follows by the skew-symmetry of $A$.
\end{proof}

\paragraph{Skew-gradient flow.}
The continuous-time limit (i.e., the limit as $\eta \to 0$) 
of the skew-gradient descent algorithm~\eqref{eq:skew-gradient} 
is the following skew-gradient flow dynamics:
\begin{align*}
    \dot Y_t = A \nabla H(Y_t).
\end{align*}
Here, we assume $H$ is differentiable for simplicity.
Since $A$ is skew-symmetric, the vector field $A \nabla H(Y_t)$ 
is orthogonal to the level set of $H$, and thus the dynamics 
conserves the energy function $H$. Concretely, we can compute:
\begin{align*}
    \frac{d}{dt} H(Y_t) 
    \;=\;
    \langle \nabla H(Y_t), \dot Y_t \rangle
    \;=\;
    \langle \nabla H(Y_t), A \nabla H(Y_t) \rangle 
    \;=\; 0
\end{align*}
so $H(Y_t) = H(Y_0)$ for all $t \ge 0$.

We note that for a zero-sum game with a general payoff matrix, 
it is the  \textit{joint strategy} of the two players in the 
dual space that becomes a skew-gradient flow in continuous time
(see e.g.,~\citet[Section~3.1]{wibisono2022alternating}).
In the present setting of symmetric learning with a 
skew-symmetric payoff matrix, the strategy of each player 
(which is the same for both players) itself follows the 
skew-gradient flow in the dual space in continuous time,
as discussed above.

\section{Worst-Case Regret Bound for FP on High-Dimensional RPS}
\label{app:fp}

In this section, we develop the proof of Theorem~\ref{thm:fp-rps},
which shows Fictitious Play obtains worst-case $O(\sqrt{T})$
regret on every $n$-dimensional RPS matrix. 

\subsection{Cycling Behavior}

We begin by establishing the primal cycling behavior of Fictitious Play:

\fpcyclinglemma*

For this, we first state prove the following proposition,
which characterizes the trajectory of the dual iterates $\{y^t\}$
for a sequence of consecutive primal iterates all 
at the same vertex $e_i$:

\begin{restatable}{prop}{fpcyclingprop}
  \label{prop:fp-rps-structure}
  Suppose at time $t$ that $x^t = e_i$ and 
  $y^t_i \ge y^t_{i-1} > y^t_j$
  for all other $j\in [n] \setminus\{i, i+1\}$.
  Let $\tau := \Big\lceil \frac{y^t_i - y^{t}_{i+1}}{a_i}\Big\rceil$.
  Then $y^{t+s} = y^t + s\cdot A_i$ for all $1 \le s \le \tau$.
\end{restatable}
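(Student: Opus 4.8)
The plan is to argue by induction on $s$ that the primal iterate stays at vertex $e_i$ for all of the first $\tau$ steps after time $t$, so that the dual velocity is the constant vector $\Delta y = A_i$ throughout. Recall from the structure of RPS matrices that the column $A_i$ has exactly two nonzero entries: $(A_i)_{i+1} = a_i > 0$ and $(A_i)_{i-1} = -a_{i-1} < 0$ (all other entries zero), and in particular $(A_i)_i = 0$. Thus if $x^{t+s-1} = e_i$ then $y^{t+s} = y^{t+s-1} + A_i$, and coordinate $i$ is frozen: $y^{t+s}_i = y^t_i$. The key claim in the inductive step is that, as long as $s \le \tau$, we still have $y^{t+s}_i \ge y^{t+s}_j$ for all $j \ne i$, so that by the \eqref{eq:fp-pd} update rule (and recalling the tiebreaking remark, Remark~\ref{remark:fp-tiebreak}, noting that when $i$ is among the argmax we may take $x^{t+s} = e_i$ — though here I should be careful, see below) the primal iterate remains $e_i$, which closes the induction and gives $y^{t+s} = y^t + s A_i$.

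To verify the claim I would track each coordinate under the fixed velocity $A_i$. Coordinate $i$ is constant at $y^t_i$. Coordinate $i+1$ increases linearly: $y^{t+s}_{i+1} = y^t_{i+1} + s\, a_i$, and the definition of $\tau = \lceil (y^t_i - y^t_{i+1})/a_i \rceil$ is exactly chosen so that $y^{t+s}_{i+1} \le y^t_i$ for all $s \le \tau - 1$ and $y^{t+\tau}_{i+1}$ is the first time this coordinate weakly exceeds $y^t_i$ — so within the phase coordinate $i+1$ never strictly overtakes $i$, and at worst ties at $s=\tau$. Coordinate $i-1$ decreases: $y^{t+s}_{i-1} = y^t_{i-1} - s\, a_{i-1}$, which stays below $y^t_i$ since it starts at $y^t_{i-1} \le y^t_i$ and only decreases. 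All remaining coordinates $j \notin \{i-1, i, i+1\}$ are unchanged, and by hypothesis they start strictly below $y^t_i$, hence remain so. Therefore for every $s$ with $1 \le s \le \tau$, coordinate $i$ is (weakly) maximal among all coordinates of $y^{t+s}$, so $x^{t+s} = e_i$ is a valid choice of the argmax, and — since we are proving a statement about the deterministic dual trajectory that holds regardless of tiebreaks — the only subtlety is whether a tie could force the primal iterate off $e_i$ before step $\tau$. But a strict tie with coordinate $i+1$ first occurs only at $s = \tau$, and the proposition's conclusion only asserts $y^{t+s} = y^t + sA_i$ for $s \le \tau$, which is determined entirely by $x^{t+s-1} = e_i$ for $s-1 \le \tau - 1$; since coordinate $i$ is strictly maximal for all $s \le \tau - 1$ (no tie yet, as $i+1$ hasn't caught up and $i-1$ has strictly decreased away from a possible initial tie), the primal iterate is forced to be exactly $e_i$ at those times regardless of the tiebreaking rule, and the induction goes through cleanly.

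The main obstacle — really the only delicate point — is handling the boundary behavior at $s = \tau$ and the potential initial tie $y^t_i = y^t_{i-1}$: I need the ceiling in the definition of $\tau$ to line up exactly so that coordinate $i+1$ does not strictly exceed coordinate $i$ before step $\tau$, and I need to confirm that even if $y^t_{i-1} = y^t_i$ initially, the strict decrease of coordinate $i-1$ under $-a_{i-1} < 0$ immediately breaks that tie at $s = 1$, so it cannot interfere. Once these arithmetic checks on the ceiling and on the signs of $(A_i)_{i\pm1}$ are in place, the statement follows. I would present the argument as a short induction, doing the coordinate bookkeeping once in the inductive step and invoking the RPS structure from Definition~\ref{def:rps} for the form of $A_i$.
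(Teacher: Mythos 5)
Your proposal is correct and follows essentially the same route as the paper's proof: an induction showing the dual velocity stays fixed at $A_i$ because coordinate $i$ remains strictly maximal for $s \le \tau - 1$ (coordinate $i+1$ rising by $a_i$ per step but not catching up before $\tau$, coordinate $i-1$ strictly decreasing, all others frozen below), with the final step $s=\tau$ needing only $x^{t+\tau-1}=e_i$. Your handling of the possible initial tie $y^t_i = y^t_{i-1}$ and of the boundary at $s=\tau$ matches the paper's treatment.
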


\begin{proof}
  Without loss of generality (and for readability),
  assume $1 < i < n$, and thus $i+1\;(\mod n) = i+1$
  and $i-1\;(\mod n) = i-1$. 
  Observe from~\eqref{eq:fp-pd} that for any time
  $\ell$ such that $x^\ell = e_i$, we have
  $\Delta y^\ell = y^{\ell+1} - y^\ell = A_i$,
  whose coordinates are given by
  \begin{equation}
    \Delta y^{\ell}_k =  A_{k,i} :=
    \begin{cases}
      -a_{i-1} &\text{if $k = i-1$} \\
      +a_i &\text{if $k = i+1$} \\
      0 &\text{otherwise} \;. 
    \end{cases}
    \label{eq:dual-dynamics}
  \end{equation}
  This implies that at time $t+1$, we have
  $y^{t+1}_i = y^t_i$, $y^{t+1}_{i+1} = y^{t}_{i+1} + a_i$,
  $y^{t+1}_{i-1} = y^{t}_{i-1} - a_{i-1}$, and $y^{t+1}_j = y^t_j$
  for all other $j \in [n] \setminus \{i-1, i, i+1\}$. 
  
  If $y^t_i - y^t_{i+1} \le a_i$, then the statement of the
  proposition trivially holds for $\tau = 1$.
  Otherwise, $y^t_i - y^t_{i+1} > a_i$,
  and we have $y^{t+1}_{i+1} < y^{t+1}_i$ and
  also $y^{t+1}_{i-1} < y^t_{i-1} \le y^{t}_i = y^{t+1}_i$.
  Thus for $\tau \ge 1$, we have  
  $y^{t+1}_i > \max_{j \neq i} y^{t+1}_j$
  which means $x^{t+1} = e_i$ and $\Delta y^{t+1} = A_i$. 
  It follows by induction that for $s \ge 1$,
  we have $x^{t+s} = e_i$ (and $\Delta y^{t+s} = A_i$),
  so long as
  \begin{equation*}
    y^t_{i+1} + s \cdot a_i < y^{t+s}_i = y^t_i
    \quad\iff\quad
    s < \frac{y^t_i - y^{t}_{i+1}}{a_i} \;.
  \end{equation*}
  Recall that
  $\tau := \Big\lceil \frac{y^t_i - y^{t}_{i+1}}{a_i}\Big\rceil$,
  and thus the  latter condition holds for all $s \le \tau - 1$.
  It then follows for all $1 \le s \le \tau$ that
  $y^{t+s} = y^t + s \cdot A_i$, which yields the statement of
  the proposition. 
\end{proof}

\noindent We now use Proposition~\ref{prop:fp-rps-structure} to
prove the cycling property of Lemma~\ref{lem:fp-cycling}:

\medskip

\begin{proof}{\textbf{(of Lemma~\ref{lem:fp-cycling})}}
  We will prove the claim by induction on $k$. 
  First, we show that $x^{t_1} = e_{i+1}$.
  For this, observe by definition of $t_0$ that $\Psi(y^{t_0}) > \Psi(y^1)$.
  Since $x^{t_0} = e_i$ by assumption, this implies that
  $y^{t_0}_i > \max_{j \neq i} y^{t_0}_j$.
  Then setting
  $\tau = \Big\lceil\frac{y^{t_0}_i - y^{t_0}_{i+1}}{a_i}\Big\rceil$,
  Proposition~\ref{prop:fp-rps-structure} implies that
  $x^{t_0 + s} = e_i$ for all $1 \le s \le \tau - 1$ and that 
  $y^{t_0 + \tau} = y^{t_0} + \tau A_i$. This means that
  $t_1 > t_0 + \tau-1$, and also
  \begin{align*}
    y^{t_0 + \tau}_i
    &\;=\; y^{t_0}_i \\
    y^{t_0 + \tau}_{i+1}
    &\;=\;
      y^{t_0}_{i+1} + \tau \cdot a_i \\
    y^{t_0 + \tau}_{j}
    &\;\le\;
      y^{t_0}_j
      \;<\;
      y^{t_0}_i
      \;=\;
      y^{t_0 + \tau}_i
      \quad\text{for all other $j \neq i \neq i+1$} \;.
  \end{align*}
  From here, we separate the argument into two cases:
  first, when $\frac{y^{t_0}_i - y^{t_0}_{i+1}}{a_i} \notin \N$,
  and second when
  $\frac{y^{t_0}_i - y^{t_0}_{i+1}}{a_i} \in \N$.
  In the first case,
  the condition $\frac{y^{t_0}_i - y^{t_0}_{i+1}}{a_i} \notin \N$
  implies that
  $\tau > \frac{y^{t_0}_i - y^{t_0}_{i+1}}{a_i}$ and thus
  \begin{align*}
    y^{t_0 + \tau}_{i+1}
    &\;=\;
      y^{t_0}_{i+1} + \tau \cdot a_i \\
    &\;>\;
    y^{t_0}_{i+1} +
      \Big(\frac{y^{t_0}_i - y^{t_0}_{i+1}}{a_i}\Big) \cdot a_i
      \;=\;
      y^{t_0}_i
      \;=\;
      y^{t_0 + \tau}_i \;.
  \end{align*}
  It follows that $y^{t_0 + \tau}_{i+1} > \max_{j\neq i+1} y^{t_0+\tau}_j$,
  which means $x^{t_0+\tau} = e_{i+1}$, and thus
  the desired claim holds for $t_1 = t_0 + \tau$.

  In the second case, when $\frac{y^{t_0}_i - y^{t_0}_{i+1}}{a_i} \in \N$,
  we instead have
  $y^{t_0 + \tau}_{i+1} = y^{t_0}_{i+1} + \tau \cdot a_i
  = y^{t_0}_i = y^{t_0 + \tau}_{i}$.
  Thus in this case, a tiebreaking rule determines whether
  $x^{t_0 + \tau} = e_{i}$ or  $x^{t_0 + \tau} = e_{i+1}$.
  If $x^{t_0 + \tau} = e_{i+1}$, then the desired claim
  again holds for $t_1 = t_0 + \tau$.
  Moreover, we have $y^{t_0 + \tau}_{i+1} \ge y^{t_0 + \tau}_i >
  y^{t_0 + \tau}_j$ for all other $j \in [n] \setminus \{i, i+1\}$. 
  
  On the other hand, if $x^{t_0 + \tau} = e_{i}$, then
  $y^{t_0+\tau+1} = y^{t_0+\tau} + A_i$, meaning:
  \begin{align*}
    y^{t_0 + \tau + 1}_i
    &\;=\; y^{t_0 + \tau}_i \\
    y^{t_0 + \tau + 1}_{i+1}
    &\;=\;
      y^{t_0 + \tau}_{i+1} + a_i
      \;>\; y^{t_0 + \tau}_i
      \;=\; y^{t_0 + \tau + 1}_{i}
    \\
    y^{t_0 + \tau + 1}_{j}
    &\;\le\;
      y^{t_0 + \tau}_j
      \;<\;
      y^{t_0 + \tau}_i
      \;=\;
      y^{t_0 + \tau + 1}_i
      \quad\text{for all other $j \in [n] \setminus \{i, i+1\}$}\;.
  \end{align*}
  It follows in this case that
  $y^{t_0 + \tau + 1}_{i+1} > \max_{j\neq i+1} y^{t_0 + \tau + 1}_j$,
  which means $x^{t_0 + \tau + 1} = e_{i+1}$, and thus the
  desired claim is satisfied with $t_1 = t_0 + \tau+ 1$. 
    
  In all cases, we find that $x^{t_1} = e_{i+1}$, and that
  $y^{t_1}_{i+1} \ge y^{t_1}_i > y^{t_1}_j$
  for all other $j \neq i \neq i+1$.
  Now assume the claim holds up to phase $k$,
  meaning $y^{t_k}_{i+k\;(\mod n)} > y^{t_k}_j$
  for all other $j$. 
  Then setting  $\tau = \Big \lceil
  \frac{y^{t_k}_{i+k\;(\mod n)} - y^{t_k}_{i+k+1\;(\mod n)}}{a_{i+k\;(\mod n)}}
  \Big\rceil$
  and using identical calculations via an application
  of Proposition~\ref{prop:fp-rps-structure},
  we find that $x^{t_{k+1}} = e_{i+k+1\;(\mod n)}$,
  and thus the claim also holds at phase $k+1$. 
  By induction, this proves the statement of the lemma.
\end{proof}

\subsection{Energy Growth Bound}
\label{app:fp:energy}

\fpenergygrowth*                

\begin{proof}
  For the first case, assume $x^t =  x^{t+1}$.
  Recall that $y^{t+1} = y^t + Ax^t$, and
  using the definition of $\Psi$ and the maximizing principle:
  $\Psi(y^{t+1}) = \langle x^{t+1}, y^{t+1} \rangle$
  and $\Psi(y^t) = \langle x^{t}, y^{t} \rangle$.
  Together, this means:
  \begin{align*}
    \Psi(y^{t+1}) - \Psi(y^t)
    &\;=\;
      \langle x^{t+1}, y^{t+1} \rangle
      -
      \langle x^t, y^{t} \rangle \\
    &\;=\;
      \langle x^{t}, y^{t} + Ax^t \rangle
      - \langle x^t, y^{t} \rangle  \\
    &\;=\;
      \langle x^{t}, y^{t} \rangle
      + \langle x^t, A x^t \rangle
      - \langle x^t, y^{t} \rangle
      \;=\;
      0 \;.
  \end{align*}
  Here, the final equality is due to the
  skew-symmetry of $A$
  (and thus $\langle x, Ax\rangle = 0$
  for all $x \in \R^n$).

  For the second case, where
  $x^t \neq x^{t+1}$, suppose
  $x^t = e_i$ and $x^{t+1} = e_j$ for
  some $i \neq j \in [n]$. 
  Then observe that
  \begin{align*}
    \Psi(y^{t+1}) - \Psi(y^t)
    &\;=\;
      \langle x^{t+1}, y^{t+1} \rangle
      -
      \langle x^t, y^{t} \rangle \\
    &\;=\;
      \langle e_j, y^{t+1} \rangle
      -
      \langle e_i, y^{t} \rangle \\ 
    &\;=\;
      y^{t+1}_j - y^t_i \;.
  \end{align*}
  Let $\alpha^t = y^{t+1}_j - y^t_i$.
  By definition of~\eqref{eq:fp-pd} and the assumption that
  $x^t = e_i$, we have
  \begin{align*}
    y^{t+1}_j
    &\;=\;
      y^t_j + (Ax^t)_j \
      \;=\; y^t_j + A_{ji} \;.
    \\
    y^{t+1}_i
    &\;=\; y^t_i + (Ax^t)_i
      \;=\; y^t_i + A_{ii} = y^t_i \;.
  \end{align*}
  It follows that 
  $
  \alpha^t
  \;=\;
  y^{t+1}_j - y^t_i
  \;=\;
  y^t_j + A_{ji} - y^t_i
  \;\le\;
  y^t_i + A_{ji} - y^t_i
  = A_{ji} \le a_{\max}
  $.
\end{proof}

\subsection{Phase Length Bound}
\label{app:fp:pl}

\lemphaselength*

The cycling property of
the primal iterates helps to establish this
phase length bound in the following way: 
supposing $x^{t_k} = e_i$, then we show roughly 
$\tau_k \approx \gamma_k - y^{t_k}_{i+1}$.
Using the fixed cycling order, and using the fact
that energy increases by at most a constant
between phases, we can deduce that
the magnitude of $y^{t_k}_{i+1}$ is at most
a constant fraction of $\gamma_k$, and the claim follows. 

\medskip

\begin{proof}
  We prove the claim by induction on $k$. First, 
  recall from Lemma~\ref{lem:fp-cycling} that
  for any $\ell \ge 0$, 
  $x^{t_\ell} = e_{i+\ell\;(\mod n)}$
  and $x^{t_{\ell + 1}} = e_{i+\ell +1\;(\mod n)}$. 
  By Proposition \ref{prop:fp-rps-structure} and the fact that
  $y^{t_{\ell} + 1} = y^{t_\ell} + Ax^{t_\ell}$,
  this implies that $\tau_\ell$ must satisfy:
  \begin{equation*}
    y^{t_{\ell+1}}_{i+\ell+1\;(\mod n)}
    \;=\;
    y^{t_{\ell}}_{i+\ell+1\;(\mod n)}
    + \tau_\ell \cdot a_{i+\ell\;(\mod n)}
    \;\ge\;
    \gamma_{\ell}
    \quad\implies\quad
    \tau_\ell
    \;\ge\;
    \frac{\gamma_\ell
      -
      y^{t_{\ell}}_{i+\ell+1\;(\mod n)}}{a_{i+\ell\;(\mod n)}}\;.
  \end{equation*}
  
  For phases $\ell = 0, \dots, n$, note first that
  by definition of~\eqref{eq:fp-pd}, $y^1 = Ax^0$.
  Thus the coordinates of $y^1$ are absolute constants.
  Moreover, as $\Psi(y^{t_0}) - \Psi(y^0) < O(1)$ due
  to Proposition~\ref{prop:fp-energy-growth}, then
  each coordinate of $y^{t_0}$ is also an absolute constant.
  By the cycling property of Lemma~\ref{lem:fp-cycling},
  and using expression~\eqref{eq:dual-dynamics}, observe that
  for each coordinate $i + \ell + 1\;(\mod n)$ for $\ell = 0, \dots, n$,
  we must have $\Delta y^{t}_{i+\ell+1\;(\mod n)} \le 0$
  until $t \ge t_{\ell}$. It follows that
  $y^{t_\ell}_{i + \ell + 1\;(\mod n)} \le \kappa$
  for some absolute constant $\kappa > 0$,
  and thus
  $\tau_\ell \ge (\gamma_\ell - \kappa) / a_{i+\ell\;(\mod n)}$.
  Thus for each phase $0 \le \ell \le n$,
  the statement of the lemma holds for
  $\alpha_\ell = 1/a_{i+\ell\;(\mod n)} > 0$
  and $\beta_{\ell} = \kappa/a_{i+\ell\;(\mod n)}$. 
   
  Now assume that the claim holds for
  all phases up to $k-1$. We will prove it holds
  also for phase $k > n$. For this, recall that we have
  \begin{equation}
    \tau_{k}
    \;\ge\;
    \frac{\gamma_k - y^{t_k}_{i+k+1\;(\mod n)}}{a_{i+k\;(\mod n)}} \;.
    \label{eq:pl-0}
  \end{equation}
  Now by the cycling property of Lemma~\ref{lem:fp-cycling}
  and using expression~\eqref{eq:dual-dynamics},
  it follows that $\Delta y^t_{i+k+1\;(\mod n)} = 0$
  for all $t_{k+3-n} \le t < t_k$,
  and thus $y^{t_k}_{i+k+1\;(\mod n)} =
  y^{t_{k+3-n}}_{i+k+1\;(\mod n)}$.
  Moreover, we also have from
  Proposition~\ref{prop:fp-rps-structure} that 
  \begin{align}
    y^{t_k}_{i+k+1\;(\mod n)}
    \;=\;
    y^{t_{k+3-n}}_{i+k+1\;(\mod n)}
    &\;=\;
      y^{t_{k+2-n}}_{i+k+1\;(\mod n)}
      -
      a_{i+k+1\;(\mod n)} \cdot \tau_{k+2-n}
    \nonumber \\
   &\;\le\;
      \gamma_{k+2-n} + a_{\max}
     -
     a_{i+k+1\;(\mod n)} \cdot \tau_{k+2-n} \;,
     \label{eq:pl-1}
  \end{align}
  where $a_{\max} > 0$ is an absolute constant,
  and where the inequality follows from the
  energy growth bound of Proposition~\ref{prop:fp-energy-growth}.
  By the inductive hypothesis, we have
  $\tau_{k+2-n} \ge \alpha_{k+2-n}\cdot \gamma_{k+2-n} - \beta_{k+2-n}$
  for absolute constants $\alpha_{k+2-n}, \beta_{k+2-n} > 0$.
  Substituting this into~\eqref{eq:pl-1}, we then find:
  \begin{align}
    y^{t_k}_{i+k+1\;(\mod n)}
    &\;\le\;
      \gamma_{k+2-n}
      -
      a_{i+k+1\;(\mod n)} \cdot
      (\alpha_{k+2-n}\cdot \gamma_{k+2-n} - \beta_{k+2-n}) 
      + a_{\max} \\
    &\;=\;
      \gamma_{k+2-n}\cdot
      (1 - a_{i+k+1\;(\mod n)} \cdot \alpha_{k+2-n})
      + \beta_{k+2-n} + a_{\max} \;.
      \label{eq:pl-2}
  \end{align}
  Without loss of generality, we will assume
  that expression~\eqref{eq:pl-2} is positive. Otherwise,
  we would trivially have in expression~\eqref{eq:pl-0}
  that $\tau_k \ge \gamma_k / a_{i+k\;(\mod n)}$,
  which would yield the desired claim.
  Similarly, we assume additionally that
  $1 - a_{i+k+1\;(\mod n)} \cdot \alpha_{k+2-n} > 0$,
  as otherwise the bound in expression~\eqref{eq:pl-2}
  is at most an absolute constant $\epsilon > 0$, which would imply
  $\tau_k \ge (\gamma_{k} - \epsilon)/a_{i+k\;(\mod n)}$,
  again trivially yielding the desired claim.
  Then under these assumptions, we can substitute
  expression~\eqref{eq:pl-2} back into~\eqref{eq:pl-0} to find
  \begin{align}
    \tau_k
    &\;\ge\;
      \frac{\gamma_k -
      \gamma_{k+2-n}\cdot (1 - a_{i+k+1\;(\mod n)} \cdot \alpha_{k+2-n})
      }{a_{i+k\;(\mod n)}}
      - \frac{\beta_{k+2-n} + a_{\max}}{a_{i+k\;(\mod n)}}
    \\
    &\;\ge\;
      \frac{\gamma_k \cdot a_{i+k+1\;(\mod n)} \cdot \alpha_{k+2-n}}{
      a_{i+k\;(\mod n)}}
      - \frac{\beta_{k+2-n} + a_{\max}}{a_{i+k\;(\mod n)}}       \;,
      \label{eq:fp-pl-end}
  \end{align}
  where the final inequality comes from the fact that
  $\gamma_{k+2-n} \le \gamma_{k}$ and the assumption
  that $1 - a_{i+k+1\;(\mod n)} \cdot \alpha_{k+2-n} > 0$.
  It follows that $\tau_k \ge \alpha_k \gamma_k  - \beta_k$ for
  \begin{equation*}
    \alpha_k
    \;:=\;
    \frac{a_{i+k+1\;(\mod n)} \cdot \alpha_{k+2-n}}{a_{i+k\;(\mod n)}} > 0
    \quad\text{and}\quad
    \beta_k
    \;:=\;
    \frac{\beta_{k+2-n} + a_{\max}}{a_{i+k\;(\mod n)}} > 0\;,
  \end{equation*}
  which completes the proof. 
\end{proof}

\subsection{Proof of Theorem~\ref{thm:fp-rps}}
\label{app:fp:regret-thm}

Recall from Definition~\ref{def:phases} that $K \ge 1$ is
the total number of phases undergone by the
dynamics in $T$ rounds. 
For each phase $k = 0, 1 \dots, K$ define the 
indicator variable $c_k \in \{0, 1\}$ as follows:
\begin{equation*}
  c_k
  \;:=\;
  \begin{cases}
    0 &\text{if $\Psi(y^{t_k}) = \Psi(y^{t_{k-1}})$} \\
    1 &\text{if $\Psi(y^{t_k}) > \Psi(y^{t_{k-1}})$}
  \end{cases} \;.
\end{equation*}
In other words, $c_k = 1$ if and only if the energy
of the dynamics strictly increases from phase $k-1$
to phase $k$.
Observe that the energy growth
bound of Proposition~\ref{prop:fp-energy-growth}
further implies that if $c_k = 1$, then
$\Psi(y^{t_k}) - \Psi(y^{t_{k-1}}) = \Theta(1) = \Theta(c_k)$.
This yields the following corollary to Proposition~\ref{prop:fp-energy-growth}:
\begin{corollary}
  \label{cor:regret-energy}
  Assume the setting of Theorem~\ref{thm:fp-rps}. Then
  $\reg(T) = 2 \cdot \Psi(y^{T+1}) =
  \Theta\Big(\sum_{k=1}^K c_k\Big)$. 
\end{corollary}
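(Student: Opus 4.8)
The plan is to turn the claim into an estimate on the dual energy and then bound the total energy growth phase by phase. The first equality, $\reg(T) = 2\Psi(y^{T+1})$, is exactly Proposition~\ref{prop:regret-energy-fp}, so it remains to show $\Psi(y^{T+1}) = \Theta\big(\sum_{k=1}^K c_k\big)$ up to an additive absolute constant. I would exploit the phase structure: every primal iterate inside a phase sits at the same vertex, so by case~(a) of Proposition~\ref{prop:fp-energy-growth} the energy is constant along a phase, and the only dual step inside Phase $k-1$ that can change the energy is its final one. Hence, by case~(b), $\Psi(y^{t_k}) - \Psi(y^{t_{k-1}}) \in [0, a_{\max}]$, and by the definition of $c_k$ this difference vanishes exactly when $c_k = 0$. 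Telescoping over $k = 1,\dots,K$ gives $\Psi(y^{t_K}) = \Psi(y^{t_0}) + \sum_{k=1}^K\big(\Psi(y^{t_k}) - \Psi(y^{t_{k-1}})\big)$; I would then observe that $\Psi(y^{t_0}) = O(1)$ (by construction $t_0$ is the first time the energy exceeds the absolute constant $\Psi(y^1) = \Psi(Ax^0)$, so $\Psi(y^{t_0}) \le \Psi(y^1) + a_{\max}$) and that $\Psi(y^{T+1}) = \Psi(y^{t_K}) + O(1)$ since all iterates after time $t_K$ share a vertex.

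For the upper bound this already suffices: each summand is at most $a_{\max} c_k$, so $\Psi(y^{T+1}) \le a_{\max}\sum_{k=1}^K c_k + O(1)$, which is $O\big(\sum_{k=1}^K c_k\big)$ whenever that sum is at least one (and otherwise $\reg(T) = O(1)$, so Theorem~\ref{thm:fp-rps} is immediate anyway). For the matching lower bound it is enough to exhibit a game-dependent constant $c > 0$ with $\Psi(y^{t_k}) - \Psi(y^{t_{k-1}}) \ge c$ whenever $c_k = 1$; summing over the phases with $c_k = 1$ then gives $\Psi(y^{T+1}) \ge c\sum_{k=1}^K c_k$. To size one such increase, let $e_i$ be the vertex of Phase $k-1$, so Phase $k$ is at $e_{i+1\;(\mod n)}$ by Lemma~\ref{lem:fp-cycling}; writing $D := y^{t_{k-1}}_i - y^{t_{k-1}}_{i+1} \ge 0$, Proposition~\ref{prop:fp-rps-structure} and the tiebreak bookkeeping from the proof of Lemma~\ref{lem:fp-cycling} show that coordinate $i$ is frozen while coordinate $i+1$ advances by $a_i$ per step during Phase $k-1$, so $\Psi(y^{t_k}) - \Psi(y^{t_{k-1}}) = \tau_{k-1} a_i - D$ with $\tau_{k-1} \in \{\lceil D/a_i\rceil, \lceil D/a_i\rceil + 1\}$; when this is nonzero it equals either $a_i$ or $a_i\big(1 - \{D/a_i\}\big)$, where $\{\cdot\}$ denotes fractional part.

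The main obstacle is precisely this last clause: bounding $a_i\big(1 - \{D/a_i\}\big)$ away from $0$ uniformly over all phases, equivalently ruling out gaps $D$ for which $D/a_i$ has fractional part arbitrarily close to $1$. I would attack it by writing $y^{t_{k-1}} = Ax^0 + \sum_j N_j A_j$ with nonnegative integer coefficients $N_j$, so that $D/a_i \bmod 1$ ranges over a fixed translate (depending only on $x^0$) of the additive set generated by the ratios $a_j/a_i$; the cleanest way to finish is then to treat the $a_j$ as having bounded arithmetic complexity (e.g.\ rational, with denominators absorbed into the $\Theta$-constants), which makes that set discrete and yields the required uniform lower bound on $1 - \{D/a_i\}$. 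If one prefers to sidestep this, I note that the regret bound of Theorem~\ref{thm:fp-rps} itself needs only the upper direction established above: since $\gamma_k = \Psi(y^{t_k})$ is nondecreasing and grows by at most $a_{\max}$ per phase, at least $\Omega(\gamma_K)$ phases satisfy $\gamma_k \ge \gamma_K/2$, each contributing $\tau_k \ge \Omega(\gamma_K)$ via Lemma~\ref{lem:phase-length}, so $T \ge \Omega(\gamma_K^2)$ and $\reg(T) = 2\Psi(y^{T+1}) = O(\sqrt{T})$.
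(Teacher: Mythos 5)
Your upper-bound argument for $\Psi(y^{T+1}) = O\bigl(\sum_{k}c_k\bigr)$ matches the paper's: Proposition~\ref{prop:regret-energy-fp} gives the first equality, Proposition~\ref{prop:fp-energy-growth} gives within-phase constancy and the $\le a_{\max}$ jump between phases, and the boundary terms $\Psi(y^{t_0})$ and $\Psi(y^{T+1}) - \Psi(y^{t_K})$ are absorbed into the $O(1)$.

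Where you diverge from the paper is the lower bound, and there you have caught a real gap. The paper's entire justification for this corollary is the single sentence preceding it, which asserts that Proposition~\ref{prop:fp-energy-growth} ``implies that if $c_k = 1$, then $\Psi(y^{t_k}) - \Psi(y^{t_{k-1}}) = \Theta(1)$.'' But that proposition gives only $0 \le \Delta\Psi \le a_{\max}$; the $\Omega(1)$ side is asserted, not established. Your computation -- the positive increment equals $a_i$ in the integer-tie case and $a_i\bigl(1-\{D/a_i\}\bigr) \in (0, a_i)$ otherwise -- is correct, and you are right that the latter quantity is not uniformly bounded away from zero for arbitrary real weights $\{a_j\}$ and initialization $x^0$ without an additional arithmetic condition. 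Your rationality fix is sound; for unweighted RPS one can go further and observe that the dual increments are integer-valued, so $\{D/a_i\}$ is constant along the trajectory and the positive jump is a single game-dependent constant. The gap is not cosmetic: the paper's proof of Theorem~\ref{thm:fp-rps} uses the $\Omega$-direction when it writes $\tau_k \ge \alpha_k\Psi(y^{t_k}) - \beta_k \gtrsim \sum_{i\le k}c_i - \beta_k$.

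Your closing alternative -- since $\gamma_k = \Psi(y^{t_k})$ is nondecreasing with $O(1)$ increments, $\Omega(\gamma_K)$ of the late phases satisfy $\gamma_k \ge \gamma_K/2$ and hence $\tau_k = \Omega(\gamma_K)$ by Lemma~\ref{lem:phase-length}, giving $T \ge \Omega(\gamma_K^2)$ directly -- is correct and cleanly sidesteps the corollary's lower bound. It shows Theorem~\ref{thm:fp-rps} does not actually need the $\Omega$-direction of the corollary, though it does not rescue the corollary's $\Theta$ as stated.
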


We can now prove the main result of Theorem~\ref{thm:fp-rps}.
In particular, the use of the indicators $\{c_k\}$
allows for handling the case of
non-increasing energy between phases, as mentioned
in the sketch presented in Section~\ref{sec:fp}. 

\medskip

\begin{proof}
  For readability, we will use the notation $f \lesssim g$
  and $f \gtrsim g$ to indicate $f = O(g)$ and $f = \Omega(g)$ respectively. 
  Now without loss of generality, assume $t_0 < T$,
  as otherwise $\Psi(y^T) = \Psi(y^1)$ is a constant,
  and the statement of the theorem trivially holds. 
  Then by Corollary~\ref{cor:regret-energy}, our goal is
  to derive an upper bound on $\sum_{k=0}^K c_k$.
  For this, recall from Definition~\ref{def:phases} that
  $T = \sum_{k=0}^K \tau_k$.
  By Lemma~\ref{lem:phase-length}, we have for
  all $k$ that:
  \begin{equation*}
    \tau_k
    \;\ge\;
    \alpha_k \cdot 
    \Psi(y^{t_k}) - \beta_k
    \;\gtrsim\;
    \sum_{i=1}^k c_i - \beta_k \;,
  \end{equation*}
  for positive absolute constants $\alpha_k, \beta_k > 0$.
  It follows that we can write
  \begin{equation} 
    T
    \;=\;
    \sum_{k=0}^K \tau_k
    \;\gtrsim\;
    \sum_{k=0}^K \sum_{i=1}^k c_i - \sum_{k=0}^K \beta_k
    \;\gtrsim\;
    \sum_{k=0}^K \sum_{i=1}^k c_i - T \;,
    \label{eq:regret-1}
  \end{equation}
  where the final inequality comes from the fact that $K \le T$. 
  Observe that we can also write
  \begin{equation}
    \sum_{k=0}^K \sum_{i=1}^k c_i
    \;=\;
    Kc_1 + (K-1)c_2 + \dots + c_K
    \;=\;
    \sum_{k=0}^K (K-k+1)c_k
  \end{equation}
  
  Now let $L$ be the set of indices in $\{0,1, \dots, K\}$ where $c_i = 1$, and
  let $\mathbbm{1}_L$ be the indicator function of membership to the set $L$.
  Since there are $\vert L\vert$ non-zero elements of $\{c_i\}$, we can write:
  \begin{equation}
    \sum_{k=0}^K (K-k+1)c_k
    \;=\;
    \vert L\vert \cdot K - \sum_{k=0}^K (\mathbbm{1}_L \cdot k) + \vert L\vert
  \end{equation}
  The second term is the sum of indices in $L$, which can be upper-bounded
  by the sum of the top-$\vert L\vert$ indices. This means:
  \begin{align}
    \vert L\vert \cdot K - 
    \sum_{k=0}^K (\mathbbm{1}_L \cdot k) + 
    \vert L\vert
    &\;\geq\;
      \vert L\vert \cdot K - 
      \frac{\vert L\vert}{2}(2K-\vert L \vert) + \vert L\vert\\
    &\;=\; \frac{\vert L\vert}{2}^2 + \vert L\vert
      \;\geq\; \frac{1}{2} \vert L\vert^2 \;.
      \label{eq:fp-regret-end}
  \end{align}
  By definition, $\vert L\vert = \sum_{k=0}^K c_k$,
  and thus we conclude
  $
  \sum_{k=0}^K \sum_{i=1}^k c_i
  \gtrsim
  (|L|)^2
  = 
  \Big(\sum_{k=0}^K c_k \Big)^2 .
  $
  Substituting this into expression~\eqref{eq:regret-1}
  and rearranging, we then obtain via Corollary~\ref{cor:regret-energy}:
  \begin{equation*}
    \Big(\sum_{k=0}^K c_k \Big)^2
    \;\lesssim\;
    T
    \quad\implies\quad
    \reg(T) \;\lesssim\; \sum_{k=0}^K c_k \;\lesssim\; \sqrt{T} \;,
  \end{equation*} 
  which completes the proof.
\end{proof}

\section{Constant Regret for FP
  via Tournament Tiebreaking}
\label{app:fp:tournament}

The cycling behavior of Lemma~\ref{lem:fp-cycling}
also leads to an improved regret
bound under the following ``tournament'' tiebreaking rule,
which mirrors the cyclical tournament graph structure latent
in the definition of generalized RPS:
\begin{definition}[Tournament Tiebreaking Rule]
  \label{def:tournament-tiebreak}
  Let $A$ be an $n$-dimensional RPS matrix
  from Definition~\ref{def:rps}. 
  Using the \textit{tournament} tiebreaking rule,
  ties between coordinates $i \in [n]$ and $j\in[n]$ are 
  broken lexicographically, except for ties between coordinates
  $1$ and $n$, which are broken in favor of coordinate $1$.
\end{definition}

For \textit{unweighted} RPS matrices,
we show in Theorem~\ref{thm:tournament-regret} 
that when $x^0$ is a vertex $e_i \in \Delta_n$,
the energy of the dual iterates is 
exactly conserved over time, and the
regret is thus constant. 

\begin{restatable}{theorem}{fptournamentregret}
  \label{thm:tournament-regret}
  For any $n\ge3$, let $A$ be an $n$-dimensional RPS matrix from
  Definition~\ref{def:rps} with all
  $a_i = 1$. Let $\{x^t\}$ and $\{y^t\}$ be the iterates
  of \eqref{eq:fp-pd} initialized at $x^0 = e_i$
  for some $i\in[n]$.
  Then using the tournament tiebreaking rule,
  $\Psi(y^T) = \Psi(y^1)$ and $\reg(T) = O(1)$. 
\end{restatable}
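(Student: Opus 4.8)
The plan is to prove the stronger statement that $\Psi(y^t)=\max_j y^t_j = 1$ for every $t\ge 1$; by Proposition~\ref{prop:regret-energy-fp} this gives $\reg(T)=2\Psi(y^{T+1})=2=O(1)$ at once. Two observations set up the argument. First, since all $a_i=1$ and $y^0=0$, every dual iterate $y^t$ has integer coordinates, and each one-step update adds the vector $A_p = e_{p+1}-e_{p-1}$ (indices mod $n$) for the current vertex $x^t=e_p$. Second, by the monotonicity of energy along skew-gradient descent (Proposition~\ref{prop:skew-gradient-descent}) together with $\Psi(y^1)=\max_j (Ae_i)_j = 1$, we already have $\Psi(y^t)\ge 1$ for all $t\ge1$, so only the upper bound $\max_j y^t_j\le 1$ needs to be shown.

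I would establish the upper bound by induction on $t$, carrying the reinforced invariant: the maximizer set $M^t:=\{j : y^t_j=\max_k y^t_k\}$ is either a singleton $\{p\}$ or a cyclically consecutive pair $\{p-1,p\}$, and in both cases $x^t=e_p$. The second clause is where the tournament rule of Definition~\ref{def:tournament-tiebreak} enters: a tie between the adjacent coordinates $p-1$ and $p$ is resolved in favor of $p$ — the coordinate that dominates the other in the RPS cycle (for the wrap-around pair $\{n,1\}$ this is coordinate $1$, matching the rule's exception). It is essential that the tie goes to $p$ rather than $p-1$: selecting $e_{p-1}$ would raise coordinate $p$ to $2$ and destroy energy conservation, which is precisely why this result needs a specific tiebreak rule whereas Theorem~\ref{thm:fp-rps} does not. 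The base case $t=1$ is immediate: $y^1=Ae_i=e_{i+1}-e_{i-1}$, so $M^1=\{i+1\}$, $x^1=e_{i+1}$, and $\max_j y^1_j=1$.

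For the inductive step, assume the invariant at time $t\ge1$; with the lower bound this forces $\max_j y^t_j=1$ and $x^t=e_p$. Under $y^{t+1}=y^t+e_{p+1}-e_{p-1}$, coordinate $p$ is unchanged and still equals $1$, coordinate $p-1$ decreases, and since $p+1\notin M^t$ (because $M^t\subseteq\{p-1,p\}$ and $n\ge3$), integrality forces $y^t_{p+1}\le 0$, so $y^{t+1}_{p+1}\le 1$; hence $\max_j y^{t+1}_j=1$. For the shape of $M^{t+1}$: coordinate $p$ is still a maximizer; coordinate $p-1$ is now strictly below $1$; any coordinate $j\notin\{p-1,p,p+1\}$ is unchanged and was not in $M^t$, hence was at most $0$ and stays below $1$; and coordinate $p+1$ becomes a maximizer exactly when $y^t_{p+1}=0$. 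Thus $M^{t+1}$ is $\{p\}$ when $y^t_{p+1}<0$ and the consecutive pair $\{p,p+1\}$ (with successor $p+1$) when $y^t_{p+1}=0$, with $x^{t+1}$ equal to $e_p$ or $e_{p+1}$ respectively, so the invariant holds at $t+1$. This closes the induction, hence $\Psi(y^T)=\Psi(y^1)$ and $\reg(T)=O(1)$.

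The two substantive ingredients are the direction of the tournament tiebreak — it always advances the leader to its cyclic successor, so the dual trajectory keeps circulating rather than backtracking — and the integrality of the payoff vectors, which keeps the gap between the leader and its successor at least one so that the lone $+1$ update can never overshoot. The point requiring the most care is maintaining the exact shape of $M^t$: the invariant must \emph{allow} the transient adjacent-pair configuration that appears for one step at each phase switch (insisting on a unique maximizer would not be inductive) while still forbidding three-way ties, which is what keeps $x^t$ unambiguously determined by $y^t$ throughout.
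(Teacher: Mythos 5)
Your proof is correct and rests on the same two ingredients as the paper's (integer-valued dual coordinates plus the successor-favoring tournament tiebreak), so it is essentially the same argument; the paper phrases it phase-by-phase via Proposition~\ref{prop:fp-rps-structure}, while you carry a cleaner step-by-step invariant $M^t\subseteq\{p-1,p\}$ with $\max_j y^t_j=1$ and $x^t=e_p$, which proves the slightly sharper fact that the energy level is exactly $1$ at every single iterate, not just at phase boundaries. One small presentational caveat: make sure the invariant you carry in the induction explicitly includes $\max_j y^t_j = 1$ (not only the shape of $M^t$), since the step bounding $y^{t+1}_{p+1}\le 1$ needs $y^t_{p+1}\le 0$, which does not follow from $p+1\notin M^t$ alone unless you already know the current maximum equals $1$.
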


\begin{proof}
  We will show that under the tournament tiebreaking rule
  of Definition~\ref{def:rps}, for any
  $t$ such that $x^{t} \neq x^{t+1}$, the energy
  $\Psi(y^t) = \Psi(y^{t+1})$.
  First, since all $a_i=1$ in the unweighted case, 
  we have that $y^1_i = Ax_i^0 \in \mathbb{Z}$ for $i\in[n]$. 
  Recall that by definition, $y^t = y^1 + \sum_{k=0}^{t-1} Ax^k$. 
  Since, under~\eqref{eq:fp-pd}, each $x^k = e_j$
  for some $j \in [n]$, it follows for each $t$ that we can write 
  $y^t =  y^1 + \sum_{i=1}^n c^t_i \cdot A_i$
  for non-negative integer constants $c^t_1, \dots, c^t_n$.
  Altogether, we have
  $y^t_i
  \;=\; y^1_i + c^t_{i-1} - c^t_{i+1} = y^1_i + d^t_i$ for any $i \in [n]$,
  where $d^t_i \in \Z$. 
  Thus for any $i, j \in [n]$, we have:
  $y^t_i - y^t_j = 
  y^1_i - y^1_j
  + d^t_{i} - d^t_j$.
  As argued above, $y^1_i - y^1_j \in \mathbb{Z}$ 
  and thus $y^t_i - y^t_j \in \mathbb{Z}$.

  Let $i := \argmax_{j \in [n]} y^t_j$ using the tournament
  tiebreaking rule of Definition~\ref{def:tournament-tiebreak},
  and set $\tau = \big\lceil y^t_i - y^t_{i+1} \big\rceil$.
  Without loss of generality, assume $i < n$,
  so that $i+1\;(\mod n) = i+1$. 
  Then using identical calculations as in
  Proposition~\ref{prop:fp-rps-structure}, we have
  \begin{align*}
    y^{t+\tau}_i
    &\;=\; y^t_i \\
    y^{t+\tau}_{i+1}
    &\;=\;
      y^t_{i+1} + \big\lceil y^t_i - y^t_{i+1} \big \rceil \\
    \text{and}\quad
    y^{t+\tau}_j 
    &\;\le\;
      y^{t}_j
      &\text{for all other $j \neq i \neq i+1$} \;.
  \end{align*}
  Moreover, we also have $x^{t+s} = e_i$ 
  and $\Psi(y^{t+s}) = y^{t+s}_i = \Psi(y^t)$
  for all $1 \le s \le \tau-1$.
  As $y^{t}_i - y^t_{i+1} \in \Z$ by the arguments above,
  we have $\big \lceil y^{t}_i - y^t_{i+1} \big \rceil \in \mathbb{Z}$
  and thus $y^{t+\tau}_{i+1} = 
  y^t_{i+1} + y^t_i - y^t_{i+1} = y^{t}_i = y^{t+\tau}_{i}$.
  Using the tournament tiebreaking rule,
  it follows that $x^{t+\tau} = e_{i+1} \neq e_i = x^{t+\tau-1}$. Moreover, $\Psi(y^{t+\tau}) = \Psi(y^{t+\tau -1}) = \dots = \Psi(y^t) = \dots = \Psi(y^1)$.
  Thus by Proposition~\ref{prop:regret-energy-fp} we have
  $\mathrm{Reg}(T) = 2\cdot\Psi(y^T) = O(1)$, which proves the claim. 
\end{proof}


\section{Details on Gradient Descent Under Symmetric Learning}
\label{app:gd:details}

\subsection{Closed Form Primal Update and Energy Function}
\label{app:gd:details:closed}

Under~\eqref{eq:gd-pd}, the primal
iterates can be characterized in
closed-form using the KKT conditions
of the constrained optimization problem
over $\Delta_n$.
In particular, under Gradient Descent,
\cite{bailey2019fast} give
the following characterization
of the primal iterate $x^t$,
which in turn leads to a
closed-form characterization of the energy
function $\phi^*(y^t)$. 
At time $t$, both expressions are defined with
respect to a set $S^t$, which characterizes the support of $x^t$.
The set $S^t$ can be found
using the iterative method of \textsc{FindSupport} 
given in Algorithm~\ref{alg:optimalset}.

\begin{restatable}[\cite{bailey2019fast}]{prop}{gdclosed}
  \label{prop:gd-closed}
  Let $\{x^t\}$ and $\{y^t\}$ be
  iterates of~\eqref{eq:gd-pd}.
  At each time $t \ge 1$,
  let $S^t = \textsc{FindSupport}(y^t) \subseteq [n]$,
  and let $|S^t| = m$.
  Then 
  \begin{align}
    x^{t}_i
    \;=\;
    \begin{cases}
      0 &\quad\text{if $i \notin S^t$} \\
      y^t_i - \frac{1}{m} \cdot \sum_{j \in S^t} y^t_j + \frac{1}{m}
        &\quad\text{if $i \in  S^t$} \;.
    \end{cases}
    \label{eq:gd-primal}
  \end{align}
  Moreover, the energy $\phi^*(y^t)$ is given by
  \begin{equation}
    \phi^*(y^t)
    \;=\;
    \frac{1}{2}
    \sum_{j \in S^t} \left(y^t_j\right)^2
    +
    \frac{1}{m}
    \sum_{j \in S^t}
    y^t_j
    -
    \frac{1}{2m}
    \bigg( \sum_{j \in S^t} y^t_j \bigg)^2
    - \frac{1}{2m} \;.
    \label{eq:gd-energy}
  \end{equation}
\end{restatable}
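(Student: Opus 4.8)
The plan is to recognize the Gradient Descent primal update as the Euclidean projection of the dual vector onto the simplex and then to solve the resulting KKT system in closed form. Since $\phi(x)=\tfrac12\|x\|_2^2$, maximizing $\langle x,y^t\rangle-\tfrac12\|x\|_2^2$ over $\Delta_n$ is equivalent to minimizing $\tfrac12\|x-y^t\|_2^2$ over $\Delta_n$, i.e.\ projecting $y^t$ onto $\Delta_n$. This is a strictly convex program with a unique solution, so it suffices to exhibit a feasible point together with multipliers satisfying the KKT conditions.

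First I would form the Lagrangian with a multiplier $\lambda\in\R$ for the constraint $\sum_i x_i=1$ and multipliers $\mu_i\ge 0$ for $x_i\ge 0$. Stationarity gives $x_i = y^t_i-\lambda+\mu_i$, and complementary slackness $\mu_i x_i=0$ forces $\mu_i=0$ on the support of the optimum. Writing $S^t=\supp(x^t)$ and $m=|S^t|$, summing $x_i=y^t_i-\lambda$ over $i\in S^t$ against $\sum_{i\in S^t}x_i=1$ pins down $\lambda=\tfrac1m\big(\sum_{j\in S^t}y^t_j-1\big)$; substituting this back gives exactly $x^t_i=y^t_i-\tfrac1m\sum_{j\in S^t}y^t_j+\tfrac1m$ for $i\in S^t$ and $x^t_i=0$ for $i\notin S^t$, which is the claimed primal formula.

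The one nontrivial step is to verify that this candidate is primal- and dual-feasible, i.e.\ that $x^t_i\ge 0$ for $i\in S^t$ and that $\mu_i=\lambda-y^t_i\ge 0$ for $i\notin S^t$. These are precisely the correctness guarantees of the \textsc{FindSupport} routine of Algorithm~\ref{alg:optimalset}, so I would invoke that correctness directly; alternatively, one re-derives it by the standard ``water-filling'' argument (sort the coordinates of $y^t$ in decreasing order, greedily include the largest ones while maintaining the threshold $\lambda$ computed from the running partial sum, and stop once $\lambda$ exceeds the next coordinate). This is the main obstacle, though it is a well-known fact about projection onto the probability simplex; uniqueness of the KKT point then identifies this candidate with $x^t$.

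Finally, for the energy identity I would substitute the closed form for $x^t$ into $\phi^*(y^t)=\langle x^t,y^t\rangle-\tfrac12\|x^t\|_2^2$ and simplify. Writing $\sigma=\sum_{j\in S^t}y^t_j$ and $c=(1-\sigma)/m$, we have $x^t_j=y^t_j+c$ on $S^t$, hence $\langle x^t,y^t\rangle=\sum_{j\in S^t}(y^t_j)^2+c\sigma$ and $\|x^t\|_2^2=\sum_{j\in S^t}(y^t_j)^2+2c\sigma+mc^2$; the $c\sigma$ terms cancel in the difference, leaving $\tfrac12\sum_{j\in S^t}(y^t_j)^2-\tfrac12 mc^2$, and expanding $c=(1-\sigma)/m$ gives $\tfrac12\sum_{j\in S^t}(y^t_j)^2+\tfrac{\sigma}{m}-\tfrac{\sigma^2}{2m}-\tfrac1{2m}$, which is the stated expression. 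This last part is routine algebra, so I would not belabor it.
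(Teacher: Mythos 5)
Your proposal is correct and takes essentially the same route as the paper, which states this proposition without proof as a citation to \cite{bailey2019fast}, whose derivation is precisely the KKT characterization of the Euclidean projection onto $\Delta_n$ that you reproduce (with the feasibility of the candidate support being the standard correctness guarantee of \textsc{FindSupport}, i.e., the usual water-filling argument). Your substitution for the energy identity also checks out, so nothing is missing beyond the well-known simplex-projection fact you explicitly invoke.
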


\setlength{\algomargin}{2em}
\begin{algorithm2e}
  \SetAlgoLined
  \caption{\textsc{FindSupport}}
  \label{alg:optimalset}
  \KwIn{$y \in \R^n$}
  $S \gets [n]$ \\
  \emph{Search():} \\
  \hspace*{1.5em}
  Select $i \in \argmin_{j \in S} y_j$ \\
  \hspace*{1.5em}
  \textbf{if}
  $y_i - \frac{1}{|S|}
  \big(\sum_{j \in S} y^t_j\big)
  + \frac{1}{|S|} < 0$
  \textbf{then:}\\
  \hspace*{3em}
  $S\gets S\setminus \{i\}$ \\
  \hspace*{3em}
  \textbf{goto} \emph{Search()} \\
  \hspace*{1.5em}
  \textbf{else return $S$} 
\end{algorithm2e}

While Proposition~\ref{prop:gd-closed}
is stated with respect to the iterates of~\eqref{eq:gd-pd},
given any dual vector $y \in \R^n$, under the maps of
$\phi^*: \R^n \to \R$ and $Q: \R^n \to \Delta_n$
from Proposition~\ref{prop:phi-Q}, 
we can more generally describe the corresponding primal
iterate and energy value in closed form similar
to expressions~\eqref{eq:gd-primal}
and~\eqref{eq:gd-energy}.
Specifically, in the case of Gradient Descent
where $\phi(x) = \frac{\|x\|^2_2}{2}$,
recall for $y \in \R^n$ that
$\phi^*$ and $Q$ are given by:
\begin{equation}
  \begin{aligned}
    \phi^*(y)
    &\;=\;
      \max_{x \in \Delta_n}\;
      \langle x, y \rangle - \frac{\|x\|^2_2}{2} \\
    Q(y)
    &\;=\;
      \argmax_{x \in \Delta_n}\;
      \langle x, y \rangle - \frac{\|x\|^2_2}{2} \;.
  \end{aligned}
  \label{eq:gd-maps}
\end{equation}
Then as a corollary to Proposition~\ref{prop:gd-closed}
(and Proposition~\ref{prop:phi-Q}),
we state the following
(see \citet[Appendix B]{bailey2019fast}):

\begin{corollary}
  \label{cor:gd-closed}
  Let $\phi^*: \R_n \to \R$ and
  $Q: \R^n \to \Delta_n$
  be the functions in expression~\eqref{eq:gd-maps}.
  For any $y \in \R^n$,
  let $S = \findsupport(y) \subseteq [n]$,
  and let $|S| = m$.
  Let $x := Q(y) \in \Delta_n$.
  Then for $i \in S$, the coordinate $x_i$
  is given by
  \begin{equation*}
    x_i
    \;=\;
    y_i - \frac{1}{m} \cdot \sum_{j \in S} y_j
    + \frac{1}{m} \;,
  \end{equation*}
  and $x_i = 0$ otherwise. 
  Additionally, the energy $\phi^*(y)$ is given by
  \begin{equation}
    \phi^*(y)
    \;=\;
    \frac{1}{2}
    \sum_{j \in S} (y_j)^2
    +
    \frac{1}{m}
    \sum_{j \in S}
    y_j
    -
    \frac{1}{2m}
    \big( \sum_{j \in S} y_j \big)^2
    - \frac{1}{2m} \;.
    \label{eq:gd-energy-closed}
  \end{equation}
  Moreover, $Q(y) = \nabla \phi^*(y)$. 
\end{corollary}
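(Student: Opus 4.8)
The plan is to obtain this as a direct consequence of Proposition~\ref{prop:gd-closed} together with Proposition~\ref{prop:phi-Q}. The map $Q$ and the energy $\phi^*$ in~\eqref{eq:gd-maps} are defined by exactly the same constrained maximization that the iterates of~\eqref{eq:gd-pd} solve at each step, except that an arbitrary $y \in \R^n$ now plays the role of a dual iterate $y^t$. The key observation is that the argument establishing Proposition~\ref{prop:gd-closed} in~\citet{bailey2019fast} is \emph{dynamics-free}: it uses only the fact that the primal point equals $\argmax_{x\in\Delta_n}\langle x, y\rangle - \tfrac12\|x\|_2^2$, i.e.\ equals $Q(y)$, and the structure of $\findsupport$. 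Hence the same closed forms hold verbatim for a general $y$, which is all the corollary asserts beyond Proposition~\ref{prop:gd-closed}.

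For completeness I would re-derive the formula as follows. Completing the square gives $\langle x, y\rangle - \tfrac12\|x\|_2^2 = \tfrac12\|y\|_2^2 - \tfrac12\|x-y\|_2^2$, so $Q(y)$ is the Euclidean projection of $y$ onto $\Delta_n$. Writing the KKT conditions for this projection — stationarity $x_i = y_i - \lambda + \mu_i$ with a single multiplier $\lambda\in\R$ for $\sum_i x_i = 1$, multipliers $\mu_i \ge 0$ for $x_i \ge 0$, and complementary slackness $\mu_i x_i = 0$ — shows that on the active set $S = \{i : x_i > 0\}$ one has $\mu_i = 0$, hence $x_i = y_i - \lambda$; summing over $i \in S$ and using $\sum_{i\in S} x_i = 1$ forces $\lambda = \tfrac1m\big(\sum_{j\in S} y_j - 1\big)$, giving the stated $x_i = y_i - \tfrac1m\sum_{j\in S} y_j + \tfrac1m$. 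It then remains to check that $S$ coincides with $\findsupport(y)$; this is the greedy "water-filling" argument — repeatedly discarding a coordinate of minimal value never deletes an index that should be active, and the procedure terminates exactly at the true active set — which is carried out in~\citet[Appendix~B]{bailey2019fast}.

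Given the primal formula, the energy identity~\eqref{eq:gd-energy-closed} is a routine substitution: plug $x = Q(y)$ into $\phi^*(y) = \langle x, y\rangle - \tfrac12\|x\|_2^2$, restrict all sums to $S$ (since $x_i = 0$ off $S$), expand, and collect terms; the cross terms combine to produce the $-\tfrac1{2m}\big(\sum_{j\in S} y_j\big)^2$ and $-\tfrac1{2m}$ summands. Finally, $Q(y) = \nabla\phi^*(y)$ is immediate from Claim~(i) of Proposition~\ref{prop:phi-Q}, since $\phi(x) = \tfrac12\|x\|_2^2$ is $1$-strongly convex and hence strictly convex on $\Delta_n$.

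The only genuinely non-trivial step is the verification that the greedy $\findsupport$ procedure returns precisely the active set of the projection; the KKT computation, the algebraic simplification of the energy, and the identification $Q = \nabla\phi^*$ are all mechanical. Since that combinatorial fact is already established in the cited work, the cleanest route is to invoke Proposition~\ref{prop:gd-closed} with a general $y$ and note that its proof never uses the game dynamics, rather than reproving the water-filling analysis here.
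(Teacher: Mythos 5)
Your proposal is correct and follows essentially the same route as the paper, which itself states this corollary by appealing to Proposition~\ref{prop:gd-closed} (whose proof in \citet[Appendix~B]{bailey2019fast} only uses that the primal point solves the projection problem, not the dynamics) together with Claim~(i) of Proposition~\ref{prop:phi-Q} for $Q(y)=\nabla\phi^*(y)$. Your supplementary KKT/projection derivation and the substitution yielding~\eqref{eq:gd-energy-closed} are also correct, so nothing is missing.
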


In the case where $\phi = (\|\cdot \|^2_2)/2$
as in Gradient Descent, the key property of
Proposition~\ref{prop:gd-closed}
and Corollary~\ref{cor:gd-closed} is that
the map $Q(y) = \nabla \phi^*(y)$
is not injective: in particular,
multiple dual vectors of $\R^n$ map to
the same distribution on the boundary of $\Delta_n$
under $Q$.

\subsection{Primal-Dual Mapping for Gradient Descent}
\label{app:gd:details:pd-map}

In light of Proposition~\ref{prop:gd-closed}
and Corollary~\ref{cor:gd-closed},
under expression~\eqref{eq:gd-energy-closed}
(or equivalently, under the map $Q$
from expression~\eqref{eq:gd-maps}),
we define a convenient mapping
between the primal and dual spaces
that holds for certain regions of
the boundary of the simplex.
For this, we first recall the definition
of the sets $P_i$ and $P_{i\sim(i+1)}$
from Section~\ref{def:gd-p-regions}. 

\defgdpregions*

Then we prove the following relationship
(for simplicity, we state the result
for the
iterates of~\eqref{eq:gd-pd},
but the same statement holds
for $x\in \Delta_n$
and $y \in \R^n$ such that $Q(y) = x$):

\gdpdmap*

\begin{proof}
  We first prove the equivalence
  $y^t \in P_i \iff x^t = e_i$.
  For the forward direction, assume $y^t \in P_i$.
  By definition of $P_i$,
  we have $y^t_j - y^t_i < -1$ for any $j \neq i$.
  Then by definition of $\findsupport$
  (Algorithm~\ref{alg:optimalset})
  it follows that $j \notin S^t$.
  Thus $S^t = \supp(x^t) = \{i\}$, meaning $x^t = e_i$. 

  For the backward direction,
  assume $x^t = e_i$, which means
  $S^t=\findsupport(y^t)=\{i\}$.
  By definition of $\findsupport$, this means
  for all $j \neq i$ that $y^t_j - y^t_i < -1$,
  and thus $y^t \in P_i$.

  For the second part of the proposition,
  we prove the equivalence
  $y^t\in P_{i\sim(i+1)} \iff \supp(x^t) = \{i, i+1\}$.
  For the forward direction, assume $y^t \in P_{i\sim(i+1)}$.
  By definition of $P_{i\sim(i+1)}$,
  we have $\frac{y^{t}_i + y^t_{i+1}}{2} \le y^t_i + \frac{1}{2}$
  and $\frac{y^{t}_i + y^t_{i+1}}{2} \le y^t_{i+1} + \frac{1}{2}$,
  which implies $\min\{y^t_i, y^t_{i+1}\} > y^t_j$
  for all other $j\neq i \neq i+1$.
  Then by definition of $\findsupport$ and $P_{i\sim(i+1)}$,
  it follows for all such $j$ that the condition in line 4
  is violated, and thus $j \notin S^t$.
  On the other hand, assuming without loss of generality
  that $y^t_{i+1} = \min\{y^t_i, y^t_{i+1}\}$,
  then by definition of $P_{i\sim(i+1)}$
  we find
  $y^t_i - \frac{y^t_{i}+y^t_{i+1}}{2}
  = \frac{y^t_i - y^t_{i+1}}{2} \ge -1/2$.
  Thus the condition in line 4 of $\findsupport$
  fails, and we have $S^t = \supp(x^t) = \{i, i+1\}$.

  For the backward direction, assume $\supp(x^t) = \{i, i+1\}$.
  Then by definition of $\findsupport$
  we must have $y^t_j - \frac{y^t_i + y^t_{i+1}}{2} < -\frac{1}{2}$
  for all other $j$.
  Moreover, since $x^t_i, x^t_{i+1} > 0$,
  we find 
  using expression
  \eqref{eq:gd-primal}  that
  $y^t_{i} - y^t_{i+1} \ge - 1$
  and   $y^t_{i+1} - y^t_{i} \ge - 1$.
  Thus $y^t \in P_{i\sim(i+1)}$.
\end{proof}

\subsection{Geometry of the Energy Function}
\label{app:gd:details:energy-max}

\paragraph{Energy function on the simplex boundary.}
For~\eqref{eq:gd-pd}, 
when the dual iterate $y^t$ is in
the region $P_i$ or $P_{i\sim(i+1)}$,
the energy function $\phi^*$
(from expression~\eqref{eq:gd-energy} in
Proposition~\ref{prop:gd-closed})
has the following simplified form:
\begin{align}
  \phistar(y^t)
  &\;=\;
  y^t_i - \frac{1}{2}
  &\text{for}
    \;y^t \in P_i.
    \label{eq:energy-pi}\\
  \phistar(y^t)
  &\;=\;
    \frac{1}{4}(y^t_i-y^t_{i+1})^2
    + \frac{1}{2} (y^t_{i} +y^t_{i+1}) - \frac{1}{4}
  &\text{for}
    \;y^t \in P_{i\sim(i+1)}.
    \label{eq:energy-psimi}
\end{align}
Then we have the following relationship
between energy and the maximum coordinate
of the dual iterate within these regions:

\begin{lemma}
  \label{lem:gd-energy-max}
  Let $\phi^*$ be the energy
  function from Proposition~\ref{prop:gd-closed}.
  Then for each $i \in [n]$:
  \begin{enumerate}[label={(\roman*)}]
  \item
    If $y^t \in P_i$, then
    $\phi^*(y^t)
    \le y^t_i
    \le
    \phi^*(y^t) + \frac{1}{2}
    $. 
  \item
    If $y^t \in P_{i\sim(i+1)}$, then
    $\phi^*(y^t) - \frac{1}{2}
    \le
    y^t_i, y^t_{i+1}
    \le
    \phi^*(y^t) + \frac{3}{4}
    $    
  \end{enumerate}
\end{lemma}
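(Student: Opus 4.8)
The plan is to read off claims (i) and (ii) directly from the closed-form expressions \eqref{eq:energy-pi}--\eqref{eq:energy-psimi} for the energy on the regions $P_i$ and $P_{i\sim(i+1)}$, combined with the defining inequalities of those regions from Definition~\ref{def:gd-p-regions}. Both parts are pure bookkeeping once these ingredients are in place.

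For claim (i) the argument is immediate. When $y^t \in P_i$, expression \eqref{eq:energy-pi} gives $\phi^*(y^t) = y^t_i - \tfrac12$, equivalently $y^t_i = \phi^*(y^t) + \tfrac12$, which trivially satisfies both $\phi^*(y^t) \le y^t_i$ and $y^t_i \le \phi^*(y^t) + \tfrac12$.

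For claim (ii) I would introduce the change of variables $s := \tfrac12(y^t_i + y^t_{i+1})$ and $d := y^t_i - y^t_{i+1}$, so that $y^t_i = s + \tfrac{d}{2}$ and $y^t_{i+1} = s - \tfrac{d}{2}$. The definition of $P_{i\sim(i+1)}$ forces $|d| \le 1$, hence $0 \le \tfrac14 d^2 \le \tfrac14$. Rewriting \eqref{eq:energy-psimi} as $\phi^*(y^t) = \tfrac14 d^2 + s - \tfrac14$ and inserting this bound on $\tfrac14 d^2$ sandwiches the energy against $s$: namely $\phi^*(y^t) \le s \le \phi^*(y^t) + \tfrac14$. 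Finally, since $|d/2| \le \tfrac12$, we get $s - \tfrac12 \le y^t_i, y^t_{i+1} \le s + \tfrac12$; chaining this with the sandwich on $s$ yields $\phi^*(y^t) - \tfrac12 \le y^t_i, y^t_{i+1} \le \phi^*(y^t) + \tfrac34$, which is exactly the claim.

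There is no genuine obstacle here — the statement is elementary given the closed forms. The only point deserving care is that \eqref{eq:energy-pi} and \eqref{eq:energy-psimi} are valid \emph{precisely because} $y^t \in P_i$ (resp.\ $y^t \in P_{i\sim(i+1)}$) forces the support set $S^t$ computed by \findsupport{} to equal $\{i\}$ (resp.\ $\{i, i+1\}$) via Proposition~\ref{prop:gd-pd-map}, so that the general quadratic expression of Proposition~\ref{prop:gd-closed} collapses to the stated simplified forms. The asymmetric constants $-\tfrac12$ and $+\tfrac34$ in part (ii) arise from combining the $\tfrac14$ slack contributed by the squared-gap term with the $\tfrac12$ slack contributed by $|d/2|$.
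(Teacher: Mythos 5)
Your proof is correct and follows essentially the same route as the paper's: both parts reduce directly to the closed forms \eqref{eq:energy-pi} and \eqref{eq:energy-psimi}, with part (ii) hinging on the constraint $|y^t_i - y^t_{i+1}| \le 1$ from the definition of $P_{i\sim(i+1)}$. Your $(s,d)$ change of variables is just a cleaner repackaging of the paper's chain of inequalities bounding $y^t_i \pm \tfrac12$ against $\tfrac12(y^t_i + y^t_{i+1})$ and then against $\phi^*(y^t)$.
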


\begin{proof}
  Fix $y^t \in P_i$.
  Then by~\eqref{eq:energy-pi}, we have 
  $\phi^*(y^t) = y^t_i - \frac{1}{2}$.
  Thus $\phi^*(y^t) \le y^t_i \le \phi^*(y^t) + \frac{1}{2}$.
  For $y^t \in P_{i\sim(i+1)}$, 
  we have by definition that
  $y^t_i - 1 \le y^t_{i+1} \le y^t_i + 1$.
  Then using the form of $\phistar$
  from~\eqref{eq:energy-psimi}, we can bound
  \begin{align*}
    y^t_i + \frac{1}{2}
    \;\ge\;
    \frac{y^t_i + y^t_{i+1}}{2}
    \;=\;
      \phi^*(y^t)
      -
      \frac{1}{4}(y^t_i - y^t_{i+1})^2 + \frac{1}{4} 
    \;\ge\;
      \phi^*(y^t) \;,
  \end{align*}
  where the final inequality comes from
  the fact that $|y^t_i - y^t_{i+1}| \le 1$
  when $y^t \in P_{i\sim(i+1)}$.
  Rearranging, we have $y^t_i \ge \phi^*(y^t) - \frac{1}{2}$.
  An identical calculation also finds that
  $y^t_{i+1} \ge \phi^*(y^t) - \frac{1}{2}$.

  For the upper bound, we similarly have
  \begin{align*}
    y^t_i - \frac{1}{2}
    \;\le\;
    \frac{y^t_i + y^t_{i+1}}{2}
    \;=\;
    \phi^*(y^t)
    -
    \frac{1}{4}(y^t_i - y^t_{i+1})^2 + \frac{1}{4} 
    \;\le\;
    \phi^*(y^t) + \frac{1}{4}\;.
  \end{align*}
  Rearranging, we have
  $y^t_i \le \phi^*(y^t) + \frac{3}{4}$.
  An identical calculation also yields
  $y^t_{i+1} \le \phi^*(y^t) + \frac{3}{4}$.
\end{proof}

\paragraph{Energy of dual iterate when
  primal iterate has full support.}
For $y \in \R^n$ such that $S = \findsupport(y) = [n]$,
then by Corollary~\ref{cor:gd-closed},
the primal iterate $x = Q(y)$ is \textit{interior}
and has full support (e.g., $\supp(x) = [n]$). 
In this case, letting $\1 \in \R^n$ denote the
all-ones vector,
the energy function can be simplified as
\begin{equation}
  \phi^*(y^t)
  \;=\;
  \frac{\|y^t\|^2_2}{2} +
  \frac{\langle y^t, \1 \rangle}{2n}
  -
  \frac{(\langle y^t, \1 \rangle)^2}{2n} - \frac{1}{2n}
  \;.
  \label{eq:energy-interior}
\end{equation}


\section{Gradient Descent on High-Dimensional RPS
  in Large Stepsize Regime}
\label{app:gd:large}

In this section we develop the proof of
Theorem~\ref{thm:gd-large-stepsize},
which establishes that Gradient Descent
obtains $O(\sqrt{T})$ regret
on high-dimensional RPS games
in a regime of \textit{large constant stepsizes}:

\gdrpslargestepsize*

\noindent
The organization of this section is as follows:
\begin{itemize}[
    leftmargin=2em,
]
\item
    In Section~\ref{app:gd:large:convergence},
    we give the proof of Lemma~\ref{lem:gd-large-initial},
    which shows that, for nearly all initializations,
    the primal iterates of Gradient Descent
    quickly converge to a vertex of $\Delta_n$
    when the stepsize is sufficiently large.
\item
    In Section~\ref{app:gd:large:cycle-pl},
    we then develop the proof of 
    Lemma~\ref{lem:gd-cycling-pl-overview}.
    Here, we show in the large stepsize regime
    that the primal iterates of Gradient Descent
    cycle through the vertices of $\Delta_n$
    in phases, in a manner analogous to Fictitious Play
    (c.f., Lemma~\ref{lem:fp-cycling}).
    Lemma~\ref{lem:gd-cycling-pl-overview}
    also establishes similar bounds on the length
    of each phase and on the energy growth
    between phases (c.f., Lemma~\ref{lem:phase-length}
    and Proposition~\ref{prop:fp-energy-growth} for Fictitious Play). 
\item 
    Using these ingredients, the full
    proof of Theorem~\ref{thm:gd-large-stepsize}
    is given in Section~\ref{app:gd:large:regret}
    and again follows similarly to the
    proof of Theorem~\ref{thm:fp-rps}
    for Fictitious Play.
\end{itemize}

\subsection{Fast Convergence to a Vertex}
\label{app:gd:large:convergence}

\gdlargeinitial*

\begin{proof}
  Fix $x^0 \in \Delta^n$, and recall by definition of
  \eqref{eq:gd-pd} that $y^1 = \eta Ax^0$.
  In particular, by the structure of the entries of
  an RPS matrix $A$, we have for all $i \in [n]$:
  \begin{equation*}
    y^1_i
    \;=\;
    \eta 
    \big(a_{i-1} \cdot x^0_{i-1} - a_i \cdot x^0_{i+1}\big)\;.
  \end{equation*}
  Without loss of generality, assume
  indices $i$ and $j$ are the largest and
  second largest coordinates of $y^1$ satisfying
  $y^1_i = \max_{k \in [n]} y^1_k$ and
  $y^1_j = \max_{k \in [n] \setminus \{i\}} y^1_k$.
  Observe from Proposition~\ref{prop:gd-pd-map} that
  if $y^1_i - y^1_j > 1$, then $y^1 \in P_i$,
  and thus $x^1 = e_i$. In particular, this
  condition will be satisfied when
  \begin{equation}
    \eta 
    \Big(
    \big(a_{i-1} \cdot x^0_{i-1} - a_i \cdot x^0_{i+1}\big)
    - 
    \big(a_{j-1} \cdot x^0_{j-1} - a_j \cdot x^0_{j+1}\big)
    \Big)
    \;>\; 1 \;.
    \label{eq:gdl-1}
  \end{equation}
  Recall from expression~\eqref{eq:gamma-main} 
  that the constant $\gamma(x^0)$ is given by
  \begin{equation}
    \gamma(x^0)
    \;=\;
    \min_{k, \ell \in [n]}
    \Big|
    \big(a_{k-1} \cdot x^0_{k-1} - a_k \cdot x^0_{k+1}\big)
    - 
    \big(a_{\ell-1} \cdot x^0_{\ell-1} - a_\ell \cdot x^0_{\ell+1}\big)
    \Big| \;.
    \label{eq:gd-gamma}
  \end{equation}
  Observe that if $\gamma(x^0) > 0$, 
  and so long as $\eta > 1/\gamma(x^0)$,
  then the inequality in~\eqref{eq:gdl-1}
  is always satisfied, which ensures $y^1 \in P_i$.   
  Fixing the matrix $A$,
  the set of points $x \in \Delta_n$
  where $\gamma(x) = 0$
  is restricted to the linear constraint in
  ~\eqref{eq:gd-gamma} and thus has
  (Lebesgue) measure zero. 
  Thus for almost all initial conditions $x^0$,
  setting $\eta > 1/\gamma(x^0)$ guarantees $x^1 = e_i$.
\end{proof}

\subsection{Cycling, Energy Growth, and Phase Length Bounds}
\label{app:gd:large:cycle-pl}

With sufficiently large stepsizes,
once the primal iterate reaches a vertex $e_i$
we prove in Lemma~\ref{lem:gd-cycling-pl-overview} below
that the iterates subsequently
cycle through the vertices in phases in the order
\begin{equation*}
e_i \to e_{i+1} \to \dots \to e_{n} \to e_{1} \to \dots \to e_i \to \dots
\end{equation*}
similar to the behavior of Fictitious Play
from Lemma~\ref{lem:fp-cycling}.
For a sequence of consecutive primal iterates
all at the same vertex, the energy
growth of $\phi^*$ is zero. 

On the other hand, between vertices
$e_i$ and $e_{i+1}$, the primal
iterates may spend a constant number of
iterations on the simplex edge between
vertices $e_i$ and $e_{i+1}$
(e.g., where $\supp(x) = \{i, i+1\}$ for $x \in \Delta_n$).
Consecutive primal iterates on these edges
correspond to a constant energy growth per step.
However, using a sufficiently large constant stepsize,
we can ensure that at most a \textit{single round} is spent
on any such edge before transitioning to the next vertex.
This ensures a regularity in the cycling behavior
closely mirroring that of Fictitious Play,
ultimately leading to bounds on the energy growth and
length of each phase analogous to those in 
Lemma~\ref{lem:phase-length}. 

To formally analyze this cycling behavior
and its consequences, we reuse the notation
of phases from the analysis of Fictitious Play, 
which we restate here:

\defphases*

\noindent
We then formally state the previously-described
behavior in the following lemma:

\begin{restatable}{lem}{gdcyclingphaselengthoverview}
  \label{lem:gd-cycling-pl-overview}
  Let $x^0 \in \Delta_n$ be
  such that $\gamma(x^0) > 0$
  (for $\gamma$ defined in~\eqref{eq:gd-gamma}),
  and assume $\eta > \max \{2/a_{\min}, 1/\gamma(x_0)\}$. 
  Then there exists $t_0 \le O(n)$ 
  such that $x^{t_0} = e_i$ for some $i \in [n]$. 
  Moreover, for each $k = 1, \dots, K$:
  \begin{enumerate}[
    label={(\roman*)}, 
    leftmargin=3em
  ]
  \item
    $y^{t_k} \in P_{i+k\;(\mod n)}$,
    and
    $y^{t_{k+1}-1} \in
    \big\{
    P_{i+k-1\;(\mod n)},
    P_{(i+k+1\;(\mod n))\sim(i+k(\mod n))}
    \big\}$.
  \item
    $\phi^*(y^{t_{k+1}}) - \phi^*(y^{t_{k}})
    \le O(1)$
  \item
    $\tau_k \ge \alpha_k \cdot \phi^*(y^{t_k}) - \beta_k$,
    where $\alpha_k > 0$ and
    $\beta_k > 0$ are constants.
  \end{enumerate}
\end{restatable}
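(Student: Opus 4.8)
The plan is to follow the same four-step template as the Fictitious Play analysis --- a within-phase structure result (cf.\ Proposition~\ref{prop:fp-rps-structure}), a cycling statement (cf.\ Lemma~\ref{lem:fp-cycling}), an energy-growth bound (cf.\ Proposition~\ref{prop:fp-energy-growth}), and a phase-length bound (cf.\ Lemma~\ref{lem:phase-length}) --- while carrying the extra bookkeeping needed to handle a possible one-step detour onto a simplex edge. For the existence of $t_0 \le O(n)$, Lemma~\ref{lem:gd-large-initial} already gives $x^1 = e_i$ for some $i \in [n]$ whenever $\eta > 1/\gamma(x^0)$, so one may take $t_0 = 1$; the $O(n)$ slack is used to begin the main induction from a phase whose dual coordinates are in a ``clean'' configuration --- the runner-up coordinate is $y_{i-1}$ and every non-adjacent coordinate lies a definite amount below $y_i - 1$ --- which is reached within $O(n)$ phases since every coordinate is pushed strictly downward during at least one of the first $n$ phases. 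The within-phase structure is then the direct analogue of Proposition~\ref{prop:fp-rps-structure}: if $y^t \in P_j$ with $y^t_{j-1}$ the runner-up, then $x^t = e_j$ by Proposition~\ref{prop:gd-pd-map}, so $\Delta y^t = \eta A_j$ is the fixed velocity with $\Delta y^t_{j-1} = -\eta a_{j-1}$, $\Delta y^t_{j+1} = +\eta a_j$, and all other coordinates unchanged (cf.\ \eqref{eq:dual-dynamics}); since $y_j$ is frozen and $y_{j-1}$ only decreases, the iterate stays in $P_j$ for $\tau \approx (y^t_j - y^t_{j+1} - 1)/(\eta a_j)$ steps, and throughout this stretch $\phi^*(y^{t+s}) = y^{t+s}_j - \frac{1}{2} = y^t_j - \frac{1}{2}$ is constant by \eqref{eq:energy-pi}. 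This is the regime in which the GD dual trajectory coincides exactly with that of Fictitious Play.

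The core of the argument is the phase-transition step, which uses the large-stepsize hypothesis $\eta > 2/a_{\min}$ (equivalently $\eta a_k > 2$ for every $k$). When the iterate leaves $P_j$, only $y_j$ and $y_{j+1}$ exhibit the ``crossing'' behaviour, so it cannot skip over $P_{j+1}$, and lands either directly in $P_{j+1}$ or in the edge region $P_{j\sim(j+1)}$. In the latter case I would show that exactly one further step returns to a vertex region, and specifically to $P_{j+1}$: writing the primal iterate on the edge as $\alpha e_j + \beta e_{j+1}$ with $\alpha + \beta = 1$, a short computation from \eqref{eq:gd-pd} gives $\Delta y_j = -\eta a_j \beta$, $\Delta y_{j+1} = +\eta a_j \alpha$, $\Delta y_{j+2} = +\eta a_{j+1}\beta$, and all other coordinates non-increasing, whence $(\text{new } y_{j+1}) - (\text{new } y_j) = (y_{j+1}-y_j) + \eta a_j \ge -1 + \eta a_j > 1$, using $|y_j - y_{j+1}| \le 1$ on the edge. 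The delicate point is that the upward jump of $y_{j+2}$ by $\eta a_{j+1}\beta$ must not let it overtake $y_{j+1}$; this is exactly where the inductively-maintained invariant that every non-adjacent coordinate sits far below the leading coordinate --- a consequence of the long length of the phase in which that coordinate was last decreased --- is required. This is the main obstacle: unlike Fictitious Play, where Lemma~\ref{lem:fp-cycling} holds unconditionally before any phase-length estimate, here the cycling claim~(i), the non-adjacent-gap invariants, and the phase-length bound~(iii) are genuinely coupled and must be established by a single joint induction on $k$, and propagating the gap invariants correctly through both the linear vertex stretch and the single edge step is where the bookkeeping is heaviest.

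With cycling in hand, the energy-growth bound~(ii) is short: $\phi^*$ is exactly constant on the vertex stretch, and any single step raises it by at most $\frac{1}{2}\|\eta A x^t\|_2^2 \le 2\eta^2 a_{\max}^2 = O(1)$, since $\phi^*$ has $1$-Lipschitz gradient $Q$ (the conjugate of the $1$-strongly convex $\phi$), $\langle x^t, \eta A x^t\rangle = 0$ by skew-symmetry, and $\phi^*$ is non-decreasing along the iterates by Proposition~\ref{prop:skew-gradient-descent}; since the large stepsize forces at most one non-vertex step per phase, $\phi^*(y^{t_{k+1}}) - \phi^*(y^{t_k}) = O(1)$. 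For the phase-length bound~(iii), the within-phase structure gives $\tau_k \gtrsim (y^{t_k}_{j_k} - y^{t_k}_{j_k+1})/(\eta a_{j_k}) \gtrsim (\phi^*(y^{t_k}) - y^{t_k}_{j_k+1})/(\eta a_{j_k})$ using part~(i) of Lemma~\ref{lem:gd-energy-max}, where $j_k := i+k \ (\mathrm{mod}\ n)$; then, exactly as in the proof of Lemma~\ref{lem:phase-length}, the cycling implies that $y^{t_k}_{j_k+1}$ was last decreased a fixed number of phases earlier (of order $n$) by an amount proportional to $\eta \tau_{k - \Theta(n)}$, and substituting the inductive lower bound on that earlier phase length (together with monotonicity of $\phi^*$) yields $y^{t_k}_{j_k+1} \le c\,\phi^*(y^{t_k}) + O(1)$ for an absolute constant $c < 1$, hence $\tau_k \ge \alpha_k \phi^*(y^{t_k}) - \beta_k$ with $\alpha_k > 0$ and $\beta_k > 0$ absolute constants. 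Assembling (ii) and (iii) into the recurrence $T = \sum_k \tau_k \gtrsim \sum_k \phi^*(y^{t_k}) \gtrsim (\text{number of phases})^2$ then reproduces the $O(\sqrt{T})$ regret bound as in Theorem~\ref{thm:fp-rps}, which is the content of Section~\ref{app:gd:large:regret} rather than of this lemma.
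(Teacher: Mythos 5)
Your proposal follows essentially the same architecture as the paper's proof: Lemma~\ref{lem:gd-large-initial} for the initial vertex, the two propositions (\ref{prop:pi-cycling} and~\ref{prop:pisim-cycling}) describing within-vertex-phase dynamics and the one-step edge detour, the gap condition on $y_{j+2}$ that rules out skipping (transitions~(d)/(e)) once the stepsize constraint $\eta a_k > 2$ is in force, and the FP-style inductive phase-length argument via Lemma~\ref{lem:gd-energy-max}. Your velocity computation on the edge and the inequality $(y_{j+1}-y_j) + \eta a_j > 1$ are exactly the calculations in Proposition~\ref{prop:pisim-cycling}, and you correctly single out the $y_{j+2}$-overtaking issue as the delicate point.

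One small organizational difference: you frame (i), the non-adjacent gap invariant, and the phase-length bound~(iii) as a \emph{single joint induction}, whereas the paper establishes Part~(i) standalone by an energy-pumping argument --- since $\phi^*$ increases by an absolute constant per region crossing, and coordinate $(i+2)$ is frozen over $\Omega(n)$ consecutive region crossings, one gets $y^t_{i+2} \le d\,\phi^*(y^t)$ with $d<1$ after $t_0 = O(n)$ without yet controlling phase lengths; (iii) is then proved afterwards by an FP-style standalone induction. Either organization works, and the joint-induction version is arguably tighter bookkeeping, but the paper's sequential presentation is a bit simpler. Also, your Part~(ii) uses a uniform $1$-smoothness bound $\Delta\phi^* \le \tfrac{1}{2}\|\eta A x^t\|_2^2$ instead of the case analysis of Lemma~\ref{lem:gd-energy-growth}; that is cleaner for the upper bound, but note the case analysis is still needed elsewhere (to certify that each \emph{strict} energy increase is $\Omega(1)$, used in Theorem~\ref{thm:gd-large-stepsize}), so you cannot dispense with it entirely.
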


Observe that, using the primal-dual mapping
from Proposition~\ref{prop:gd-pd-map},
Part (i) of the lemma establishes the cycling
of the primal iterates by characterizing
the behavior of the corresponding dual iterates.
In particular, Part (i) shows that
$x^{t_k} = e_{i+k\;(\mod n)}$
and $x^{t_{k+1}} = e_{i+k+1\;(\mod n)}$,
and either $x^t = e_{i+k\;(\mod n)}$
for all $t_k \le t \le t_{k+1}$,
or $\supp(x^{t_{k+1} - 1}) = \{i+k\;(\mod n), i+k+1\;(\mod n)\}$.
In other words, the primal iterates either
transition directly from vertex $e_{i+k\;(\mod n)}$
to $e_{i+k+1\;(\mod n)}$,
or at most one iterate is spent on the simplex edge
connecting these vertices before transitioning
to $e_{i+k+1\;(\mod n)}$.
The constraint $\eta > \frac{2}{a_{\min}}$ ensures
at most a single iterate is spent on this edge. 

The proof of Lemma~\ref{lem:gd-cycling-pl-overview}
relies on several helper propositions,
which we develop in the following two subsections. 

\subsubsection{Establishing the Cycling Order}

Recall that for any $n$-dimensional RPS matrix,
for $x^t \in \Delta_n$, then
$\Delta y^t = y^{t+1} - Y^t = \eta A x^t$
with coordinates given by
\begin{equation}
  \Delta y^t_i
  \;=\;
  \eta \big(
  a_{i-1} x^t_{i-1}
  -
  a_i x^t_{i+1}
  \big) \;.
  \label{eq:gd-velocity}
\end{equation}
In the following two propositions, we establish the order
in which the dual iterates cycle through the regions 
from Proposition~\ref{def:gd-p-regions}.

\begin{restatable}{prop}{propgdcyclingpi}
  \label{prop:pi-cycling}
  Fix $i\in [n]$ and suppose
  $y^t \in P_i$.
  Assume $\eta > 2/a_{\min}$, 
  and let
  $\tau := \Big\lceil
  \frac{y^t_i - y^t_{i+1} - 1}{\eta \cdot a_i}\Big\rceil$.
  Then
  $y^{t+s} \in P_i$ for all $1 \le s \le \tau-1$.
  Moreover, at time $t+\tau$, we have either 
  (i) $y^{t+\tau} \in P_{i+1}$,
  or (ii) $y^{t+\tau} \in P_{i\sim(i+1)}$,
  and additionally: 
  \begin{itemize}
    \item 
     $y^{t+\tau}_i = y^{t}_i$
    \item
    $y^{t+\tau}_{i-1} < y^{t}_{i-1} - 2\tau$
    \item 
    $y^{t+\tau}_{i+1} - y^{t+\tau}_j > 1$
    for all $j \in [n] \setminus \{i, i+1\}$.
  \end{itemize}
\end{restatable}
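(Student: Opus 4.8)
The plan is to run the argument through the primal--dual dictionary of Proposition~\ref{prop:gd-pd-map}. Since $y^t\in P_i$, we have $x^t=e_i$, so the increment is the fixed vector $\Delta y^t=\eta Ae_i=\eta A_i$, whose only nonzero entries are $\Delta y^t_{i-1}=-\eta a_{i-1}$ and $\Delta y^t_{i+1}=+\eta a_i$ (cf.~\eqref{eq:gd-velocity}). Thus, as long as the iterate stays in $P_i$, the dual trajectory moves along a fixed line: coordinate $i$ is frozen, coordinate $i-1$ strictly decreases, coordinate $i+1$ strictly increases, and all remaining coordinates are frozen. Note $y^t\in P_i$ forces $y^t_i-y^t_{i+1}>1$, so the argument of the ceiling defining $\tau$ is positive and $\tau\ge 1$.

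First I would prove by induction on $s$ that $y^{t+s}=y^t+s\,\eta A_i$ and $y^{t+s}\in P_i$ for $0\le s\le\tau-1$. The inductive step is routine: among the defining inequalities $y_i-y_j>1$ of $P_i$, the only one that can be violated as $s$ grows is the one for $j=i+1$ (since $y_i$ is frozen, $y_{i-1}$ is falling, and the rest are frozen), and $y^{t+s}_i-y^{t+s}_{i+1}=(y^t_i-y^t_{i+1})-s\,\eta a_i>1$ precisely when $s<(y^t_i-y^t_{i+1}-1)/(\eta a_i)$, which holds for all $s\le\tau-1$ by definition of $\tau$; while $y^{t+s}\in P_i$, Proposition~\ref{prop:gd-pd-map} keeps $x^{t+s}=e_i$ and the increment equal to $\eta A_i$, closing the induction. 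This is the exact analogue of Proposition~\ref{prop:fp-rps-structure} for Fictitious Play, adapted to the region $P_i$.

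Next I would read off the exit point $y^{t+\tau}=y^t+\tau\eta A_i$. The first two bullets are immediate: $y^{t+\tau}_i=y^t_i$, and $y^{t+\tau}_{i-1}=y^t_{i-1}-\tau\eta a_{i-1}<y^t_{i-1}-2\tau$ using $\eta a_{i-1}\ge\eta a_{\min}>2$. From $\tau\ge(y^t_i-y^t_{i+1}-1)/(\eta a_i)$ together with $\tau<(y^t_i-y^t_{i+1}-1)/(\eta a_i)+1$ one obtains the sandwich $y^t_i-1\le y^{t+\tau}_{i+1}<y^t_i-1+\eta a_i$, i.e.\ $-1\le y^{t+\tau}_{i+1}-y^{t+\tau}_i<\eta a_i-1$; since $\eta a_i>2$, the two cases $y^{t+\tau}_{i+1}-y^{t+\tau}_i>1$ (Case (i)) and $\le 1$ (Case (ii)) are exhaustive. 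In both cases every coordinate $j\notin\{i,i+1\}$ satisfies $y^{t+\tau}_j\le y^t_j<y^t_i-1=y^{t+\tau}_i-1$ (unchanged and below the $P_i$ threshold for $j\neq i-1$; pushed far more negative for $j=i-1$). In Case (i), combined with $y^{t+\tau}_{i+1}>y^{t+\tau}_i+1$, this places $y^{t+\tau}\in P_{i+1}$ (and the third bullet is then subsumed by $P_{i+1}$-membership); in Case (ii), combined with $y^{t+\tau}_i-1\le y^{t+\tau}_{i+1}\le y^t_i+1$, a short check of the half-sum inequalities of Definition~\ref{def:gd-p-regions} places $y^{t+\tau}\in P_{i\sim(i+1)}$.

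The one genuinely delicate point is the third bullet in Case (ii): $y^{t+\tau}_{i+1}-y^{t+\tau}_j>1$ for $j\notin\{i-1,i,i+1\}$. For $j=i-1$ this is easy, since $y^{t+\tau}_{i-1}$ has fallen by more than $2\tau\ge 2$ below $y^t_{i-1}$, which already sits below $y^t_i-1\le y^{t+\tau}_{i+1}$. But for the remaining coordinates $y^{t+\tau}_j=y^t_j$ is merely known to lie below $y^t_i-1$, whereas $y^{t+\tau}_{i+1}$ may be as low as $y^{t+\tau}_i-1=y^t_i-1$, so membership $y^t\in P_i$ by itself does not force a gap exceeding $1$. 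I would close this by exploiting the cycling structure: such a coordinate $j$ is frozen throughout the current phase and last strictly decreased during an earlier phase by an amount at least $\eta a_{\min}>2$, which upgrades $y^t_i-y^t_j>1$ to $y^t_i-y^t_j>2$; equivalently, this extra margin is carried as part of the inductive invariant in the proof of Lemma~\ref{lem:gd-cycling-pl-overview} that invokes this proposition. I expect this margin bookkeeping to be the only real obstacle; everything else is linear arithmetic on $y^t+s\,\eta A_i$ combined with the region definitions.
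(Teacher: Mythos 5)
Your proposal follows the same route as the paper's own proof: the linear trajectory $y^{t+s}=y^t+s\,\eta A_i$ inside $P_i$ established by induction (the analogue of Proposition~\ref{prop:fp-rps-structure}), the exit-time arithmetic giving $y^{t+\tau}_i=y^t_i$ and $y^{t+\tau}_{i-1}<y^t_{i-1}-2\tau$ from $\eta a_{i-1}\ge\eta a_{\min}>2$, the bound $y^{t+\tau}_i-y^{t+\tau}_{i+1}\le 1$ producing the two exit cases, and the verification of $P_{i+1}$ membership in case (i) and of the half-sum inequalities for $P_{i\sim(i+1)}$ membership in case (ii). All of that matches the paper and is correct.

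The place you diverge is exactly the point you flag as delicate: the third bullet in case (ii). You are right that it does not follow from $y^t\in P_i$ and $\eta>2/a_{\min}$ alone; for instance with all $a_i=1$, $\eta=3$, $y^t_i=10$, $y^t_{i+1}=6.5$ and some $y^t_j=8.8$ (all consistent with $y^t\in P_i$), one gets $\tau=1$, $y^{t+1}_{i+1}=9.5$, hence case (ii) but $y^{t+1}_{i+1}-y^{t+1}_j=0.7\le 1$. It is worth knowing that the paper's proof never verifies this bullet in case (ii) either: it checks the bullet only in case (i), where it is subsumed by $P_{i+1}$ membership, even though the bullet is precisely the hypothesis under which Proposition~\ref{prop:pisim-cycling} is later invoked. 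So your observation identifies a real omission in the argument as written, and your proposed repair, carrying a margin strictly larger than the $P_i$ threshold on the stale coordinates as an invariant maintained along the cycling in Lemma~\ref{lem:gd-cycling-pl-overview} (those coordinates last moved by a full step $\eta a_{\min}>2$ in an earlier phase), is a sensible way to close it. However, in your write-up this remains a sketch: you neither formulate the strengthened invariant precisely nor propagate it through the phase transitions, so on this one point your proposal is no more complete than the paper, only more candid about what is missing. Everything else is correct and essentially identical to the paper's argument.
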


\begin{proof}
  By Proposition~\ref{prop:gd-pd-map},
  if $y^{\ell} \in P_i$, then $x^{\ell} = e_i$. 
  Thus $\Delta y^{\ell} = y^{\ell+1}-y^\ell
  = \eta A x^{\ell} = \eta A_i$,
  which is a constant vector.
  Inductively (and following the
  proof of Proposition~\ref{prop:fp-rps-structure}
  in the Fictitious Play case),
  in order to prove $y^{t+s} \in P_i$
  for all $1 \le s \le \tau-1$, it
  suffices to show that
  \begin{equation}
    y^{t+s}_j
    \;=\;
    y^t_j + s \cdot \eta A_{ji}
    \;<\;
    y^{t}_i + s \cdot \eta A_{ii} - 1
    \;=\;
    y^{t}_i  - 1
    \quad
    \text{for all $j \neq i$}.
    \label{eq:p1-goal}
  \end{equation}
  For this, we can directly check using
  the definition of the entries of $A_i$ that
  \begin{equation}
    \begin{aligned}
      y^{t+s}_{i-1}
      &\;=\;
        y^t_{i-1} - s \cdot \eta a_{i-1} \\
      y^{t+s}_{i+1}
      &\;=\;
        y^t_{i+1} + s \cdot \eta a_i  \\
      \text{and}\quad
      y^{t+s}_j
      &\;=\;
        y^t_j
      &\text{for $j \in [n] \setminus \{i-1, i, i+1\}$}\;.
    \end{aligned}
      \label{eq:p1-1}
  \end{equation}
  Because $y^t \in P_i$,
  we have $y^t_i - y^t_{i-1} > 1$.
  Then using the positivity of $a_{i-1}$,
  we have at coordinate $i-1$ for all $s \ge 1$ that
  \begin{equation}
    y^{t+s}_{i-1} - y^{t+s}_i
    \;=\;
    y^t_{i-1} - y^t_i
    - s \cdot \eta a_{i-1}
    \;\le\;
    -1 - 2s\;,
    \label{eq:p1-2}
  \end{equation} 
  where the final inequality comes from
  the fact that $\eta > 2/a_{\min}$,
  and thus  $\eta a_{i-1} \ge 2$. 
  For coordinates $j \in [n] \setminus \{i, i+1\}$,
  as $y^t_i - y^t_j > 1$ by assumption of $y^t \in P_i$, we similarly
  find that $y^{t+s}_i - y^{t+s}_j > 1$
  for all $s \ge 1$.
  Thus~\eqref{eq:p1-goal} holds for both cases.
  Now at coordinate $i+1$, observe that 
 that 
  \begin{equation*}
    y^{t+s}_i - y^{t+s}_{i+1}
    \;\ge\;
    y^{t}_i - y^t_{i+1} - s \cdot \eta \cdot a_i 
    \;>\;
    y^{t}_i - y^t_{i+1} - (y^t_i - y^t_{i+1} - 1)
    \;=\; 1 
  \end{equation*}
  for all $1 \le s \le \tau-1 <
  \frac{y^t_i - y^t_{i+1} -1}{\eta \cdot a_i}$.
  Thus expression~\eqref{eq:p1-goal} also holds
  for coordinate $i+1$, which means
  $y^{t+s} \in P_i$ for all $1 \le s \le \tau-1$.
  
  To prove the second part of the proposition,
  observe that the first part establishes
  at time $t+\tau$ that
  $y^{t+\tau}_i = y^{t}_i + \tau \cdot \eta A_{ii}
  = y^t_i$.
  Also, from expression~\eqref{eq:p1-2}, we
  conclude that
  $y^{t+\tau}_{i-1} < y^{t}_{i-1} - 2\tau$. 
  Moreover, at time $t+\tau$ we also have
  \begin{align*}
    y^{t+\tau}_i - y^{t+\tau}_{i+1}
    &\;\le\;
      y^{t}_i - y^t_{i+1} - \tau \cdot \eta \cdot a_i \\
    &\;\le\;
      y^{t}_i - y^t_{i+1} - (y^t_i - y^t_{i+1} - 1)
      \;=\; 1 \;.
  \end{align*}
  If  $y^{t+\tau}_i - y^{t+\tau}_{i+1} < -1$,
  then it follows that from the previous arguments
  that also $y^{t+\tau}_{i+1} - y^{t+\tau}_j > 1$
  for all $j \in [n] \setminus \{i, i+1\}$.
  This establishes that $y^{t+\tau} \in P_{i+1}$.

  If instead $1 \ge y^{t+\tau}_i - y^{t+\tau}_{i+1} \ge -1$,
  then it still remains that
  $y^{t+\tau}_i - y^{t+\tau}_j \ge 1$
  for all other $j \in [n] \setminus \{i, i+1\}$.
  In this case, as $y^{t+\tau}_{i+1} \ge y^{t+\tau}_i - 1$,
  we can then further verify for all such $j$ that
  $\frac{y^{t+\tau}_i + y^{t+\tau}_{i+1}}{2} - y^{t+\tau}_j
  \ge \frac{1}{2}$,
  and thus $y^{t+\tau} \in P_{i\sim(i+1)}$.
\end{proof}

\begin{restatable}{prop}{propgdcyclingpisim}
  \label{prop:pisim-cycling}
  Fix $i\in [n]$ and suppose
  $y^t \in P_{i\sim(i+1)}$ such that  
  $y^{t+\tau}_{i+1} - y^{t+\tau}_j > 1$
  for all $j \in [n] \setminus \{i, i+1\}$.
  Assume $\eta > 2/a_{\min}$.
  Then at time $t+1$,
  we have $y^{t+1}_{i-1} < y^t_{i-1} - 2c$
  for some positive constant $c> 0$,
  and one of three scenarios occurs:
  if 
  $y^t_{i+1} - y^t_{i+2} \ge 1 + \eta a_{i+1}$,
  then (i) $y^{t+1} \in P_{i+1}$. 
  Otherwise, either:
  (ii) $y^{t+1} \in P_{i+2}$ 
  or (iii) $y^{t+1} \in P_{(i+1)\sim(i+2)}$.
\end{restatable}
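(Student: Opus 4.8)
The plan is to repeat the skew-gradient bookkeeping of Proposition~\ref{prop:pi-cycling}, but starting the step from an \emph{edge} region $P_{i\sim(i+1)}$ rather than a vertex region. First I would invoke Proposition~\ref{prop:gd-pd-map}: since $y^t\in P_{i\sim(i+1)}$, we have $\supp(x^t)=\{i,i+1\}$, and the closed form of Proposition~\ref{prop:gd-closed} with $S^t=\{i,i+1\}$ gives $x^t_i=\tfrac12(y^t_i-y^t_{i+1}+1)$ and $x^t_{i+1}=\tfrac12(y^t_{i+1}-y^t_i+1)$, both strictly positive and summing to $1$. Writing $x^t=x^t_i e_i+x^t_{i+1}e_{i+1}$, using the RPS column structure $A_i=-a_{i-1}e_{i-1}+a_i e_{i+1}$, $A_{i+1}=-a_i e_i+a_{i+1}e_{i+2}$, and $\Delta y^t=\eta A x^t$, the only nonzero coordinates of the velocity are
\begin{equation*}
\Delta y^t_{i-1}=-\eta a_{i-1}x^t_i,\qquad \Delta y^t_i=-\eta a_i x^t_{i+1},\qquad \Delta y^t_{i+1}=\eta a_i x^t_i,\qquad \Delta y^t_{i+2}=\eta a_{i+1}x^t_{i+1}.
\end{equation*}
The first of these, together with $x^t_i>0$, immediately yields $y^{t+1}_{i-1}<y^t_{i-1}-2c$ with $c:=\tfrac12\eta a_{i-1}x^t_i>0$.

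For the region of $y^{t+1}$, note that coordinates $i+1$ and $i+2$ strictly increase, coordinates $i-1$ and $i$ strictly decrease, and every other coordinate is unchanged, so only $i+1$ and $i+2$ can be ``active'' after the step. I would first establish the margin of coordinate $i+1$ over all $j\notin\{i+1,i+2\}$: for $j\notin\{i-1,i,i+1,i+2\}$ this is immediate from the hypothesis $y^t_{i+1}-y^t_j>1$ together with $y^{t+1}_{i+1}>y^t_{i+1}$ and $y^{t+1}_j=y^t_j$; for $j=i$ one computes $y^{t+1}_{i+1}-y^{t+1}_i=(y^t_{i+1}-y^t_i)+\eta a_i\ge -1+\eta a_i>1$, using $|y^t_i-y^t_{i+1}|\le 1$ and $\eta a_i\ge \eta a_{\min}>2$; and for $j=i-1$ one combines $y^t_{i+1}-y^t_{i-1}>1$ with the decrease at coordinate $i-1$.

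The case split is then on $y^t_{i+1}-y^t_{i+2}$, which controls how close the edge step leaves coordinate $i+2$ to coordinate $i+1$. If $y^t_{i+1}-y^t_{i+2}\ge 1+\eta a_{i+1}$, then $y^{t+1}_{i+1}-y^{t+1}_{i+2}=(y^t_{i+1}-y^t_{i+2})+\eta a_i x^t_i-\eta a_{i+1}x^t_{i+1}>(y^t_{i+1}-y^t_{i+2})-\eta a_{i+1}\ge 1$, so coordinate $i+1$ beats every other coordinate by more than $1$ and $y^{t+1}\in P_{i+1}$. Otherwise $y^{t+1}_{i+1}$ and $y^{t+1}_{i+2}$ are the two largest coordinates, and I would check that the pair $\{i+1,i+2\}$ clears every remaining coordinate by the required margin --- the $P_{i+2}$ margin ($>1$) if $y^{t+1}_{i+2}-y^{t+1}_{i+1}>1$, or the $P_{(i+1)\sim(i+2)}$ average margin ($>\tfrac12$) if $|y^{t+1}_{i+1}-y^{t+1}_{i+2}|\le 1$ --- lifting coordinate $i+2$ by its increment $\eta a_{i+1}x^t_{i+1}$ and using $\tfrac12(\Delta y^t_{i+1}+\Delta y^t_{i+2})\ge \tfrac12\eta a_{\min}\ge 1$ for the average. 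The sign and magnitude of $y^{t+1}_{i+1}-y^{t+1}_{i+2}$ then selects $P_{i+2}$ versus $P_{(i+1)\sim(i+2)}$.

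The main obstacle I expect is precisely the ``otherwise'' branch: coordinate $i+2$ is only lifted by $\eta a_{i+1}x^t_{i+1}$, and $x^t_{i+1}$ can be small, so proving that $i+2$ (together with $i+1$) still dominates the remaining coordinates by the full $P_{i+2}$ or $P_{(i+1)\sim(i+2)}$ margin seems to require more than the bare hypothesis $y^t_{i+1}-y^t_j>1$. I would exploit the fact that $y^t$ arises from a complete $\eta A_i$ step into the edge (as produced by Proposition~\ref{prop:pi-cycling}), which forces $y^t_i-y^t_{i+1}\in[-1,1]$ and hence, via the defining inequalities of $P_{i\sim(i+1)}$, couples the gaps $y^t_i-y^t_j$ and $y^t_{i+1}-y^t_j$ for the remaining $j$; carrying this coupled information through the single step, and pinning down the exact thresholds for the three scenarios, is the delicate part of the argument.
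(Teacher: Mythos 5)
Your setup matches the paper's proof almost step for step: the velocity bookkeeping from $\supp(x^t)=\{i,i+1\}$, the decrease $y^{t+1}_{i-1}<y^t_{i-1}-2c$ with $c$ proportional to $x^t_i>0$, the margin $y^{t+1}_{i+1}-y^{t+1}_j>1$ for every $j\notin\{i+1,i+2\}$ (with $j=i$ handled via $(y^t_{i+1}-y^t_i)+\eta a_i(x^t_i+x^t_{i+1})\ge -1+\eta a_i>1$), and scenario (i) under the gap condition are all correct and are exactly the paper's argument. The genuine gap is the step you yourself flag as the ``main obstacle'': you do not prove that the remaining cases land in $P_{i+2}$ or $P_{(i+1)\sim(i+2)}$, and the repair you propose for it is the wrong one. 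No control on how much coordinate $i+2$ is lifted, and no extra coupling inherited from Proposition~\ref{prop:pi-cycling}, is needed --- nor would that provenance help, since membership in $P_{i\sim(i+1)}$ only \emph{upper}-bounds $y^t_{i+2}$ and gives no lower bound of $y^t_{i+2}$ against the far-away coordinates, which is precisely the quantity your ``average lift'' computation would require (indeed $\tfrac12(y^t_{i+1}+y^t_{i+2})-y^t_j$ can a priori be arbitrarily negative, so lifting the average by $\tfrac12\eta a_{\min}$ proves nothing by itself).

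The missing step is a two-line consequence of the margin you already established, because domination of all other coordinates is routed entirely through coordinate $i+1$: if $y^{t+1}_{i+2}-y^{t+1}_{i+1}>1$, then $y^{t+1}_{i+2}>y^{t+1}_{i+1}+1>y^{t+1}_j+2$ for every $j\notin\{i+1,i+2\}$, so all constraints defining $P_{i+2}$ hold; and if $|y^{t+1}_{i+1}-y^{t+1}_{i+2}|\le 1$, then $y^{t+1}_{i+2}\ge y^{t+1}_{i+1}-1$, hence $\tfrac12(y^{t+1}_{i+1}+y^{t+1}_{i+2})-y^{t+1}_j\ge y^{t+1}_{i+1}-\tfrac12-y^{t+1}_j>\tfrac12$, which is exactly the averaging condition of $P_{(i+1)\sim(i+2)}$. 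This is how the paper closes cases (ii) and (iii), with no reference to how $y^t$ was produced. Two smaller remarks: the hypothesis written with $y^{t+\tau}$ is indeed a typo for $y^t$, as you assumed; and note that when $y^t_{i+1}-y^t_{i+2}<1+\eta a_{i+1}$ the step can still land in $P_{i+1}$ (take $x^t_i$ close to $1$), so the exhaustive trichotomy should be organized, as in the paper's proof, by the sign and size of $y^{t+1}_{i+1}-y^{t+1}_{i+2}$; only the implication ``gap condition $\Rightarrow$ scenario (i)'' is what the downstream cycling lemma actually uses.
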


\begin{proof}
  First, recall from Proposition~\ref{prop:gd-pd-map}
  that since $y^t \in P_{i\sim(i+1)}$,
  we have $\supp(x^t) = \{i, i+1\}$.
  Then the coordinates of
  $\Delta y^t = y^{t+1} - y^t_i = \eta Ax^t$
  are specified by
  \begin{equation}
    \begin{aligned}
      &\Delta y^{t}_{i-1}
      &=\;&
        - \eta \cdot a_{i-1} \cdot x^{t}_i \\
      &\Delta y^{t}_{i}
        &=\;&
        - \eta \cdot a_i \cdot x^{t}_{i+1} \\
      &\Delta y^{t}_{i+1}
      &=\;&
        + \eta \cdot a_i \cdot x^{t}_{i} \\
      &\Delta y^{t}_{i+2}
      &=\;&
            + \eta \cdot a_{i+1} \cdot x^{t}_{i+1}  \\
      &\Delta y^{t}_{j}
      &=\;& 0
            \quad\text{for all other $j$}
            \;\;.
    \end{aligned}
    \label{eq:psim-veloc}
  \end{equation}
  We start by showing the behavior of the differences
  $y^{t+1}_{i+1} - y^t_j$ for all
  $j \in [n] \setminus\{i+1, i+2\}$.
  
  \begin{itemize}
  \item
    At coordinate $i$, observe that:
    \begin{equation*}
      y^{t+1}_{i+1} - y^{t+1}_i
      \;=\;
      y^t_{i+1} - y^t_i
      + \eta a_i (x^t_{i+1} + x^t_i)
      \;\ge\;
      y^t_{i+1} - y^t_i + 2 \;,
    \end{equation*}
    where the inequality follows from
    the assumptions that $\eta > 2/a_{\min}$
    and $\supp(x^t) = \{i, i+1\}$. 
    As $y^{t}_{i+1} - y^t_i \ge -1$ by
    assumption of $y^t \in P_{i\sim(i+1)}$, we have
    $y^{t+1}_{i+1} - y^{t+1}_i > 1$.
    Moreover, if $y^t_{i+1} - y^t_i > 0$,
    it follows that $y^{t+1}_{i+1} \ge  y^t_i$.
    Similarly, if $-1 \le y^t_{i+1} - y^t_i < 0$,
    then we still have $y^{t+1}_{i+1} > y^t_{i+1} + 1$,
    and thus also $y^{t+1}_{i+1} \ge y^t_i$.
  \item
    At coordinate $i-1$, we have
    $y^{t+1}_{i-1}
    = y^t_{i-1} - \eta a_{i-1} x^t_i
    <  y^t_{i-1} - 2 c$,
    where $c = x^t_i > 0$ is some
    positive constant.
    Moreover, the inequality holds given
    $\eta > 2/a_{\min}$.
    Note that this proves the first claim of the
    proposition. 
    
    Observe further that we have
    $y^{t+1}_{i+1} - y^{t+1}_{i-1}
    \ge y^t_{i} - y^{t}_{i-1} + 
    2c > 1$,
    where, the final inequality comes
    from the assumption that $y^t_i - y^t_{i-1} \ge 1$.
  \item
    For all other $j \neq i+2$, we have 
    we have $\Delta y^t_j = 0$. It follows that
    $y^{t+1}_{i+1} - y^{t+1}_j
    \ge y^t_i - y^t_j > 1$.
  \end{itemize}
  We now examine the difference
  $y^{t+1}_{i+1} - y^{t+1}_{i+2}$.
  Observe from expression~\eqref{eq:psim-veloc} that
  \begin{align*}
    y^{t+1}_{i+1} - y^{t+1}_{i+2}
    &\;=\;
      y^t_{i+1} - y^t_{i+2}
      + \eta a_i x^t_{i}
      - \eta a_{i+1} x^t_{i+1} \\
    &\;=\;
      y^t_{i+1} - y^t_{i+2}
      + \eta a_i (1-x^t_{i+1})
      - \eta a_{i+1} x^t_{i+1} \\
    &\;\ge\;
      y^t_{i+1} - y^t_{i+2}
    - \eta a_{i+1} \;,
  \end{align*}
  where the inequality holds given
  that $x^t_{i+1} \le 1$.
  If $y^t_{i+1} - y^t_{i+2} > 1 + \eta a_{i+1}$,
  then $y^{t+1}_{i+1} - y^t_{i+2} > 1$,
  and claim $(i)$ of the proposition holds by combining
  the preceding arguments.

  If on the other hand
  $1 \ge y^{t+1}_{i+1} - y^{t+1}_{i+2}$, then either
  $y^{t+1}_{i+2} - y^{t+1}_{i+1} > 1$,
  and claim $(ii)$ of the proposition
  holds by combining the preceding arguments,
  or $1 \ge y^{t+1}_{i+2} - y^{t+1}_{i+1} \ge -1$,
  and claim $(iii)$ of the proposition holds.
\end{proof}

\subsubsection{Energy Growth Bounds}

In the following lemma, we establish lower and upper
bounds on the energy growth when the dual iterates
move between the regions from Definition~\ref{def:gd-p-regions}:

\begin{lemma}
  \label{lem:gd-energy-growth}
  Fix $i \in [n]$, and let
  $\Delta \phistar(y^t) = \phistar(y^{t+1}) - \phistar(y^t)$.
  Then the following hold:
  \begin{enumerate}[
    label={(\roman*)}, 
    leftmargin=4em,
    ]
  \item
    If $y^t \in P_i$ and $y^{t+1} \in P_i$,
    then $\Delta \phistar(y^t) = 0$.
  \item
    If $y^t \in P_i$ and $y^{t+1} \in P_{i+1}$,
    then $1 < \Delta \phistar(y^t) < \eta a_{\max}$.
  \item
    If $y^t \in P_i$ and $y^{t+1} \in P_{i\sim(i+1)}$,
    then $0 \le \Delta \phistar(y^t) \le 1$.
  \item
    If $y^t \in P_{i\sim(i+1)}$
    and $y^{t+1} \in P_{i+1}$ or $y^{t+1} \in P_{i+2}$,
    then
    $0 \le \Delta \phistar(y^t) \le \frac{1}{4}(\eta a_{\max})^2$.
  \item
    If $y^t \in P_{i\sim(i+1)}$
    and $y^{t+1} \in P_{(i+1)\sim(i+2)}$ then
    $0 \le \Delta \phistar(y^t) \le \eta a_{\max} + \frac{5}{4}$. 
  \end{enumerate}
  Moreover, in cases (iii) through (v),
  if $\phistar(y^t) > 0$, then there exists an absolute
  constant $c > 0$ such that $\phistar(y^t) \ge c$. 
\end{lemma}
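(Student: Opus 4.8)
The plan is to prove the five displayed bounds by direct computation, in each case substituting the closed‑form expression for $\phistar$ on the relevant region (equations~\eqref{eq:energy-pi} and~\eqref{eq:energy-psimi}) together with the dual‑update velocities (equations~\eqref{eq:gd-velocity} and~\eqref{eq:psim-veloc}) and the defining inequalities of the regions from Definition~\ref{def:gd-p-regions}. The lower bound $\Delta\phistar(y^t)\ge 0$ in every case is immediate from the monotonicity in Proposition~\ref{prop:skew-gradient-descent}, so only the upper bounds (and the two strict lower bounds in cases (ii) and the ``moreover'' clause) require work. Cases (i) and (ii) are the easiest: by Proposition~\ref{prop:gd-pd-map}, $y^t\in P_i$ forces $x^t=e_i$, hence $\Delta y^t=\eta A_i$ and in particular $y^{t+1}_i=y^t_i$; in case (i) both energies equal the same active coordinate minus $\tfrac12$, so $\Delta\phistar=0$, while in case (ii) $\Delta\phistar(y^t)=\eta a_i-(y^t_i-y^t_{i+1})$, and $y^t\in P_i$ gives the strict upper bound $<\eta a_{\max}$ while $y^{t+1}\in P_{i+1}$ (i.e.\ $y^t_i-y^t_{i+1}<\eta a_i-1$) gives the strict lower bound $>1$.

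For case (iii) I would again use $y^{t+1}_i=y^t_i$ and $y^{t+1}_{i+1}=y^t_{i+1}+\eta a_i$, set $z:=y^{t+1}_i-y^{t+1}_{i+1}\in[-1,1]$ (membership in $P_{i\sim(i+1)}$), and plug into~\eqref{eq:energy-psimi}; a short algebraic identity collapses $\Delta\phistar(y^t)$ to $\tfrac14(z-1)^2$, which lies in $[0,1]$. For cases (iv) and (v), $\supp(x^t)=\{i,i+1\}$, so~\eqref{eq:gd-primal} supplies the explicit barycentric coordinates $x^t_i=\tfrac12(y^t_i-y^t_{i+1})+\tfrac12$, $x^t_{i+1}=1-x^t_i$, and~\eqref{eq:psim-veloc} the velocities; substituting into the appropriate closed form and using $|y^t_i-y^t_{i+1}|\le 1$, $\|x^t\|_1=1$, $|(Ax^t)_k|\le a_{\max}$, together with Lemma~\ref{lem:gd-energy-max} to bound the coordinate that becomes active after the step in terms of $\phistar(y^t)$, yields the stated bounds. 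A cleaner route for all the upper bounds is the convexity estimate $\Delta\phistar(y^t)\le\langle\nabla\phistar(y^{t+1}),\,y^{t+1}-y^t\rangle=\eta\langle x^{t+1},Ax^t\rangle\le\eta a_{\max}$, from which the displayed bounds follow a fortiori.

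The ``moreover'' clause — that in cases (iii)–(v) a positive $\phistar(y^t)$ is automatically bounded below by a game‑ and stepsize‑dependent constant $c>0$ — is the main obstacle, since geometrically $\phistar$ can be made arbitrarily small and positive on $P_{i\sim(i+1)}$ for points satisfying only the region inequalities. The plan is to exploit that $y^0=0$ and that, in the large‑stepsize cycling regime of Lemma~\ref{lem:gd-cycling-pl-overview}, the dual iterates evolve by the lattice translations $\eta Ae_j$ throughout each phase with at most one correction per phase boundary; this confines the attainable coordinate differences, and hence the attainable values of $\phistar$ (a fixed low‑degree polynomial of the coordinates), to a set with no positive accumulation point at $0$. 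Equivalently, one shows that any phase‑boundary transition which strictly increases the energy does so by at least a fixed amount $c_0>0$ — obtained by combining the per‑step lower bounds over the one or two steps of the transition, namely case (ii), or case (iii) followed by case (iv) — and then traces the energy back to the first phase of the cycling regime. Controlling the accumulation of the edge‑step corrections so that this discreteness genuinely survives is the delicate bookkeeping, and is where I expect the real difficulty to lie.
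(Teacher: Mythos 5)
Your handling of parts (i)--(v) is correct and essentially the same as the paper's: (i) and (ii) via $y^{t+1}_i=y^t_i$ and the linear form of $\phistar$ on $P_i$, $P_{i+1}$ (your identity $\Delta\phistar(y^t)=\eta a_i-(y^t_i-y^t_{i+1})$ with the two region inequalities is exactly the paper's argument); (iii) via the substitution into the quadratic form on $P_{i\sim(i+1)}$, where your simplification $\Delta\phistar(y^t)=\tfrac14(z-1)^2$ with $z=y^{t+1}_i-y^{t+1}_{i+1}\in[-1,1]$ is precisely the paper's bound; and (iv)--(v) via the closed-form $x^t_i=\tfrac12(y^t_i-y^t_{i+1}+1)$, the velocities, and Lemma~\ref{lem:gd-energy-max}, again as in the paper. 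One caveat: your proposed ``cleaner route'' $\Delta\phistar(y^t)\le\eta\langle x^{t+1},Ax^t\rangle\le\eta a_{\max}$ does \emph{not} give the displayed constants a fortiori --- the bound $1$ in (iii) is smaller than $\eta a_{\max}$ in the regime $\eta>2/a_{\min}$ where the lemma is applied, and $\tfrac14(\eta a_{\max})^2<\eta a_{\max}$ whenever $\eta a_{\max}<4$ --- so the direct computations are the ones that carry the result.

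The genuine gap is the ``moreover'' clause, which your proposal does not prove. As the paper's proof and its use in Theorem~\ref{thm:gd-large-stepsize} make clear, the intended claim is about the \emph{increment}: in cases (iii)--(v), if $\Delta\phistar(y^t)>0$ then $\Delta\phistar(y^t)\ge c$ for an absolute constant $c>0$ (the wording ``$\phistar(y^t)>0\Rightarrow\phistar(y^t)\ge c$'' in the statement is a misprint for $\Delta\phistar$); the paper argues this case by case from the explicit formulas for $\Delta\phistar$ as functions of the gap $y^t_i-y^t_{i+1}$, using that the velocity offsets $\eta a_i$, etc., are fixed constants. You instead read the clause as a lower bound on the energy \emph{level} $\phistar(y^t)$, correctly observe that such a bound is false pointwise on $P_{i\sim(i+1)}$, and then propose a global reachability argument (discreteness of attainable energies under the lattice translations $\eta Ae_j$ traced back to $y^0=0$, with control of the edge-step corrections). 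That route is (a) a much stronger statement than what is needed, (b) circular as sketched, since the ``fixed amount $c_0>0$ per energy-increasing transition'' you invoke for transitions passing through $P_{i\sim(i+1)}$ is exactly the per-step positive lower bound in cases (iii)--(v) that the clause asserts and that your cases only bound below by $0$, and (c) explicitly left unresolved (``delicate bookkeeping\ldots where I expect the real difficulty to lie''). So this part of the lemma is missing from your proof, and it is the part the downstream regret argument actually relies on to convert ``energy strictly increases at a phase change'' into ``energy increases by $\Theta(1)$.''
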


\begin{proof}
  We prove each part independently:
  \begin{itemize}[leftmargin=1em]
  \item
    \textit{Part (i)}:
    By Proposition~\ref{prop:gd-pd-map},
    $x^t = x^{t+1} = e_i$,
    and thus by Proposition~\ref{prop:gd-closed}:
    $\Delta \phistar(y^t) = y^{t+1}_i - y^t_i$. 
    As $y^{t+1}_i = y^t_i + \eta (Ae_i)_i = y^t_i$,
    we then have $\Delta \phistar(y^t) = 0$. 
  \item
    \textit{Part (ii)}: 
    Using Proposition~\ref{prop:gd-closed}
    and  expression~\eqref{eq:psim-veloc}, for
    $y^t \in P_i$, we have
    $\phistar(y^{t+1}) = y^{t+1}_{i+1} - \frac{1}{2}
    = y^t_{i+1} + \eta a_{i} + \frac{1}{2}$,
    and $\phistar(y^{t+1}) = y^{t+1}_i - \frac{1}{2}
    = y^t_i - \frac{1}{2}$.
    Thus
    \begin{equation*}
      \Delta \phistar(y^t)
      =
      y^t_{i+1} - y^t_i + \eta a_{i}
      <
      -1 + \eta a_{i+1} < \eta a_{i} \;,
    \end{equation*}
    where we use the fact that $y^t_{i+1} - y^t_i < -1$
    for $y^t \in P_i$.
    On the other hand, $y^{t+1}_{i+1} - y^{t+1}_i > 1$
    by definition of $P_{i+1}$,
    and thus $\Delta \phistar(y^t) > 1$. 
   
  \item
    \textit{Part (iii)}:
    As $y^t \in P_i$, we have $\Delta y^t_i = 0$,
    and thus $y^{t+1}_i = y^t_i$.
    Then by
    Proposition~\ref{prop:gd-closed}, the energy 
    at times $t$ and $t+1$ are given by
    \begin{align*}
      \phistar(y^{t+1})
      &\;=\;
        \frac{1}{4}(y^{t+1}_i - y^{t+1}_{i+1})^2
        + \frac{1}{2}(y^{t+1}_{i+1} + y^{t+1}_i)
        - \frac{1}{4} \\
      \phistar(y^t)
      &\;=\;
        y^t_i - \frac{1}{2}
        =
        y^{t+1}_i - \frac{1}{2} \\
      \text{and}\quad
      \Delta \phistar(y^{t})
      &\;=\;
      \frac{1}{4}(y^{t+1}_i - y^{t+1}_{i+1})^2
      + \frac{1}{2}(y^{t+1}_{i+1} - y^{t+1}_i)
      + \frac{1}{4} \;.
    \end{align*}
    By definition of $P_{i\sim(i+1)}$,
    we have $|y^{t+1}_i - y^{t+1}_{i+1}| \le 1$,
    and thus by convexity, we have
    $\Delta \phistar(y^{t+1}) \le 1$.
    If $y^{t+1}_i - y^{t+1}_{i+1} = 0$,
    then $\Delta \phistar(y^{t+1}) = 0$.
    On the other hand, recall that
    $y^{t+1}_{i+1} = y^t_{i+1} + 
    \Delta y^t_{i+1}$,
    and that $\Delta y^{t}_{i+1} = \eta a_i$
    is an absolute constant.
    Thus if
    $\Delta \phistar(y^{t+1}) > 0$,
    then we must have $\Delta \phistar(y^{t+1}) \ge c$
    for some absolute constant $c > 0$.
  \item
    \textit{Part (iv)}:
    By Proposition~\ref{prop:gd-closed}, the energy 
    at times $t$ and $t+1$ are given by
    \begin{align*}
      \phistar(y^{t+1})
      &\;=\;
        y^{t+1}_{i+1} - \frac{1}{2} \\
      \phistar(y^{t})
      &\;=\;
        \frac{1}{4}(y^{t}_i - y^{t}_{i+1})^2
        + \frac{1}{2}(y^{t}_{i+1} + y^{t}_i)
        - \frac{1}{4} \;.
    \end{align*}
    As $y^t \in P_{i \sim (i+1)}$, by
    expresion~\eqref{eq:gd-velocity} we have
    $y^{t+1}_{i+1} = y^t_{i+1} + \eta a_i x^t_i$.
    Then using the closed-form expression of $x^t_i$
    in Proposition~\ref{prop:gd-closed},
    we have $x^t_i = \frac{1}{2}(y^t_i - y^t_{i+1} + 1)$.
    It follows that
    \begin{equation*}
      \Delta \phistar(y^t)
      \;=\;
      \frac{1}{2}(1-\eta a_i)(y^{t}_{i+1} -y^t_i)
      -
      \frac{1}{4}(y^t_i - y^t_{i+1})^2
      + \frac{\eta a_i}{2} - \frac{1}{4} \;.
    \end{equation*}
    By convexity, we find
    $\Delta \phistar(y^t) \le \frac{1}{4}(\eta a_i)^2$.
    If $y^t_{i+1} - y^t_i = 1$,
    then $\Delta \phistar(y^t) = 0$.
    On the other hand,
    if $-1 \le y^t_{i+1} - y^t_i < 1$,
    then $\Delta \phistar(y^t) > 0$,
    and the there must exist an aboslute constant
    $c > 0$ such that
    $\Delta \phistar(y^t) \ge c$. 

    If $y^{t+1} \in P_{i+2}$, we find by
    similar calculations that
    $\Delta \phistar(y^{t}) \le \frac{1}{4}(\eta a_{i+1})^2$,
    and the same arguments for the lower bound
    also apply. 
  \item
    \textit{Part (iv)}:
    As $y^t \in P_{i\sim(i+1)}$ 
    we have from expression~\eqref{eq:gd-velocity} that
    $y^{t+1}_{i+1} - y^t_{i+1} = \eta a_i x^t_i \le \eta a_i$
    and
    $y^{t+1}_{i+2} - y^t_{i+2} = \eta a_{i+1} x^t_{i+1}
    \le \eta a_{i+1}$.
    Then using the bounds from Lemma~\eqref{lem:gd-energy-max},
    we have
    $\phistar(y^{t+1}) \le y^{t+1}_{i+1} + \frac{1}{2}$
    and
    $\phistar(y^{t}) \ge y^{t}_{i+1} - \frac{3}{4}$,
    and thus 
    $\Delta \phistar(y^t)
    \le y^{t+1}_{i+1} - y^t_{i+1} + \frac{5}{4}
    \le \eta a_i + \frac{5}{4}$.
    Recall from Proposition
    \ref{prop:skew-gradient-descent}
    that in general $\Delta \phistar(y^t) \ge 0$.
    On the other hand, as $\Delta y^{t}_{i+1}$
    and $\Delta y^t_i$ are absolute constants,
    if $\Delta \phistar(y^t) > 0$, then
    there must be an absolute constant $c \ge 0$
    such that $\Delta \phistar(y^t) \ge c$. 
  \end{itemize}
\end{proof}

\subsubsection{Proof of Lemma~\ref{lem:gd-cycling-pl-overview}}

We now prove each of the three parts of
Lemma~\ref{lem:gd-cycling-pl-overview}: 

\medskip

\begin{proof}{\textbf{of Part (i).}}
  By the convergence property of Lemma~\ref{lem:gd-large-initial},
  if $x^0 \in \Delta_n$ satisfies $\gamma(x^0) > 0$,
  then $x^1 = e_i$  for some $i \in [n]$,
  By the equivalence from Proposition~\ref{prop:gd-pd-map},
  $y^1 \in P_i$.
  Moreover, we must have $\phi^*(y^1) = \Theta(1)$.  
  Now by the cycling order established in 
  Propositions~\ref{prop:pi-cycling}
  and~\ref{prop:pisim-cycling},
  for any $i \in [n]$,  the dual iterates are restricted 
  to transitioning only between the regions of
  Definition~\ref{def:gd-p-regions} in the following manner:
  \begin{align*}
    &\text{(a)}\;P_{i} \to P_{i+1}
    \qquad\qquad\qquad
    \text{(b)}\;P_i \to P_{i\sim(i+1)}
    \qquad\qquad
    \text{(c)}\;P_{i\sim(i+1)} \to P_{(i+1)} \\
    &\text{(d)}\;P_{i\sim(i+1)} \to P_{i+2} 
    \qquad\quad\;\;
    \text{(e)}P_{i\sim(i+1)} \to P_{(i+1)\sim(i+2)} \;.
  \end{align*}
  We show that there exists $t_0$
  with $t_0 \le O(n)$, such that
  for all $t \ge t_0$, transitions
  (d) and (e) never occur.
  Noting the structure of the transitions (a),
  (b), and (c), and recalling the primal-dual map
  of Proposition~\ref{prop:pisim-cycling},
  observe that this is sufficient for proving
  part (i) of the lemma. 
  
  Now assume $y^t \in P_{i\sim(i+1)}$,
  and observe by the statement of
  Proposition~\ref{prop:pisim-cycling}
  that transitions (d) and (e) cannot occur
  when $y^t_{i+1} - y^t_{i+2} \ge 1 + \eta a_{i+1}$.
  By Lemma~\ref{lem:gd-energy-max}, 
  note also that if $y^t \in P_{i\sim(i+1)}$,
  then $y^t_{i+1} \ge \phi^*(y^t) - \frac{1}{2}$.
  Moreover, among the types of transitions
  that can occur according to
  Propositions~\ref{prop:pi-cycling}
  and~\ref{prop:pisim-cycling},
  and using the velocity definition from
  expression~\eqref{eq:psim-veloc},
  observe that coordinate $(i+2)$ of the
  dual iterate can increase only in
  the regions $P_{(i)\sim(i+1)}, P_{i+1}$,
  and $P_{(i+1)\sim(i+2)}$,
  and it must be decreasing
  in the regions $P_{(i+2)\sim(i+3)}, P_{i+3}$,
  and $P_{(i+3)\sim(i+4)}$.
  Due to the transition orders of Propositions~\ref{prop:pi-cycling}
  and~\ref{prop:pisim-cycling},
  it follows that between the most recent
  time coordinate $(i+2)$ has decreased
  and the next time the coordinate increases,
  at least $\Omega(n)$ rounds have elapsed.

  We assume without loss of generality
  that within these $\Omega(n)$ rounds,
  the energy $\phi^*(y^t)$ is strictly increasing when
  transitioning between phases
  (otherwise, the energy remains constant for these rounds, and
  a similar argument can be applied once the energy has sufficiently increased).
  By Proposition~\ref{lem:gd-energy-growth},
  the energy increase between each region is
  an absolute constant, and thus
  the total energy of the dual iterates increases
  by at least $\Omega(n)$ before the
  next time coordinate $(i+2)$ increases.
  
  Putting these arguments together, suppose
  $y^t \in P_{i\sim(i+1)}$, where $1 \le t = cn$ 
  for some constant $c \ge 1$.
  Then for sufficiently large $c$, we must have
  $y^t_{i+2} \le d \cdot \phi^*(y^t)$ for some
  absolute constant $0 < d < 1$.
  Recalling the lower
  bound on $y^{t}_{i+1}$ from above, it follows that
  \begin{equation*}
    y^t_{i+1} - y^t_{i+2}
    \;\ge\;
    \phi^*(y^t) - \frac{1}{2} - d \cdot \phi^*(y^t)
    \;\ge\;
    \phi^*(y^t)(1-d) - \frac{1}{2} \;.
  \end{equation*}
  As we assumed that the energy has strictly increased
  by an absolute constant at each subsequent time
  the dual iterate re-enters $P_{i\sim(i+1)}$,
  the energy $\phistar(y^t)$ is strictly increasing with $c$.
  As $\eta a_{i+1}$ is a fixed absolute constant,
  then for $c$ sufficiently large, we have
  \begin{equation*}
    \phi^*(y^t)(1-d) - \frac{1}{2} > 1 + \eta a_{i+1} \;.
  \end{equation*}
  Proposition~\ref{prop:pi-cycling}
  further implies that upon the next time
  $t' > t$ where $y^{t'}$ re-enters $P_{i\sim(i+1)}$,
  then also $y^t_{i+1} - y^t_{i+2} > 1 + \eta a_{i+1}$. 
  Repeating this argument for all
  $i \in [n]$, we set $t_0$ as the maximum time
  among all such $t$ used.
  It follows that for any $i \in [n]$,
  and for any subsequent
  $t' > t_0$ where $y^{t'} \in P_{i\sim(i+1)}$,
  we must have $y^{t'}_{i+1} -y^{t'}_{i+2} \ge
  1+ \eta a_{i+1}$, and thus
  transitions of types (d) and (e) cease to occur.
\end{proof}

\begin{proof}{\textbf{of Part (ii).}}
  Using the cycling order from Part (i) of the lemma,
  recall that  between times $t_k$ and $t_{k+1}$, the dual iterates
  transition either from $P_{i+k\;(\mod n)}$ to 
  $P_{i+k+1\;(\mod n)}$ directly, 
  or from $P_{i+k\;(\mod n)}$ to  
  $P_{(i+k\;(\mod n))\sim(i+k+1\;(\mod n))}$
  to 
  $P_{i+k+1\;(\mod n)}$. 
  For consecutive iterates within the region
  $P_{i+k+1\;(\mod n)}$, the energy growth is zero.  
  Moreover, recall from Part (i) of the lemma
  that the dual iterate spends at most a single round
  in $P_{(i+k\;(\mod n))\sim(i+k+1\;(\mod n))}$.
  Then using the cases of Lemma~\ref{lem:gd-energy-growth}, 
  we can bound in the worst case
  \begin{equation*}
    \phistar(y^{t_{k+1}})
    -
    \phistar(y^{t_k})
    \;\le\;
    \eta a_{\max}
    +
    1
    +
    \frac{1}{4}(\eta a_{\max})^2 \;,
  \end{equation*}
  which is at most an absolute constant. 
\end{proof}

\begin{proof}{\textbf{of Part (iii).}}
Observe Part (i) of the lemma establishes
that from time $t_0$ onward, the dual iterates
perpetually cycle through the regions $\{P_k\}$
in the order
\begin{equation*}
  \dots P_i \to P_{i+1} \to \dots \to P_{n} \to P_1 \dots \;.
\end{equation*}
Moreover, between any two consecutive regions
$P_i$ and $P_{i+1}$, the only other
region the dual iterate ever enters
is $P_{i~\sim(i+1)}$.
From Proposition~\ref{prop:pisim-cycling},
under the large stepsize assumptions,
the dual iterate spends at most a single timestep
in any such  $P_{i\sim(i+1)}$.
Thus $\tau_k$ is at least as large
as the number of steps spent in $P_i$
before exiting.

Now for any $\ell \ge 0$, Part (i) of the
lemma establishes that
$y^{t_\ell} \in P_{i+\ell\;(\mod n)}$.
Then by Proposition~\ref{prop:pi-cycling},
observe that
\begin{equation}
  \tau_\ell
  \;\ge\;
  \frac{y^{t_\ell}_{i+\ell\;(\mod n)}
    - y^{t_\ell}_{i+\ell+1\;(\mod n)} - 1}
  {\eta \cdot a_{i+\ell\;(\mod n)}} \;.
\end{equation}
Recall further from Lemma~\ref{lem:gd-energy-max} that
if $y^t \in P_i$, then $y^t_i \ge \phi^*(y^t)$
Thus we can further write
\begin{equation}
  \tau_\ell
  \;\ge\;
  \frac{\phi^*(y^{t_\ell})
    - y^{t_\ell}_{i+\ell+1\;(\mod n)}}
  {\eta \cdot a_{i+\ell\;(\mod n)}}
  -
  \frac{1}{\eta \cdot a_{i+\ell\;(\mod n)}}
  \;.
  \label{eq:p3-1}
\end{equation}
As the second term in~\eqref{eq:p3-1}
is a positive absolute constant, it suffices
to show that
\begin{equation}
  \frac{\phi^*(y^{t_\ell})
    - y^{t_\ell}_{i+\ell+1\;(\mod n)}}
  {\eta \cdot a_{i+\ell\;(\mod n)}}
  \;\ge\;
  \alpha'_\ell \cdot \phi^*(y^{t_\ell})
  - \beta'_\ell
  \label{eq:p3-2}
\end{equation}
for positive constants $\alpha'_\ell > 0$
and $\beta'_\ell > 0$
for all $\ell = 0, \dots, K$.

We prove this claim by induction following
similarly to the proof of Lemma~\ref{lem:phase-length}
for Fictitious Play. 
The key property to note is that,
for any fixed coordinate $i$,
following the most recent time the dual
iterate was in the region $P_{i+1}$,
and until the dual iterate again enters
either $P_{(i-2)\sim(i-1)}$ or $P_{(i-1)}$, 
the velocity $\Delta y^t_i = 0$ at 
all intermediate times $t$.
Together, with Propositions~\ref{prop:pi-cycling}
and~\ref{prop:pisim-cycling},
this key property implies that
within the first $\ell = 0, \dots, n$ phases, 
at least $\Omega(n)$ rounds
have elapsed between the most recent time prior 
to $t_0$ that each coordinate $i$ was \textit{decreasing},
and the next time it \textit{increases}
within the first $n$ phases.
Moreover, $t_0 \le O(n)$ by definition,
and along with the energy growth bounds of
Lemma~\ref{lem:gd-energy-growth}
we have $\phi^*(y^0) = O(n)$.
By a similar argument
as in Part (i) of the lemma, 
for each $\ell = 0, \dots, n$,
it follows that 
$y^{t_\ell}_{i+\ell+1\;(\mod n)}
\le b_\ell \cdot \phi^*(y^{t_\ell})$
for some positive constant $0 < b_\ell < 1$. 
Thus expression~\eqref{eq:p3-2} holds
for these initial $n$ phases.

Now assume the claim holds through Phase $k-1$.
To prove the claim also holds for Phase $k$, we can use
the same calculations as in the proof of 
Lemma~\ref{lem:phase-length} for Fictitious Play.
Specifically, using the cycling property
of Part (i) of the lemma:
$y^{t_k}_{i+k+1\;(\mod n)}
\le y^{t_{k+3-n}}_{i+k+1\;(\mod n)} + d_k$
for some constant $d_k > 0$.
Moreover, by the energy growth bound of
Part (ii) and also Lemma~\ref{lem:gd-energy-max}, 
we have 
$y^{t_{k+2-n}}_{i+k+1\;(\mod n)} \le \phi^*(y^{t_{k+2-n}})
+ \rho$, where $\rho >0$ is an absolute constant.
Then following identical calculations as in expressions
\eqref{eq:pl-1} through~\eqref{eq:fp-pl-end}
in the proof of Lemma~\ref{lem:phase-length},
we conclude that
\begin{align*}
  \frac{\phi^*(y^{t_k})
    - y^{t_k}_{i+k+1\;(\mod n)}}
  {\eta \cdot a_{i+k\;(\mod n)}}
  &\;\ge\;
    \frac{\phi^*(y^{t_k}) \cdot a_{i+k+1\;(\mod n)}
    \cdot \alpha_{k+2-n}}{
     a_{i+k\;(\mod n)}}
    - \frac{\beta_{k+2-n} + \rho + d_k}
    {\eta \cdot a_{i+k\;(\mod n)}} \\
  &\;=\;
    \alpha'_k \cdot \phi^*(y^{t_k}) - \beta'_k
\end{align*}
for positive constants $\alpha'_k$
and $\beta'_k$. By expression~\eqref{eq:p3-2},
this proves the claim. 
\end{proof}

\subsection{Proof of Theorem~\ref{thm:gd-large-stepsize}}
\label{app:gd:large:regret}

Using the machinery of Lemma~\ref{lem:gd-cycling-pl-overview},
we prove the regret bound of Theorem~\ref{thm:gd-large-stepsize}:

\medskip

\begin{proof}
  The proof follows similarly to that of 
  Theorem~\ref{thm:fp-rps} for Fictitious Play.
  First, let $x^0 \in \Delta_n$ be such that $\gamma(x^0) > 0$
  (for $\gamma$ as defined in expression~\eqref{eq:gamma-main}).
  As established in the proof of Lemma~\ref{lem:gd-large-initial},
  the set of points $x \in \Delta_n$ with $\gamma(x) > 0$
  has full Lebesgue measure. 
  Then under the large step size assumptions of
  the theorem, $x^1 = e_i$ for some $i \in [n]$,
  and the cycling, energy growth,
  and phase length bounds of
  Lemma~\ref{lem:gd-cycling-pl-overview} apply. 

  Recall by Proposition~\ref{prop:regret-energy-fp}
  that for constant $\eta > 0$, 
  to bound the regret $\reg(T)$, it suffices to
  obtain an upper bound on the energy $\phistar(y^{T+1})$.
  By Lemma~\ref{lem:gd-cycling-pl-overview},
  the energy $\phistar$ increases by at most
  a constant between each phase $k = 0, \dots, K$.
  On the other hand, depending on the magnitude of $\eta$,
  and due to certain boundary cases between regions 
  $P_i$ and $P_{i~\sim(i+1)}$, the energy may not be
  \textit{strictly} increasing between every phase
  (e.g., see cases (iii), (iv) of
  Lemma~\ref{lem:gd-energy-growth}).
  Thus similar to the proof of Theorem~\ref{thm:fp-rps},
  for each phase $k=0, 1, \dots, K$, we define the
  indicator variable $c_k \in \{0, 1\}$ as follows:
  \begin{equation*}
    c_k
    \;:=\;
    \begin{cases}
      0 &\text{if $\phistar(y^{t_k}) = \phistar(y^{t_{k-1}})$} \\
      1 &\text{if $\phistar(y^{t_k}) > \phistar(y^{t_{k-1}})$} 
    \end{cases} \;.
  \end{equation*}
  Lemmas~\ref{lem:gd-energy-growth}
  and~\ref{lem:gd-cycling-pl-overview} imply that
  if $c_k = 1$, then
  $\phistar(y^{t_k})  - \phistar(y^{t_{k-1}}) = \Theta(1)
  = \Theta(c_k)$. 
  Thus it holds that
  \begin{equation}
    \phistar(y^{T+1})
    =
    \Theta\Big(\sum_{k=1}^K c_k\Big) \;.
    \label{eq:gd-reg-1}
  \end{equation}

  Now recall that for readability we use the
  notation $f \lesssim g$
  and $f \gtrsim g$ to
  indicate $f = O(g)$ or $f = \Omega(g)$ respectively.
  Then using the phase length bound in
  Part (iii) of Lemma~\ref{lem:gd-cycling-pl-overview},
  together with the fact that
  $\phistar(y^{t_k})  - \phistar(y^{t_{k-1}}) = \Theta(c_k)$, 
  we have
  \begin{equation}
    \tau_k
    \;\ge\;
    \alpha_k \cdot \phistar(y^{t_k}) - \beta_k
    \gtrsim \sum_{i=1}^k c_i - \beta_k 
  \end{equation}
  for positive constants $\alpha_k, \beta_k > 0$
  
  From here, we repeat identical calculations
  as in expressions~\eqref{eq:regret-1} through
  \eqref{eq:fp-regret-end} from the proof of
  Theorem~\ref{thm:fp-rps}. 
  As a consequence, we similarly obtain
  \begin{equation*}
    \Big(\sum_{k=0}^K c_k\Big)^2
    \lesssim T
    \quad\implies\quad
    \sum_{k=0}^K c_k
    \lesssim \sqrt{T} \;.
  \end{equation*}
  By expression~\eqref{eq:gd-reg-1},
  this in turn gives $\phistar(y^{T+1}) \lesssim O(\sqrt{T})$.
  By Proposition~\ref{prop:regret-energy-ftrl},
  we then conclude that
  \begin{equation*}
    \reg(T)
    \;\le\;
    \frac{2 \phistar(y^{T+1})}{\eta} + \frac{2M}{\eta}
    \;\le\;
    O(\sqrt{T})\;.
  \end{equation*}
  Here, the last inequality comes from the fact that
  $M = \max_{x \in \Delta_n} \frac{\|x\|^2_2}{2} \le 1$
  and $\eta$ is an absolute constant. 
\end{proof}

\section{Boundary Behavior of GD in Small Stepsize Regime}
\label{app:gd:small}

\subsection{Overview of Results}

In this section, we present several auxiliary results
related to the behavior of Gradient Descent under
different stepsize regimes. We start by presenting
an overview of these results:
First, recall that in the large stepsize regime,
Lemmas~\ref{lem:gd-large-initial} and \ref{lem:gd-cycling-pl-overview}
together imply that after at most a single iteration, 
every primal iterate of Gradient Descent will lie on the boundary of $\Delta_n$.
In the following theorem, we show that a similar \textit{boundary invariance}
property holds for any $\eta > 0$: if the energy ever exceeds a
(game-dependent) constant value, every subsequent primal iterate
of Gradient Descent must lie on the boundary of $\Delta_n$. 

\begin{restatable}{theorem}{gdrpsmideta}
  \label{thm:gda-rps-midstepsize}
  Let $\{x^t\}$ and $\{y^t\}$ be the iterates
  of~\eqref{eq:gd-pd} on an $n$-dimensional RPS 
  matrix $A$ with $\eta > 0$, 
  and from any initial $x^0 \in \Delta_n$. 
  Then there exists a constant $D$ (depending on 
  the entries of $A$) such that,
  if the energy $\phi^*(y^{t'}) > D$ for some
  $t' \ge 0$, then $x^t$ is on the boundary of $\Delta_n$
  at all subsequent iterations $t > t'$. 
\end{restatable}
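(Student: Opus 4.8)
The plan is to reduce the statement, via the monotonicity of the energy along Gradient Descent (Proposition~\ref{prop:skew-gradient-descent}), to a purely geometric bound on $\phi^*$ over the part of the dual space that maps to the interior of $\Delta_n$. Note first that $Q(y) = \nabla\phi^*(y)$ is exactly the Euclidean projection $\Pi_{\Delta_n}(y)$, and that by the \textsc{FindSupport} characterization (Corollary~\ref{cor:gd-closed}) the primal iterate $x^t = Q(y^t)$ is interior if and only if $y^t$ lies in the region $\mathcal{I} := \{y\in\R^n : \min_{i\in[n]}(y_i - \tfrac1n\langle y,\mathbf 1\rangle) > -\tfrac1n\}$. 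I claim it suffices to produce a constant $D$ with $\phi^*(y) \le D$ for every iterate $y^t$ lying in $\mathcal{I}$: then $\phi^*(y^{t'}) > D$ forces $y^{t'}\notin\mathcal{I}$, and since $\phi^*(y^t)\ge\phi^*(y^{t'})>D$ for all $t\ge t'$ we get $y^t\notin\mathcal{I}$, i.e.\ $x^t$ on the boundary of $\Delta_n$, for all $t > t'$.

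For the geometric bound I would use the closed form~\eqref{eq:energy-interior}. Writing $s := \langle y,\mathbf 1\rangle$ and $w := y - \tfrac{s}{n}\mathbf 1$ (so $\langle w,\mathbf 1\rangle = 0$), the quadratic-in-$s$ terms cancel and $\phi^*(y) = \tfrac12\|w\|_2^2 + \tfrac{s-1}{2n}$ on $\mathcal{I}$. The defining inequality of $\mathcal{I}$ says $w_i > -\tfrac1n$ for all $i$, and since $\sum_i w_i = 0$ this forces $w_i < 1$ as well, hence $\|w\|_2^2 < n$ on $\mathcal{I}$; therefore $\phi^*(y) < \tfrac n2 + \tfrac{s-1}{2n}$ there. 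So everything comes down to bounding $s = \langle y^t,\mathbf 1\rangle$ from above while $y^t\in\mathcal{I}$.

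This last bound is the crux. In the \emph{unweighted} case it is immediate: the RPS structure gives $A\mathbf 1 = 0$, so $\langle y^{t+1}-y^t,\mathbf 1\rangle = -\eta\langle x^t, A\mathbf 1\rangle = 0$ and hence $\langle y^t,\mathbf 1\rangle = 0$ for all $t$ (as $y^0 = 0$), and $D := n/2$ works. In general $\langle y^{t+1}-y^t,\mathbf 1\rangle = \eta\sum_i(a_i - a_{i-1})x^t_i$ is nonzero, but it telescopes to zero over any window in which the iterate cycles once through all $n$ vertices of $\Delta_n$ (because $\sum_i(a_i-a_{i-1})=0$) and it vanishes at the interior Nash $x^*$ (because $Ax^*=0$ implies $\langle x^*,A\mathbf 1\rangle=0$, as in the proof of Proposition~\ref{prop:A-interior}). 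The plan is to leverage this: restricted to $\mathcal{I}$, the update $y\mapsto y+\eta A Q(y)$ becomes, in $(s,w)$-coordinates, a norm-nondecreasing skew step on $w$ about a fixed point $w_\ast$ plus a drift on $s$ of magnitude $O(\eta\|w-w_\ast\|)$; since $\|w\|_\infty<1$ on $\mathcal{I}$ caps $\|w-w_\ast\|$ while the skew step strictly inflates $\|w-w_\ast\|$ away from the Nash (reflecting the known repulsion of Gradient Descent from interior equilibria), the iterate can stay in $\mathcal{I}$ only for a bounded number of consecutive steps, so the drift accumulated on $s$ between successive visits to $\mathcal{I}$ is $O(1)$; together with the fact that on the boundary regions $\phi^*(y^t)=\max_j y^t_j-\Theta(1)$ and the coordinate spread is controlled by the cycling established in Section~\ref{sec:gd}, this yields a uniform upper bound on $s$. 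I expect the even-$n$ case to need extra care, since then the relevant skew operator on the hyperplane $\{w:\langle w,\mathbf 1\rangle=0\}$ is degenerate and the interior Nash (hence $w_\ast$) is not unique, and I note that the cleanest form of the argument may make $D$ depend on $\eta$ as well as on the entries of $A$.
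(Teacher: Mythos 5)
Your reduction is sound and matches the paper's: monotonicity of $\phi^*$ along~\eqref{eq:gd-pd} plus a uniform upper bound on $\phi^*$ over the region of dual space that maps to the interior of $\Delta_n$ is exactly what is needed, and your decomposition $y = w + \frac{s}{n}\mathbf{1}$ with $\|w\|_\infty < 1$ on the interior region is also the right first step. But the crux of your argument --- bounding $s = \langle y^t, \mathbf{1}\rangle$ while $y^t$ is in the interior region --- is where you go astray. You notice that $Ax^* = 0$ makes the drift of $s$ vanish at the Nash, but you miss the much stronger and much simpler consequence: by skew-symmetry, $\langle x^*, y^{t+1} - y^t\rangle = \eta\langle x^*, Ax^t\rangle = -\eta\langle Ax^*, x^t\rangle = 0$, so $\langle x^*, y^t\rangle = 0$ is an \emph{exact invariant} of the dual dynamics for \emph{any} $\eta > 0$ and any payoff matrix with interior Nash. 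This is the content of Lemma~\ref{lem:duallowdim}. Plugging $y = w + \frac{s}{n}\mathbf{1}$ into this invariant gives $\langle x^*, w\rangle + \frac{s}{n} = 0$ (using $\langle x^*, \mathbf{1}\rangle = 1$), hence $s = -n\langle x^*, w\rangle$ is bounded whenever $w$ is. That immediately yields the compactness of the interior dual region intersected with the invariant hyperplane, and hence a finite $D = \max\phi^*$ over that compact set, with no dynamical argument at all.

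Your proposed replacement --- controlling the drift of $s$ via cycling, phase lengths, and repulsion from the interior Nash --- is not just harder, it cannot deliver the stated result: you yourself flag that it would make $D$ depend on $\eta$ and that the even-$n$ degeneracy (continuum of interior Nash, degenerate skew operator) needs special handling. The theorem asserts $D$ depends only on the entries of $A$, so an $\eta$-dependent $D$ would be a strictly weaker claim, and the even-$n$ gap is a genuine hole. The invariant-hyperplane argument is both uniform in $\eta$ and insensitive to parity (any interior Nash $x^*$ works; pick one). In short: the missing idea is the linear conserved quantity $\langle x^*, y^t\rangle$, which replaces your entire dynamical analysis of the $s$-drift with a single algebraic identity.
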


\noindent
Theorem~\ref{thm:gda-rps-midstepsize}
extends the boundary invariance result of~\cite{bailey2019fast} 
for $2\times2$ games to the present $n$-dimensional
symmetric setting.
Note that while boundary invariance was the key driver in establishing
$O(\sqrt{T})$ regret in the large stepsize regime
(specifically, the cycling between vertices of $\Delta_n$
as seen in the right plot of Figure~\ref{fig:gd-eta-compare}),
this property alone appears insufficient for 
establishing sublinear regret bounds using smaller 
constant stepsizes:
as seen in the middle plot of Figure~\ref{fig:gd-eta-compare},
even for moderately small constant $\eta$, 
the primal iterates of Gradient Descent may transition between
different boundary faces of $\Delta_n$ in an irregular manner,
making the energy growth harder to control. 
Because of this, for smaller constant $\eta$,
simply ensuring that the primal iterates 
are on the boundary of $\Delta_n$ is insufficient for 
establishing a similar phase behavior
and regret guarantee as in 
Lemma~\ref{lem:gd-cycling-pl-overview}
and Theorem~\ref{thm:gd-large-stepsize}. 
Obtaining $O(\sqrt{T})$ regret bounds for Gradient Descent 
using \textit{any} constant stepsize thus remains open.

At the other extreme, while using a small 
time-horizon-dependent stepsize
(e.g., with $\eta = \Theta(1/\sqrt{T})$)
might result in all primal iterates remaining \textit{interior} 
(as seen in the left plot of Figure~\ref{fig:gd-eta-compare}), 
note that in this case, Proposition~\ref{prop:regret-energy-ftrl} 
(which establishes $\reg(T) \le O(\phistar(y^{T+1})/ \eta)$)
still only guarantees a worst case regret bound scaling with $\sqrt{T}$. In particular, we prove the following guarantee:

\begin{restatable}{lem}{gdsmallsteps}
  \label{lem:gd-smallsteps}
  Let $\{x^t\}$ and $\{y^t\}$ be the iterates
  of~\eqref{eq:gd-pd} on an $n$-dimensional RPS 
  matrix $A$ with $\eta = \frac{1}{\sqrt{T}}$, 
  and from an interior $x^0 \in \Delta_n$. 
  Then if $x^t$ is interior for all $t \ge 1$,
  then 
  \begin{equation*}
    \phistar(y^{T+1}) \le \frac{n a_{\max}^2}{2}
    \quad\text{and}\quad
    \reg(T) \le \sqrt{T} \cdot \Big(\frac{n a_{\max}^2}{2} + 1\Big) \;.
  \end{equation*}
\end{restatable}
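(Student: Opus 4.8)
The plan is to exploit that, when every primal iterate stays in the interior of $\Delta_n$, the energy $\phistar$ along the trajectory is given by the explicit quadratic of Corollary~\ref{cor:gd-closed} (taken with support set $[n]$), so the one-step energy increment can be computed exactly; with $\eta = 1/\sqrt{T}$ the total energy growth over $T$ rounds is then only a constant, which translates into an $O(\sqrt T)$ regret bound via Proposition~\ref{prop:regret-energy-ftrl}.

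First I would use the interiority hypothesis: for every $t \ge 1$ the primal iterate $x^t = Q(y^t)$ has full support, so Corollary~\ref{cor:gd-closed} applies with $m = n$ and shows that $\phistar$ agrees along $\{y^t\}$ with an explicit convex quadratic $q$ whose purely quadratic part is dominated by $\tfrac12\|\cdot\|_2^2$; in particular $D_q(y', y) \le \tfrac12\|y'-y\|_2^2$, one has $\nabla\phistar(y^t) = x^t$ for $t \ge 1$, and $\phistar(y^0) = \phistar(\mathbf 0) = -\tfrac1{2n}$. Using the Bregman identity for $q$ and $y^{t+1} - y^t = \eta A x^t$, for every $t \ge 1$
\begin{align*}
  \Delta\phistar(y^t)
  &= \phistar(y^{t+1}) - \phistar(y^t)
  = \langle \nabla\phistar(y^t),\, y^{t+1} - y^t\rangle + D_q(y^{t+1}, y^t) \\
  &= \eta\,\langle x^t,\, A x^t\rangle + D_q(y^{t+1}, y^t)
  \;\le\; 0 + \tfrac{\eta^2}{2}\,\|A x^t\|_2^2 ,
\end{align*}
where the linear term vanishes by skew-symmetry of $A$ ($\langle z, Az\rangle = 0$) and the last inequality is the Hessian/Bregman bound above. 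The $t = 0$ step obeys the same estimate up to an additive $O(\eta)$ term (because $x^0$ is the initialization rather than $Q(y^0)$, its linear contribution is $\tfrac{\eta}{n}\langle \mathbf 1, A x^0\rangle = O(\eta)$). Finally, since $(A x^t)_i = a_{i-1}x^t_{i-1} - a_i x^t_{i+1}$ is a difference of two numbers in $[0, a_{\max}]$, each coordinate satisfies $(A x^t)_i^2 \le a_{\max}^2$, so $\|A x^t\|_2^2 \le n\,a_{\max}^2$ and $\Delta\phistar(y^t) \le \tfrac{\eta^2}{2}\,n\,a_{\max}^2$.

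Telescoping over the rounds then gives $\phistar(y^{T+1}) = \phistar(y^0) + \sum_t \Delta\phistar(y^t) \le -\tfrac1{2n} + T\cdot\tfrac{\eta^2}{2}\,n\,a_{\max}^2 + O(\eta)$, and substituting $\eta = 1/\sqrt T$ yields $\phistar(y^{T+1}) \le \tfrac{n\,a_{\max}^2}{2}$ (the negative $-\tfrac1{2n}$ term and the smallness of $\eta$ absorb the lower-order contributions). Feeding this into Proposition~\ref{prop:regret-energy-ftrl}, which for Gradient Descent reads $\reg(T) \le \tfrac{2\phistar(y^{T+1})}{\eta} + \tfrac{2M}{\eta}$ with $M = \max_{x\in\Delta_n}\tfrac12\|x\|_2^2 = \tfrac12$, and plugging in $\eta = 1/\sqrt T$, gives $\reg(T) \le \sqrt T\,\big(\tfrac{n\,a_{\max}^2}{2} + 1\big)$.

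I do not expect a substantive obstacle here — this is essentially a short, direct computation. In fact the per-step bound $\Delta\phistar(y^t) \le \tfrac{\eta^2}{2}\|A x^t\|_2^2$ holds even without interiority, by $1$-smoothness of $\phistar$ (its gradient $\nabla\phistar = \Pi_{\Delta_n}$, the Euclidean projection onto the simplex, is non-expansive); the interiority hypothesis serves only to pin down $\phistar$ on the trajectory via Corollary~\ref{cor:gd-closed}, and the content of the lemma is precisely that even this most benign behaviour of Gradient Descent does not beat the generic $O(\sqrt T)$ rate of a vanishing-stepsize algorithm. The one place where a little care is needed is the cancellation of the linear term in $\Delta\phistar(y^t)$, which combines $\nabla\phistar(y^t) = x^t$ (Proposition~\ref{prop:skew-gradient-descent}) with $\langle x^t, A x^t\rangle = 0$.
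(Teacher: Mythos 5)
Your proposal is correct and follows essentially the same route as the paper's proof: use interiority to identify $\phistar$ along the trajectory with the explicit quadratic from Corollary~\ref{cor:gd-closed} (whose Hessian $I - \tfrac1n\mathbf 1\mathbf 1^\top$ is $\preceq I$, i.e., $1$-smooth), kill the linear term in the one-step expansion via $\langle x^t, Ax^t\rangle = 0$, bound $\|Ax^t\|_2^2 \le n a_{\max}^2$ coordinatewise, telescope, and feed the result into Proposition~\ref{prop:regret-energy-ftrl}. Your Bregman-divergence phrasing is just the smoothness inequality in disguise, and if anything you are slightly more careful than the paper (the $t=0$ linear term and the $(T+1)$ vs.\ $T$ count, both of which the paper glosses over and which are absorbed by the $\phistar(y^0) = -\tfrac1{2n}$ slack exactly as you say).
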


In the following subsections, we develop
the proofs of Theorem~\ref{thm:gda-rps-midstepsize} 
and Lemma~\ref{lem:gd-smallsteps}, but we begin
by first establishing a certain low-dimensional representation
of the dual iterates of Gradient Descent on RPS games.

\begin{figure}[tb!]
  \centering
  \includegraphics[width=0.8\textwidth]{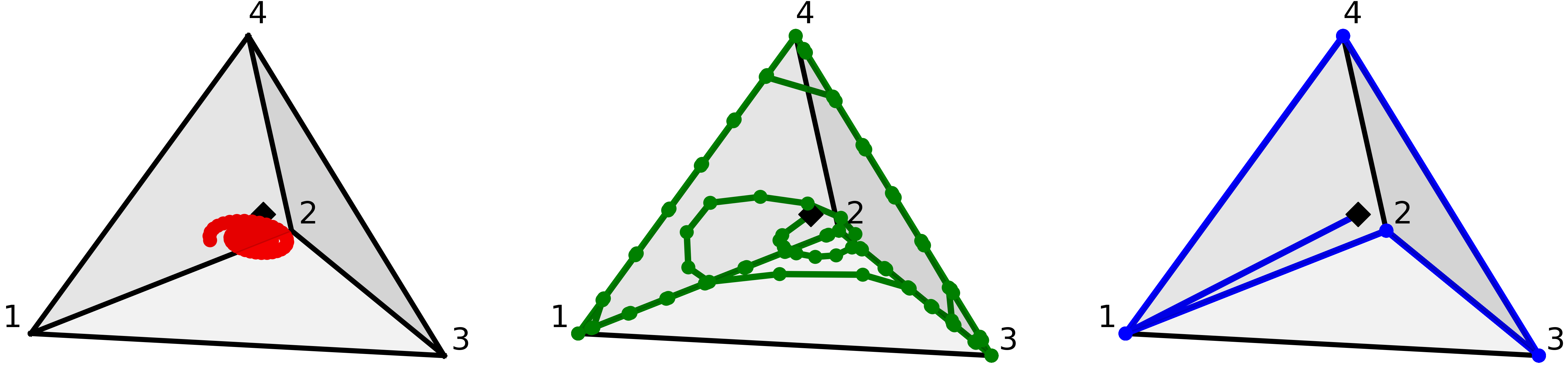}
  \caption{
    \footnotesize
    The primal iterates of Gradient Descent
    in $n=4$ unweighted RPS run for $T=100$ steps, 
    and initialized at $x^0 = [0.2, 0.2, 0.25,0.35]$.
    From left to right, using stepsize
    $\eta = \{1/\sqrt{T}, 0.3, 10\}$.}
  \label{fig:gd-eta-compare}
\end{figure}

\subsection{Lower-dimensional Representation of Dual Iterates}

Here, we establish the following
lower dimensional representation of the dual
iterates $\{y^t\}$ of Gradient Descent on a
fixed $(n-1)$-dimensional subspace.
This representation is leveraged in the proof of
Theorem~\ref{thm:gda-rps-midstepsize},
and it follows by the existence
of an interior equilibrium $x^*$
for every RPS matrix $A$ from Definition~\ref{def:rps}.
Stated formally:

\begin{restatable}{lem}{duallowdim}
  \label{lem:duallowdim}
  Let $A$ be an $n$-dimensional RPS matrix
  from Definition~\ref{def:rps}, and let
  $x^* \in \Delta_n$ denote
  an interior NE for $A$.
  Let $\{x^t\}$ and $\{y^t\}$ denote the
  iterates of~\eqref{eq:gd-pd} on $A$
  using any stepsize $\eta >  0$ and
  from any $x^0 \in \Delta_n$. 
  Then there exist positive constants
  $\alpha_1, \dots, \alpha_{n-1} > 0$
  (depending only on $A$) such that,
  for all $t$:
  \begin{equation*}
    y^t_n
    \;=\;
    - \big(
    \alpha_1 \cdot y^t_1 +
    \dots
    +
    \alpha_{n-1} \cdot y^t_{n-1} 
    \big)
    \;.
  \end{equation*}   
\end{restatable}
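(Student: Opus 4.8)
The plan is to exploit the fact that the dual iterates always lie in the range of $A$, and that $\mathrm{range}(A)$ is orthogonal to any interior equilibrium of $A$. First I would unwind the dual recursion $y^{t+1} = y^t + \eta A x^t$ with $y^0 = 0$: telescoping gives, for every $t \ge 1$,
\begin{equation*}
  y^t \;=\; \eta\, A \Big( \sum_{k=0}^{t-1} x^k \Big) \;\in\; \mathrm{range}(A),
\end{equation*}
and $y^0 = 0 \in \mathrm{range}(A)$ trivially. So it suffices to understand $\mathrm{range}(A)$.

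Next I would fix an interior Nash equilibrium $x^*$ of $A$, which exists by Proposition~\ref{prop:A-interior}; by Proposition~\ref{prop:sym-nash} we have $Ax^* = 0$. Using skew-symmetry, for any $v \in \R^n$,
\begin{equation*}
  \langle x^*, Av \rangle \;=\; \langle A^\top x^*, v \rangle \;=\; -\langle A x^*, v \rangle \;=\; 0,
\end{equation*}
so $\mathrm{range}(A) \subseteq (x^*)^\perp$. Combining this with the previous step yields $\langle x^*, y^t \rangle = 0$ for all $t$, i.e. $\sum_{i=1}^n x^*_i\, y^t_i = 0$.

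Finally, since $x^*$ is interior we have $x^*_n > 0$, so I can solve the identity above for the last coordinate:
\begin{equation*}
  y^t_n \;=\; -\sum_{i=1}^{n-1} \frac{x^*_i}{x^*_n}\, y^t_i,
\end{equation*}
which is exactly the claimed relation with $\alpha_i := x^*_i / x^*_n$. These constants are strictly positive because $x^*$ is interior, and they depend only on $A$ through the fixed choice of $x^*$ (when $n$ is even the interior equilibrium is not unique, but any single choice works and the statement only asserts existence of such constants).

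There is essentially no technical obstacle here; the only point requiring care is that the coefficients be \emph{positive} and game-dependent, which is precisely why one invokes the existence of an \emph{interior} equilibrium (Proposition~\ref{prop:A-interior}) rather than an arbitrary one: a boundary equilibrium would still produce a valid linear relation among the $y^t_i$, but not necessarily one in which $y^t_n$ appears with nonzero coefficient, nor one with all coefficients positive.
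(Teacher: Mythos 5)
Your proposal is correct and follows essentially the same route as the paper's proof: both unwind the dual recursion to get $\langle x^*, y^t\rangle = 0$ via $Ax^*=0$ and skew-symmetry, then solve for $y^t_n$ with $\alpha_i = x^*_i/x^*_n > 0$ using interiority of $x^*$. Your added remark on the necessity of an \emph{interior} equilibrium and the non-uniqueness for even $n$ is a fine clarification but not a departure from the paper's argument.
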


\begin{proof}
  Recall from Proposition~\ref{app:prelims:interiornash},
  that $Ax^* = 0$.
  Then by the skew-symmetry of $A$, observe that
  for any $x \in \Delta_n$:
  $\langle x^*, Ax \rangle
  = \langle x, A^\top x^* \rangle
  = -\langle x, Ax^* \rangle = 0$.
  Now by definition of~\eqref{eq:gd-pd},
  $y^t = \eta \sum_{k=0}^{t-1} Ax^k$, and thus
  $ 
    \langle x^*, y^t \rangle
    = 
    \eta \sum_{k=0}^{t-1}
    \langle x^*, Ax^k \rangle
    = 
    0
    $.
  It follows that we can write
  \begin{equation*}
    y^t_n x^*_n
    \;=\;
    - (x^*_1 y^t_1 + \dots + x^*_{n-1} y^t_{n-1}) \;.
  \end{equation*}
  Rearranging, and letting $\alpha_i := (x^*_i/x^*_n) > 0$ for
  all $i = 1, \dots, n-1$
  (where positivity of each $\alpha_i$ follows
  from the fact that $x^*$ is interior),
  then completes the proof. 
\end{proof}

\subsection{Sufficient Energy Growth Implies Primal Iterates on Boundary}

In this section, we give the proof of
Theorem~\ref{thm:gda-rps-midstepsize}, which
we restate here:

\gdrpsmideta*

\begin{proof}
  Let $\bar P \subseteq \R^n$ be the set defined by:
  \begin{equation}
    \bar P 
    \;=\;
    \Big\{
    y \in \R^n : 
    y_i - \frac{1}{n}\big(\langle y, \1\rangle\big)
    + \frac{1}{n} \ge 0
    \;\text{for all $i \in [n]$}\Big\} \;.
    \label{eq:Pbar}
  \end{equation}
  By the closed-form expression for the primal iterates of 
  Gradient Descent from Proposition~\ref{prop:gd-closed}, 
  observe by definition of $\bar P$
  that $\supp(x^t) = [n]$ if and only if $y^t \in \bar P$. 

  Now recall by  Lemma~\ref{lem:duallowdim} that 
  there exist positive constants $\alpha_1, \dots, \alpha_{n-1} > 0$
  such that $y^t_n = - \Big(\sum_{i=1}^{n-1} \alpha_i y^t_i\Big)$
  for all times $t$. In other words, under~\eqref{eq:gd-pd},
  the dual iterates $\{y^t\}$ all lie on an ($n-1$)-dimensional subspace
  $U$ of $\R^n$, where 
  \begin{equation}
    U
    \;=\;
    \Big\{ 
      y \in \R^n : 
      y_n = -\big(\sum\nolimits_{i=1}^{n-1} \alpha_i y_i\big)
    \Big\} 
    \label{eq:U}
  \end{equation}
  Now by slight abuse of notation, for a vector $y \in \R^n$,
  let $[y]_n \in \R^{n-1}$ denote its first $(n-1)$ coordinates.
  Then we define the subset $\bar L \subset \R^{n-1}$ 
  as the intersection of $\bar P$ and the subspace $U$:
  \begin{equation}
    \bar L
    \;=\;
    \Big\{
      z \in \R^{n-1}: 
      z = [y]_n, \;\text{for $y \in \bar P$}
    \Big\}
    \;.
    \label{eq:Lbar}
  \end{equation}
  For any $z \in \R^{n-1}$, let $y \in \R^n$  be the vector 
  $\Gamma(z) := 
  \big(z_1,  \dots, z_{n-1}, -(\sum_{i=1}^{n-1} \alpha_i z_i)\big)
  \in \R^{n}$. 
  It follows that $z \in \bar L \iff \Gamma(z) \in \bar P$. 
  Moreover, by construction $\bar L$ is compact.

  Now for $z \in \R^{n-1}$ (again by slight abuse of notation)
  let $H(z)$ be the function $\phistar(\Gamma(z))$
  (e.g., using the variable substitution
  $y_n = -(\sum_{i=1}^{n-1} \alpha_i z_i)$ 
  in the energy function $\phistar(y)$). 
  It follows that $H$ is continuous over $\R^n$,
  and by the compactness of $\bar L$, 
  it follows that $H$ has a maximum value
  $D = \max_{z \in \bar L} H(z)$ over $\bar L$. 
  Observe that $D$ is a constant depending only
  on $\{\alpha_i\}$, which depend only 
  on the matrix $A$.

  For the dual iterates $\{y^t\}$
  of~\eqref{eq:gd-pd}, 
  let $z^t$ be the corresponding iterates
  $z^t = [y^t]_n$. 
  Recall by 
  Proposition~\ref{prop:skew-gradient-descent}
  that $\phistar(y^{t+1}) - \phistar(y^t) \ge 0$
  for all $t$, which also means that 
  $H(z^{t+1}) - H(z^t) \ge 0$ for all $t$.
  Then by the arguments above, suppose 
  at some time $t' > 0$ that $\phistar(y^{t'}) = H(z^{t'}) > D$. 
  It follows that $z^{t'}\notin \bar L$,
  which implies $y^{t'} \notin \bar P$
  and thus $\supp(x^{t'}) \neq [n]$
  (meaning $x^{t'}$ is on the boundary of $\Delta_n$). 
  As $H$ is non-decreasing over time, 
  for every subsequent iterate $z^t$ for $t \ge t'$,
  we must also have $H(z^t) > D$.
  By the same reasoning, $x^t$ is on the boundary of $\Delta_n$,
  which yields the statement of the theorem.
\end{proof}

\subsection{Regret Bound for Time-Vanishing Stepsize}

In this section, we give the proof of
Lemma~\ref{lem:gd-smallsteps}, restated below: 

\gdsmallsteps*

\begin{proof}
  Recall Proposition~\ref{prop:gd-closed} 
  and expression~\eqref{eq:energy-interior} that
  under~\eqref{eq:gd-pd},
  if $x^t$ is interior (meaning $\supp(x^t) = [n])$,
  then the energy of the corresponding dual iterate $y^t$ 
  is given by:
  \begin{equation}
    \phistar(y^t) 
    \;=\; 
    \frac{\|y^t\|^2_2}{2} 
    + 
    \frac{\langle y^t, \1 \rangle}{2n}
    - 
    \frac{(\langle y^t, \1 \rangle)^2}{2n}
    - \frac{1}{2n} 
    \;,
    \label{eq:phistar-interior}
  \end{equation}
  which is 1-smooth. 
  As $x^t$ is interior for all $t \ge 0$ by assumption, 
  the energy function $\phistar(y^t)$ 
  is as given in~\eqref{eq:phistar-interior} for all $t \ge 1$.
  Then by smoothness, we have for all $t \ge 1$ that 
   \begin{align*}
    \phistar(y^{t+1})-\phistar(y^t) &\leq \langle \nabla \Psi(y^t), y^{t+1}-y^t\rangle + \frac{1}{2}\|y^{t+1}-y^t\|^2_2\\
    &= \langle \nabla \phistar(y^t), \eta A\nabla\phistar(y^t)\rangle
    + \frac{\eta^2}{2}\|A\nabla\phistar(y^t)\|^2_2\\
    &= \frac{\eta^2}{2}\|A\nabla\phistar(y^t)\|^2_2\\
    &\leq  \frac{\eta^2 \cdot n a_{\max}^2}{2}\;,
  \end{align*}
  where the second equality follows by skew-symmetry of $A$,
  and the final inequality follows by the fact that
  $\nabla \phistar(y^t) = x^t$ is a probability distribution.
  By definition, $\phistar(y^0) = \phistar(0) = -1/2n$, 
  and it  follows that
  \begin{equation*}
    \phistar(y^{T+1}) 
    \;=\;
    \sum_{t=0}^{T} \phistar(y^{t+1}) - \phistar(y^t)
    \;\le\;
    T \cdot
     \frac{\eta^2 \cdot n a_{\max}^2}{2}
     \;=\;
      \frac{n a_{\max}^2}{2} \;,
  \end{equation*}
  where the last inequality follows from the setting of 
  $\eta = \frac{1}{\sqrt{T}}$.
  This proves the first claim of the lemma. 

  On the other hand, by Proposition~\ref{prop:regret-energy-ftrl}, 
  we find (recalling that $\max_{x \in \Delta_n} (\|x\|^2_2) / 2 = 1$) that
  \begin{equation*}
    \reg(T) 
    \;\le\;
    \frac{2\cdot \phistar(y^{T+1})}{\eta} + \frac{2}{\eta}
    \;\le\;
    \sqrt{T}\cdot
    \Big(\frac{n a_{\max}^2}{2} + 1\Big) \;,
  \end{equation*}
  which proves the second statement of the lemma. 
\end{proof}

\section{Experimental Results}
\label{app:simulations}

Throughout the plots in this paper, we use (\scalebox{0.6}{$\blackdiam$}) 
to denote the \emph{initial condition} of the dynamics.

\subsection{Behavior of Primal Iterates}

\paragraph{Comparing the behavior of GD in three 
and four-dimensional RPS.}
In Figure \ref{fig:RPSsimplex} we show the
behavior of the primal iterates of Gradient Descent 
on unweighted $n=3$ RPS, analogous to plots 
for unweighted $n=4$ RPS from Figure \ref{fig:gd-eta-compare}. 
Comparing the $n=3$ and $n=4$ experiments, we observe that 
the small ($\eta=1/\sqrt{T}$) and large ($\eta=10$) stepsize
regimes result in qualitatively similar behavior: 
the GD iterates stay interior in the small stepsize regime, 
and they converge to the boundary in one step in 
the large stepsize regime. 

\begin{figure}[htb!]
    \centering
\includegraphics[width=.75\textwidth]{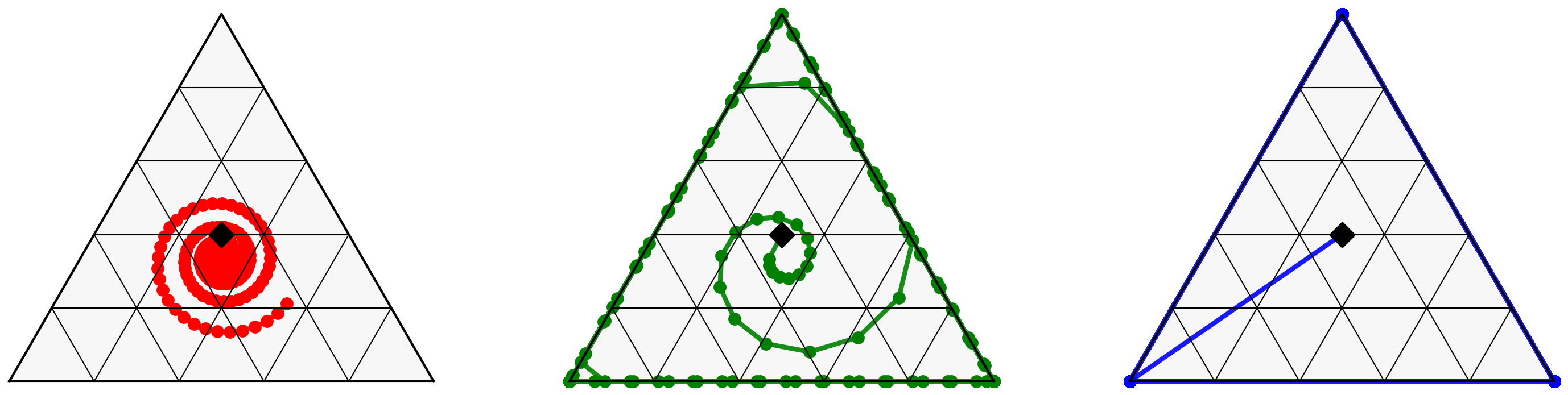}
    \caption{%
    \small 
    Primal iterates of GD in $n=3$ unweighted RPS
    initialized at $x^0 = [0.3, 0.4, 0.3]$ and run for $T=100$ steps.
    From left to right, we show the iterates using stepsizes 
    $\eta = \{1/\sqrt{T}, 0.3, 10\}$.}
    \label{fig:RPSsimplex}
\end{figure}

\begin{figure}[htb!]
\centering
\subfigure[FP]{\label{fig:FPtetra}
    \includegraphics[width=0.25\textwidth]{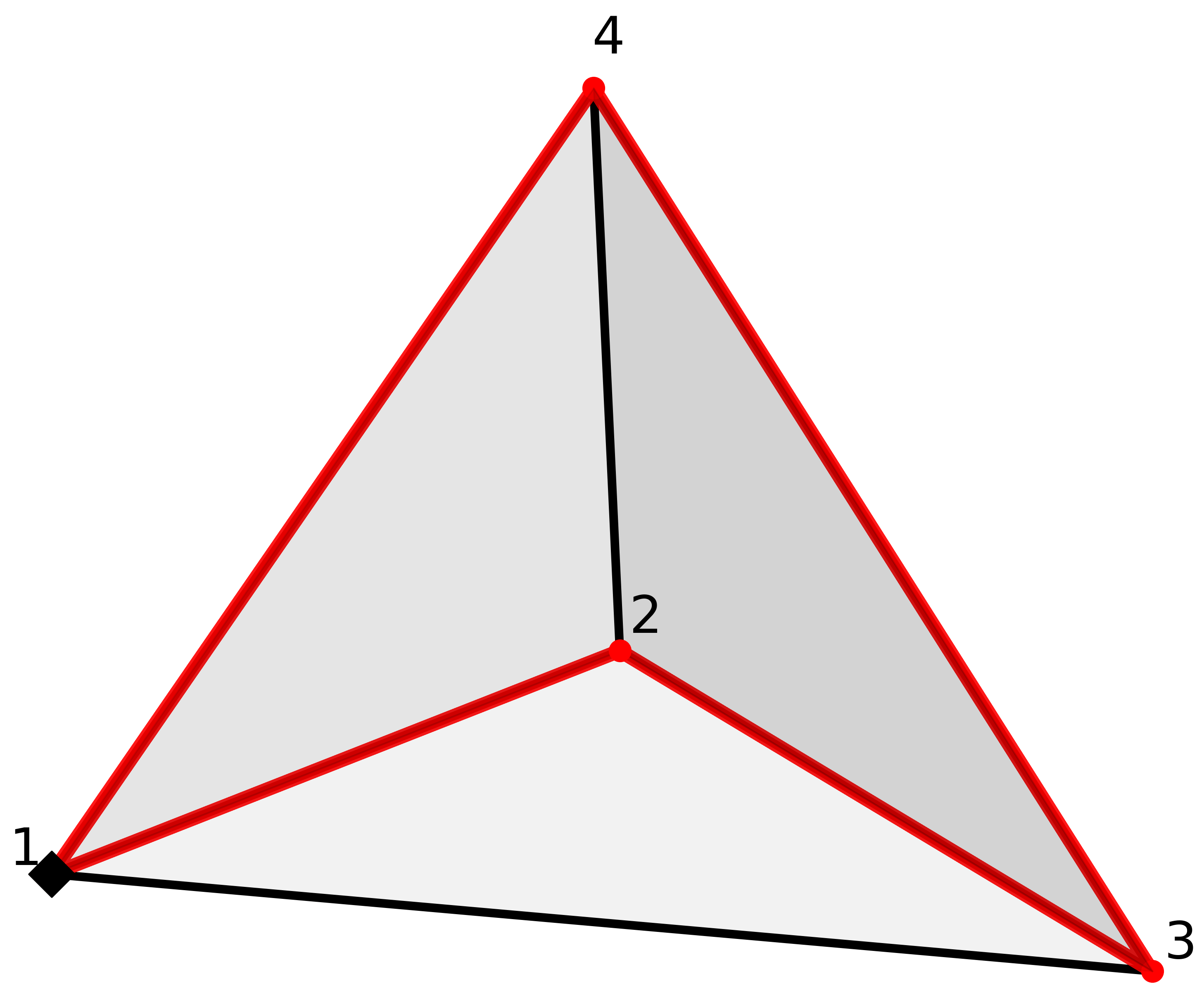}}
\subfigure[GD ($\eta=0.3$)]{\label{fig:GDMed}
    \includegraphics[width=0.25\textwidth]{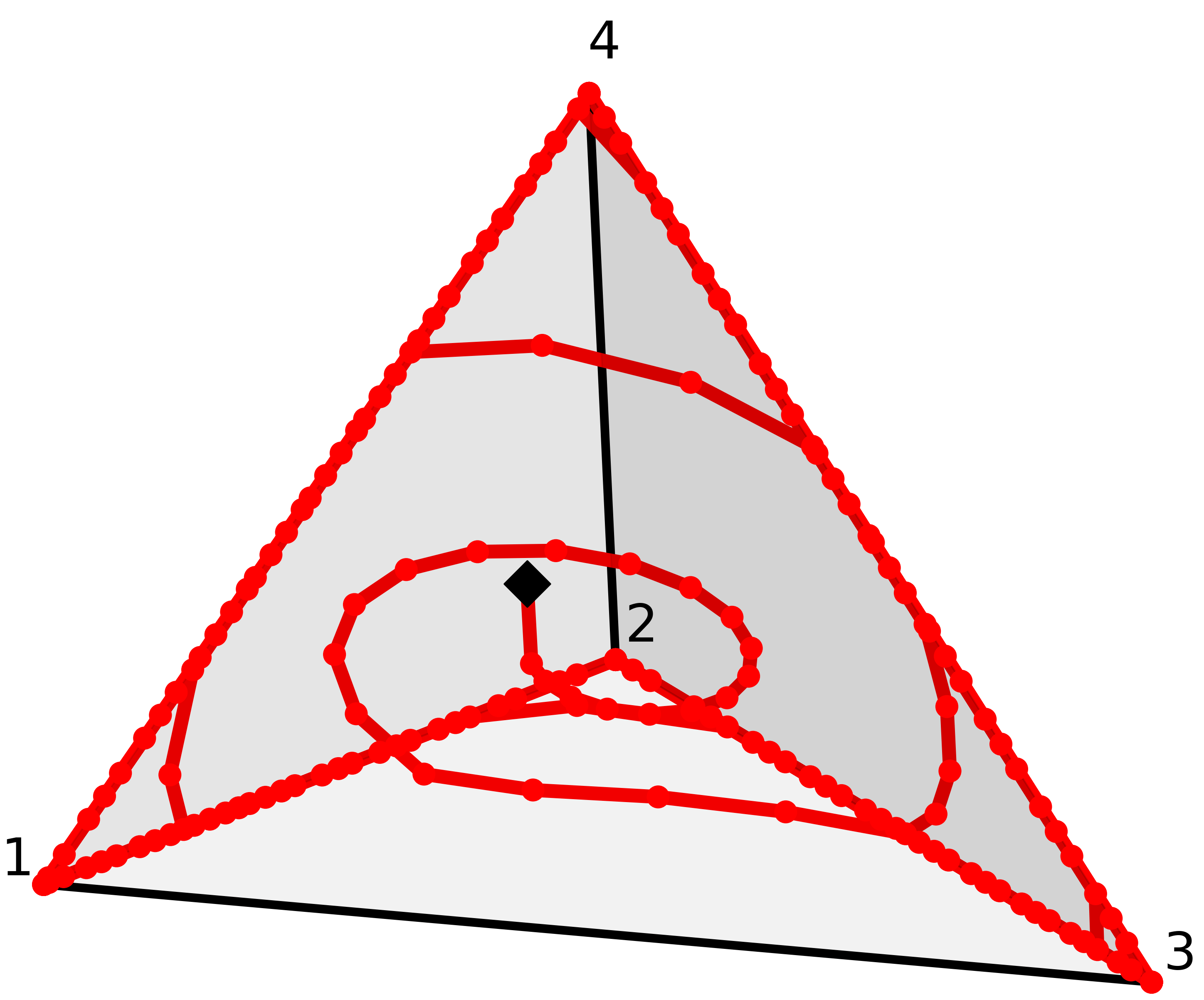}}
\subfigure[GD ($\eta=10$)]{\label{fig:GDHigh}
    \includegraphics[width=0.25\textwidth]{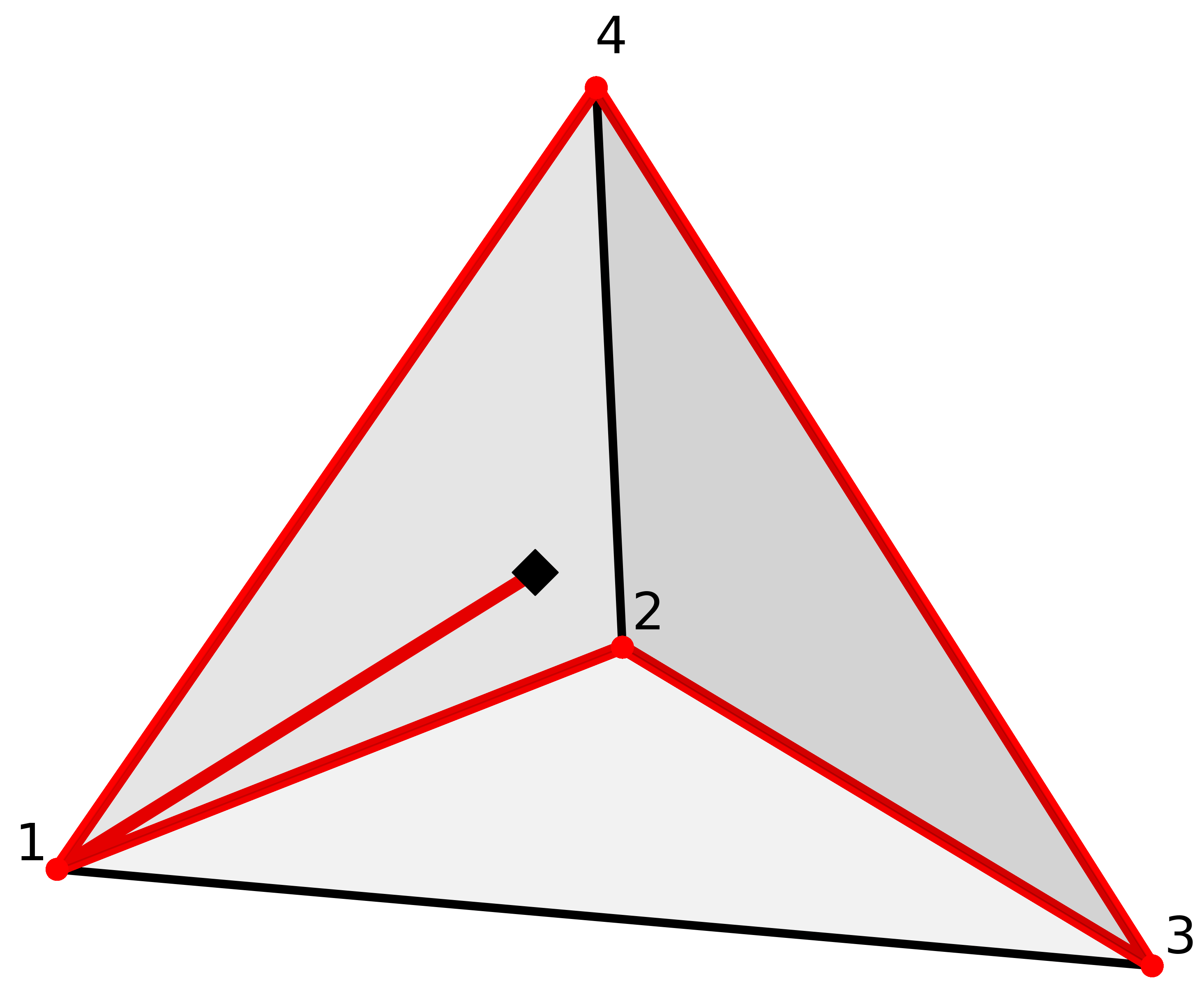}}
\caption{%
    \small
    Primal iterates of FP initialized at $x^0 = [1,0,0,0]$ 
    and GD initialized at $x^0 = [0.35, 0.05, 0.21, 0.39]$,
    run for $T=200$ steps.}
    \label{fig:RPS4primal}
\end{figure}

\paragraph{Comparing the behavior of FP and GD in four-dimensional RPS.}
In Figure \ref{fig:RPS4primal}, we plot the primal iterates
of Fictitious Play and Gradient Descent in $n=4$ unweighted RPS. 
The left plot shows Fictitious Play initialized at a vertex,
and its subsequent cycling between the four vertices. 
The right plot shows Gradient Descent initialized from an interior
distribution, with large stepsize $\eta = 10$. 
As expected from Lemma~\ref{lem:gd-large-initial}, 
the first step of the dynamic goes to vertex $e_1$,
and the iterates subsequently cycle similarly to the
FP dynamic thereafter. In the middle plot, using 
a moderate constant setting of $\eta = 0.3$, the iterates of 
Gradient Descent eventually converge and remain on the boundary, 
but in (an initially) less regular manner compared to
when $\eta = 10$.

\subsection{Empirical Regret}
To corroborate our regret bounds from
Sections \ref{sec:fp} and \ref{sec:gd}, in this section
we empirically examine the regret of Fictitious Play 
and Gradient Descent on RPS matrices. 

\paragraph{Fictitious Play.} 
Figure \ref{fig:FPRegret} shows the total regret of FP in
$n=3$ and $4$ RPS scaling like $O(\sqrt{T})$, corroborating 
Theorem~\ref{thm:fp-rps}.
In Figure~\ref{fig:tournament}, we also corroborate the
result of Theorem~\ref{thm:tournament-regret}, 
showing that when using
the `tournament' tiebreaking rule of 
Definition~\ref{def:tournament-tiebreak}, 
FP achieves constant regret.

\paragraph{Gradient Descent.} 
In Figure~\ref{fig:GDRegret}, we plot the regret obtained by GD for stepsizes 
$\eta = \{\frac{1}{\sqrt{T}}, 0.3, 10\}$ and run for $T=1000$ steps. 
Figure~\ref{fig:high} corroborates the $O(\sqrt{T})$ bound of 
Theorem~\ref{thm:gd-large-stepsize}. 
Finally, in Figure~\ref{fig:decreasing}, we compare the regret 
of Gradient Descent with \textit{time-decreasing} stepsizes 
(specifically, $\eta_t = 1/\sqrt{t}$ for $t = 1,\dots,T$) 
to Gradient Descent with large, constant stepsize ($\eta=10$). 
Surprisingly, in both Figures~\ref{fig:GDRegret} and~\ref{fig:decreasing}, 
we observe that the large stepsize setting obtains lower empirical regret 
than the time-horizon dependent/decreasing stepsize setting.
Investigating this  behavior more thoroughly is an 
interesting direction for future work.

\begin{figure}[htb!]
\centering
\subfigure[$n=3$ RPS]{\label{fig:FP3}
    \includegraphics[width=0.4\textwidth]{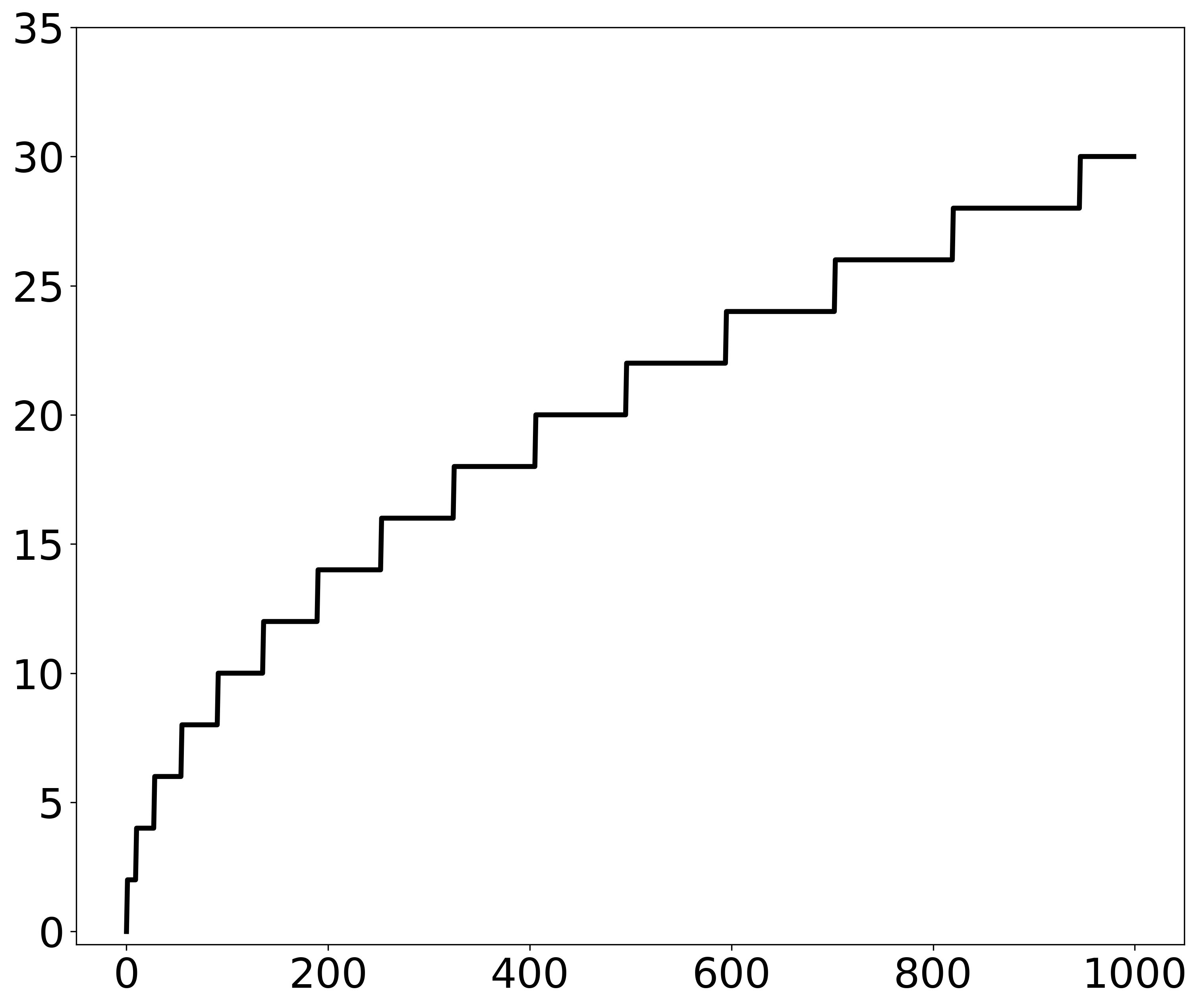}}
\subfigure[$n=4$ RPS]{\label{fig:FP4}
    \includegraphics[width=0.4\textwidth]{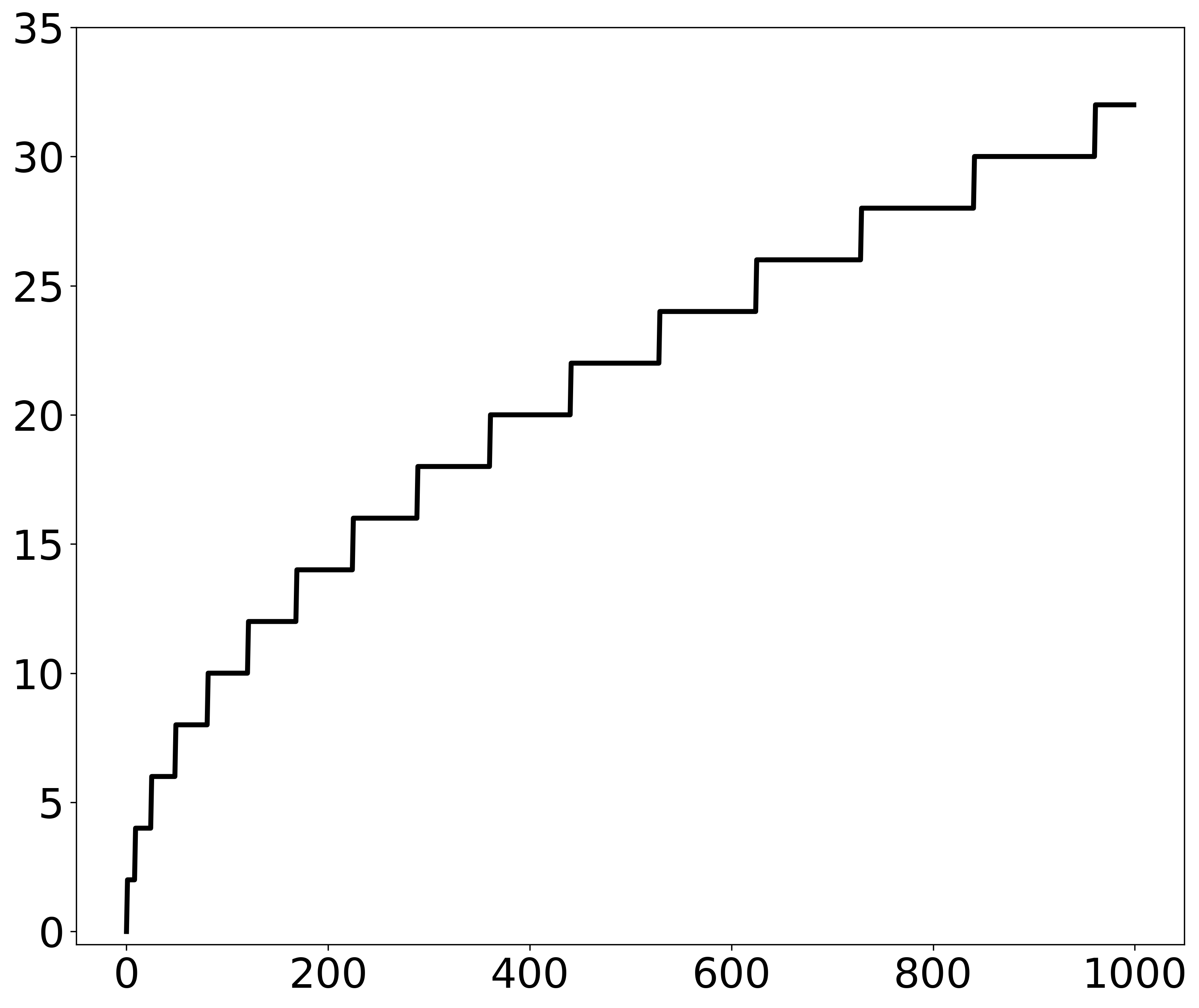}}
\caption{
    \small
    Total regret vs iteration for FP initialized at 
    $x^0 = [1,0,0,0]$ in $n=3$ and $4$ RPS, run for $T=1000$ steps.
    }
    \label{fig:FPRegret}
\end{figure}

\vspace*{1em}
\begin{figure}[htb!]
\centering
\subfigure[Lexicographic tiebreaking]{\label{fig:fplexi}
    \includegraphics[width=0.4\textwidth]{figures/RegFP3.png}}
\subfigure[Tournament tiebreaking]{\label{fig:fptournament}
    \includegraphics[width=0.4\textwidth]{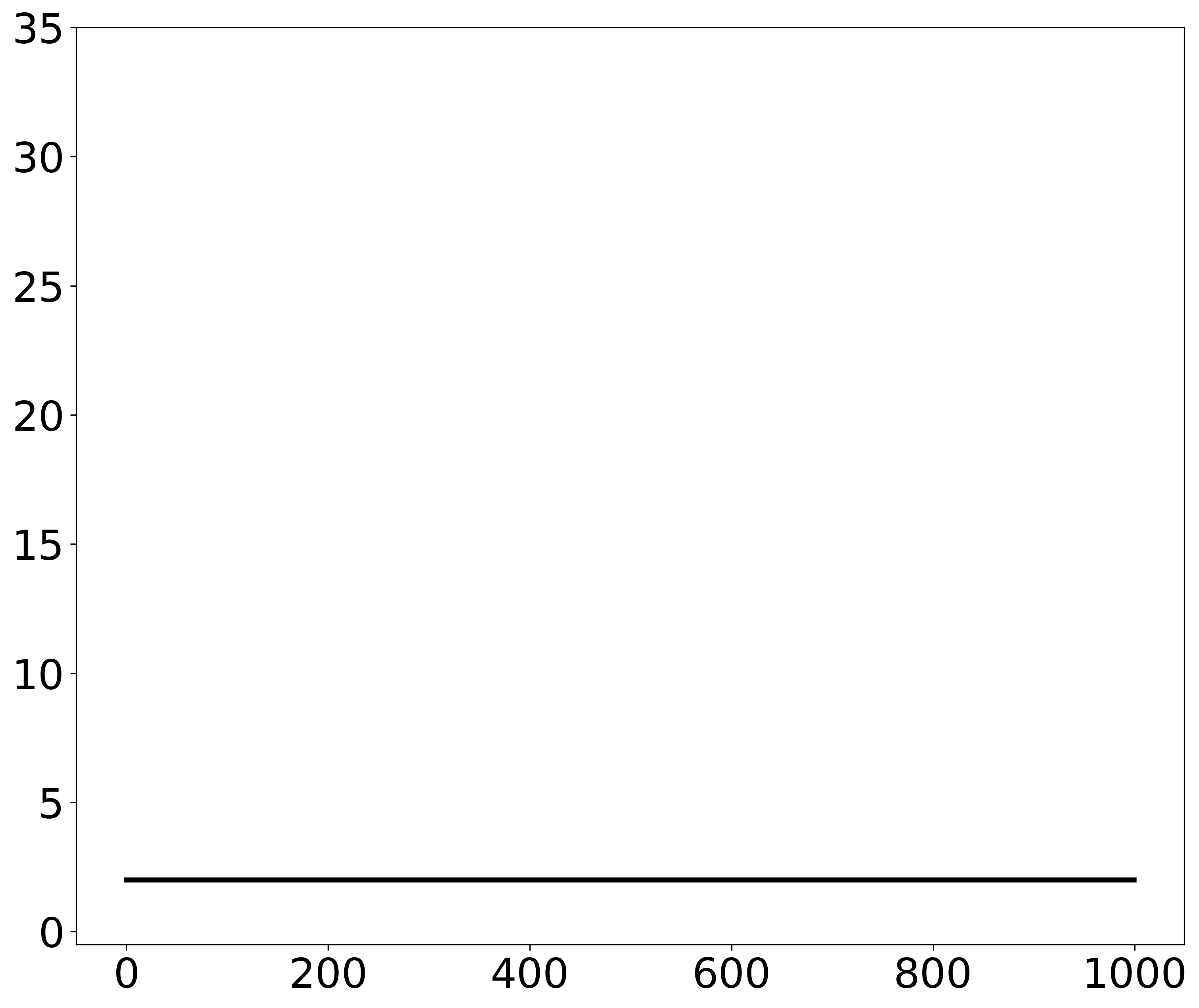}}
\caption{
    Total regret vs iteration for FP with different
    tiebreaking rules initialized at $x^0 = [1,0,0,0]$ 
    in $n=3$ RPS, run for $T=1000$ steps.}
    \label{fig:tournament}
\end{figure}

\begin{figure}[htb!]
\centering
\subfigure[$\eta=1/\sqrt{T}$]{\label{fig:low}
    \includegraphics[width=0.32\textwidth]{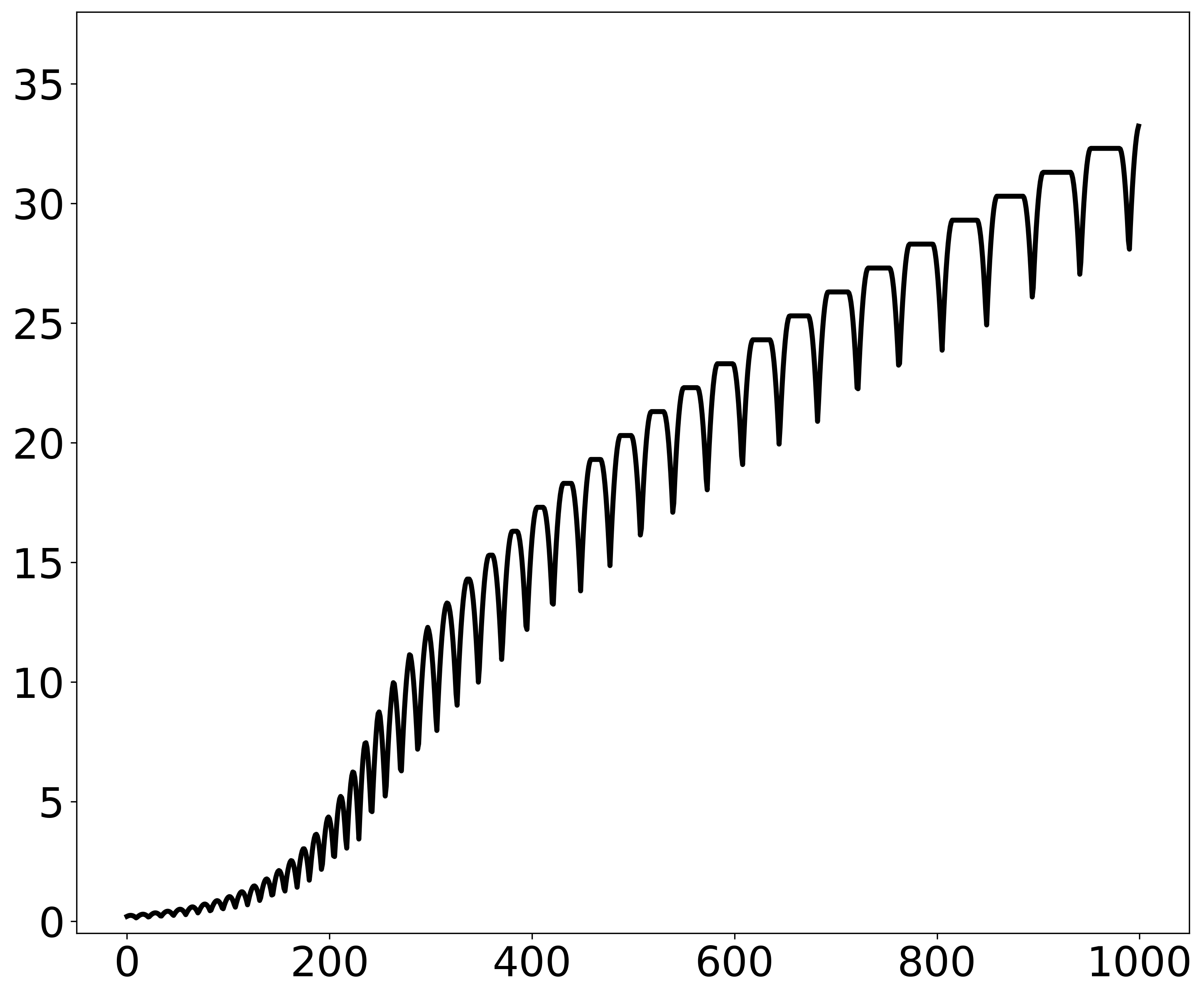}}
\subfigure[$\eta=0.3$]{\label{fig:mid}
    \includegraphics[width=0.32\textwidth]{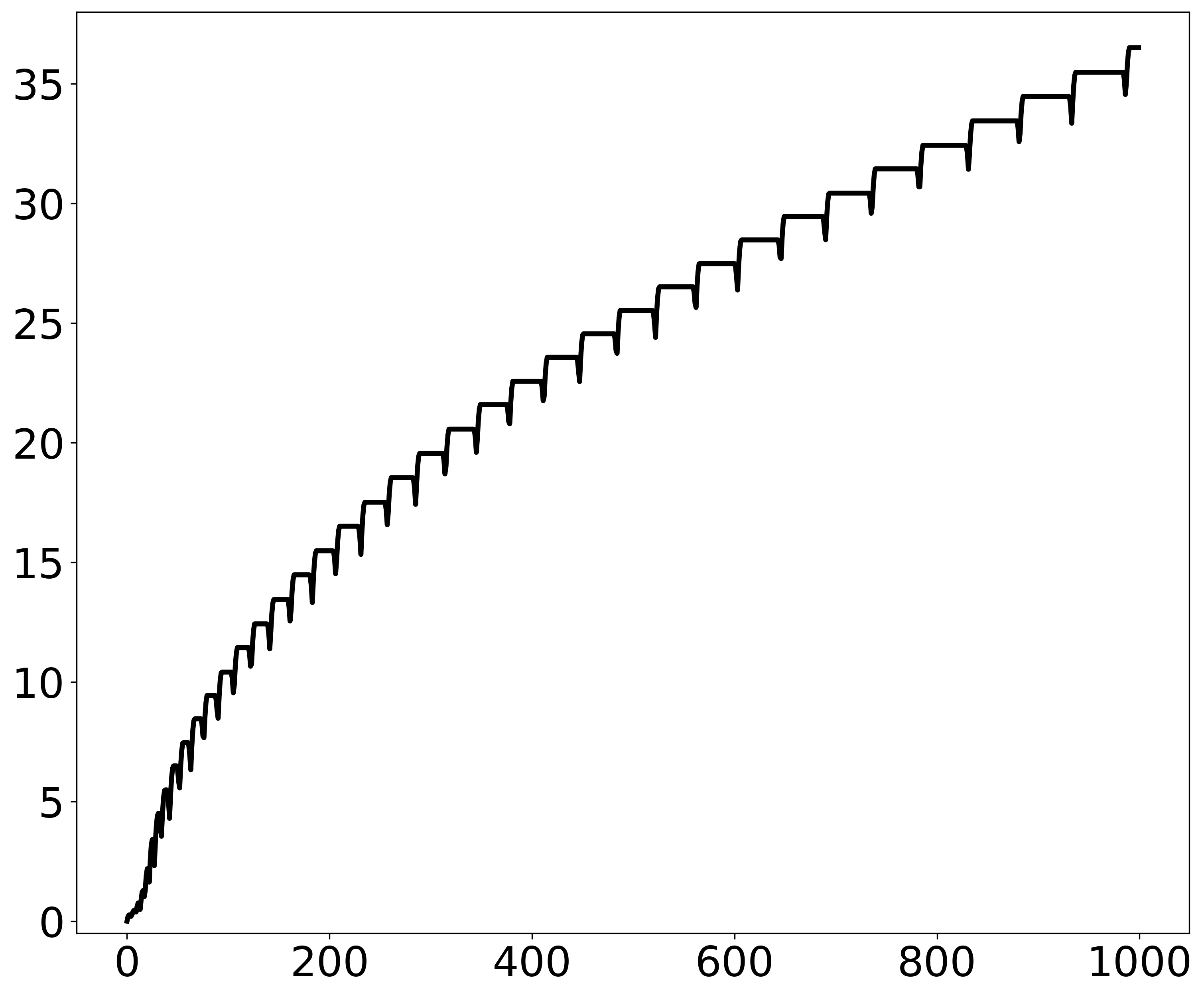}}
\subfigure[$\eta=10$]{\label{fig:high}
    \includegraphics[width=0.32\textwidth]{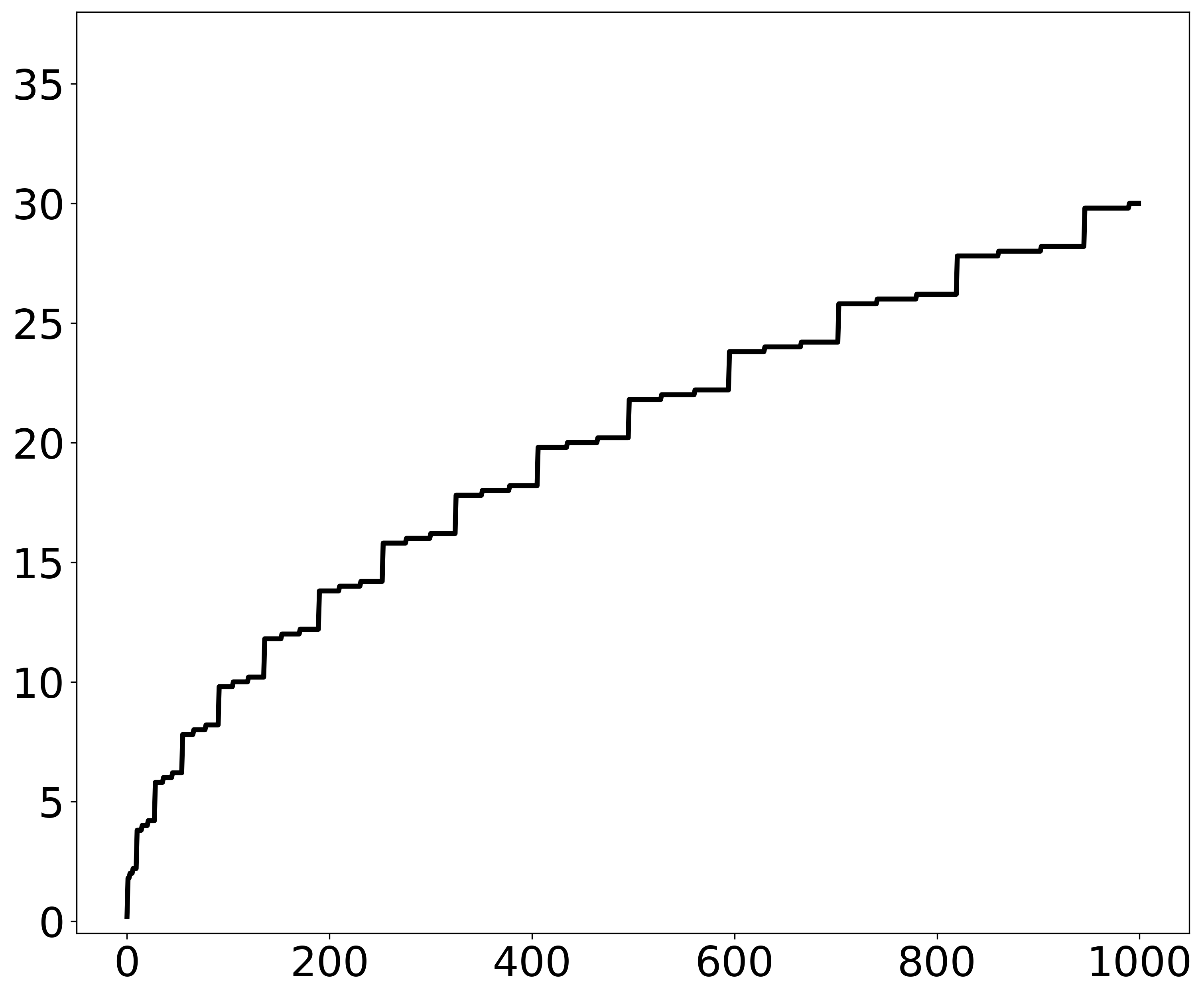}}
\caption{%
    \small
    Total regret vs iteration for Gradient Descent initialized at
    $x^0=[0.3,0.4,0.3]$ in $n=3$ RPS with stepsizes
    $\eta = \{1/\sqrt{T}, 0.3, 10\}$, run for $T=1000$ steps.}
    \label{fig:GDRegret}
    \vspace*{3em}
\end{figure}

\begin{figure}[htb!]
\centering
\subfigure[$\eta=1/\sqrt{t}$]{\label{fig:GDReg5000Dec}
    \includegraphics[width=0.42\textwidth]{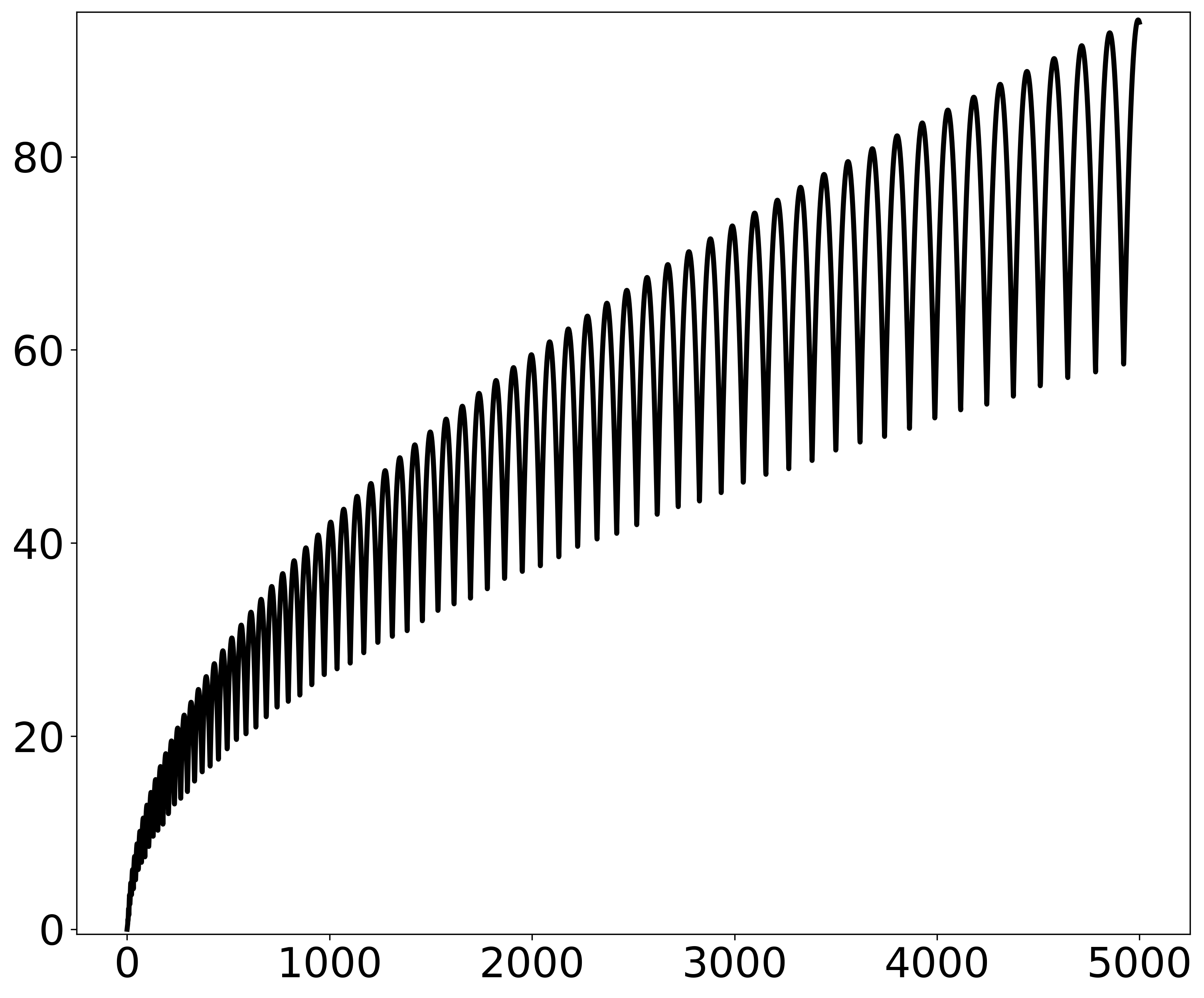}}
\subfigure[$\eta=10$]{\label{fig:GDReg5000}
    \includegraphics[width=0.42\textwidth]{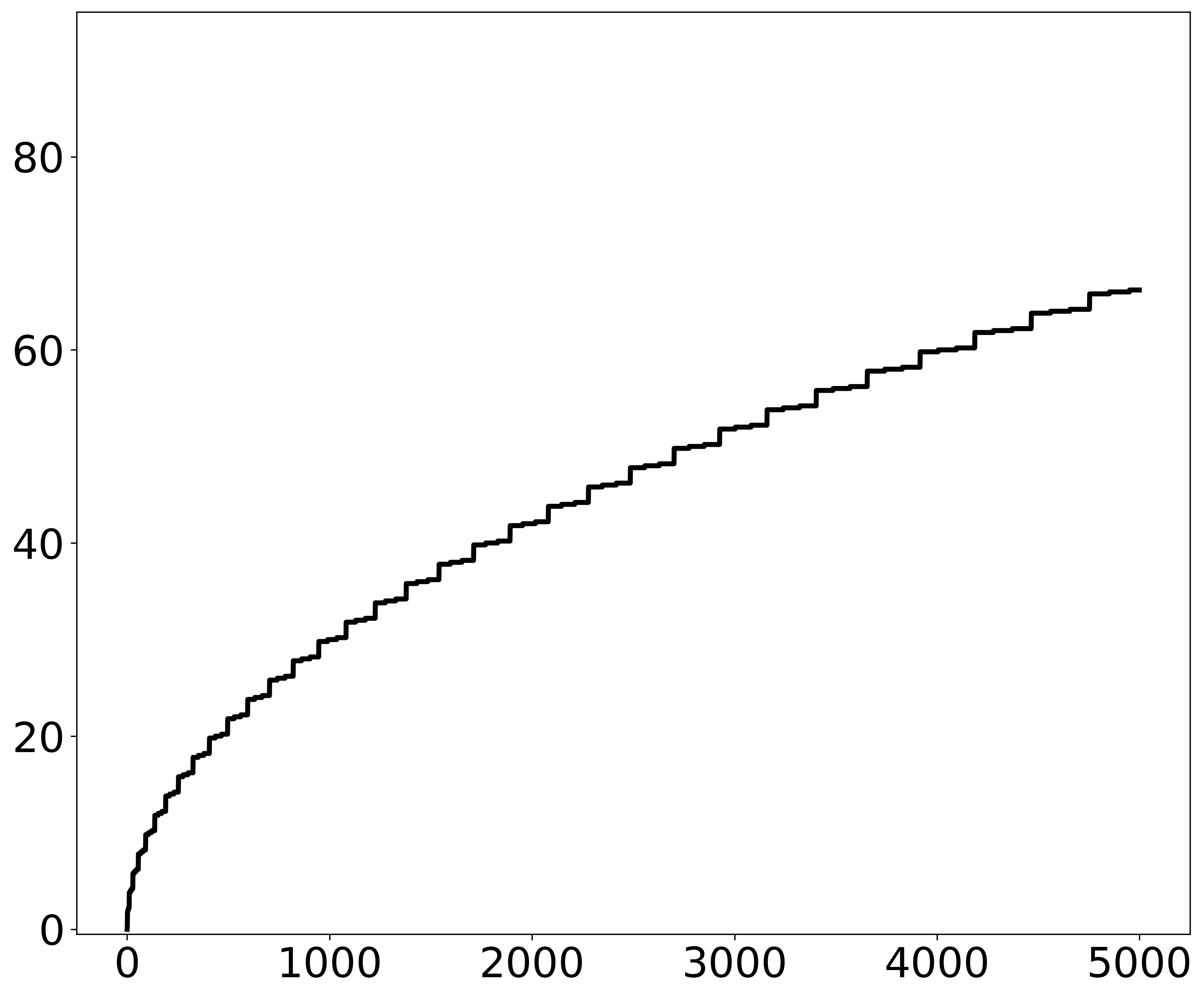}}
\caption{%
    \small
    Total regret vs iteration for Gradient Descent 
    initialized at $x^0=[0.3,0.4,0.3]$ in $n=3$ RPS with stepsizes 
    $\eta = \{1/\sqrt{t}, 10\}$, run for $T=5000$ steps.
    }
    \label{fig:decreasing}
\end{figure}

\end{document}